\documentclass{article}

\usepackage{arxiv}
\usepackage{amssymb,amsmath,amsthm}
\usepackage{mathtools}
\usepackage{upgreek}
\newcommand{\defeq}{\vcentcolon=}
\usepackage[numbers, compress]{natbib}
\usepackage{graphicx}
\usepackage{booktabs}

\usepackage{bm}
\usepackage[utf8]{inputenc} 
\usepackage[T1]{fontenc}    
\usepackage{xr}

\usepackage{bm}
\usepackage{tabularx}
\usepackage{makecell}
\usepackage[utf8]{inputenc} 
\usepackage[T1]{fontenc}    
\usepackage{hyperref}       
\usepackage{url}            
\usepackage{booktabs}       
\usepackage{amsfonts}       
\usepackage{nicefrac}       
\usepackage{microtype}      
\usepackage{xcolor}         
\usepackage{multirow}
\usepackage{upgreek}
\newtheorem{theorem}{Theorem}
\newtheorem{definition}{Definition}
\newtheorem{corollary}{Corollary}

\usepackage{cleveref}
\usepackage{tablefootnote}

\title{Informing Geometric Deep Learning with Electronic Interactions to Accelerate Quantum Chemistry}


\author{%
  Zhuoran Qiao \\
  California Institute of Technology\\
  \texttt{zqiao@caltech.edu} \\
  \And
  Anders S. Christensen \\
  Entos, Inc.\\ 
  \texttt{anders@entos.ai} \\
  \And
  Matthew Welborn \\
  Entos, Inc. \\ 
  \texttt{matt@entos.ai} \\
  \And
  Frederick R. Manby \\
  Entos, Inc. \\ 
  \texttt{fred@entos.ai} \\
  \And
  Anima Anandkumar \\
  California Institute of Technology\\
  NVIDIA\\ 
  \texttt{anima@caltech.edu} \\
  \And
  Thomas F. Miller III \\
  California Institute of Technology\\
  Entos, Inc. \\ 
  \texttt{tfm@caltech.edu} \\
}


\begin{document}

\maketitle

\begin{abstract}
Predicting electronic energies, densities, and related chemical properties can facilitate the discovery 
of novel catalysts, medicines, and battery materials. 
By developing a physics-inspired equivariant neural network, we introduce a method to learn molecular representations based on the electronic interactions among atomic orbitals.
Our method, OrbNet-Equi, leverages efficient tight-binding simulations and learned mappings to recover high fidelity quantum chemical properties.
OrbNet-Equi models a wide spectrum of target properties with an accuracy consistently better than standard machine learning methods and a speed several orders of magnitude greater than density functional theory. 
Despite only using training samples collected from readily available small-molecule libraries, OrbNet-Equi outperforms traditional methods on comprehensive downstream benchmarks that encompass diverse main-group chemical processes. Our method also describes interactions in challenging charge-transfer complexes and open-shell systems. 
We anticipate that the strategy presented here will help to expand opportunities for studies in chemistry and materials science, where the acquisition of experimental or reference training data is costly.
\end{abstract}

Discovering new molecules and materials is central to tackling contemporary challenges in energy storage and drug discovery~\cite{ling2022review,vamathevan2019applications}. As the experimentally uninvestigated chemical space for these applications is immense, large-scale computational design and screening for new molecule candidates has the potential to vastly reduce the burden of laborious experiments and to accelerate discovery~\cite{baik2002computing,shoichet2004virtual,cordova2020data}. A crucial task is to model the quantum chemical properties of molecules by solving the many-body Schrödinger equation, which is commonly addressed by \textit{ab initio} electronic structure methods~\cite{SzaboNesbet,kohn1965self} such as density functional theory (DFT) (Figure~\ref{fig:features}a). While very successful, \textit{ab initio} methods are laden with punitive computational requirements that makes it difficult to achieve a throughput on a scale of the unexplored chemical space.

In contrast, machine learning (ML) approaches are highly flexible as function approximators, and thus are promising for modelling molecular properties at a drastically reduced computational cost. A large class of ML-based molecular property predictors  includes  methods that use atomic-coordinate-based input features which closely resemble molecular mechanics (MM) descriptors~\cite{mackerell2004empirical,Behler2007,Behler2016,christensen2020fchl,zhang2018deep,schutt2017schnet,unke2019physnet,DeepMoleNet,DimeNet,spherenet,schutt2021equivariant}; these methods will be referred to as Atomistic ML methods in the current work (Figure~\ref{fig:features}b). Atomistic ML methods have been employed to solve challenging problems in molecular sciences such as RNA structure prediction~\cite{townshend2021geometric} and anomalous phase transitions~\cite{cheng2020evidence}. However, there remains a key discrepancy between Atomistic ML and \textit{ab initio} approaches regarding the modelling of quantum chemical properties, as Atomistic ML approaches typically neglect the electronic degrees of freedom which are central for the description of important phenomena such as electronic excitations, charge transfer, and long-range interactions.
Moreover, recent work shows that Atomistic ML can struggle with transferability
on downstream tasks where the molecules may chemically deviate from the training samples~\cite{Hutch,rosenberger2021modeling} as is expected to be common for under-explored chemical spaces. 

Recent efforts to embody quantum mechanics (QM) into molecular representations based on electronic structure theory have made breakthroughs in improving both the chemical and electronic transferability of ML-based molecular modelling~\cite{mobml2,deephf,Li2018,orbnet1,li2021kohn,nagai2020completing,kirkpatrick2021pushing}.
Leveraging a physical feature space extracted from QM simulations, such QM-informed ML methods
have attained data efficiency that significantly surpass  Atomistic ML methods, especially when extrapolated to systems with length scales or chemical compositions unseen during training. 
Nevertheless, QM-informed ML methods still fall short in terms of the flexibility of modelling diverse molecular properties unlike their atomistic counterparts, as  they are typically implemented for a limited set of learning targets such as the electronic energy or the exchange-correlation potential. A key bottleneck hampering the broader applicability of QM-informed approaches is the presence of unique many-body symmetries necessitated by an explicit treatment on electron-electron interactions. Heuristic schemes have been used to enforce invariance~\cite{mobml1,neuralxc,deephf,orbnet1,low2022inclusion,karandashev2021orbital} at a potential loss of information in their input features or expressivity in their ML models. Two objectives remain elusive for QM-informed machine learning: (a) incorporate the underlying physical symmetries with maximal data efficiency and model flexibility, and (b) accurately infer downstream molecular properties for large chemical spaces, at a computational resource requirement on par with existing empirical and Atomisic ML methods.

Herein, we introduce an end-to-end ML method for QM-informed molecular representations, OrbNet-Equi, in fulfillment of these two objectives. 
OrbNet-Equi featurizes a mean-field electronic structure via the atomic orbital basis, and learns molecular representations through a machine learning model that is equivariant with respect to isometric basis transformations (Figure~\ref{fig:features}c-e). By the virtue of equivariance, OrbNet-Equi respects essential physical constraints of symmetry conservation so that the target quantum chemistry properties are learned independent of a reference frame. Underpinning OrbNet-Equi is a neural network designed with insights from recent advances in geometric deep learning~\cite{bronstein2021geometric,cohen2016group,weiler20183d,kondor2018clebsch,cormorant,thomas2018tensor,fuchs2020se}, but with key architectural innovations 
to achieve equivariance based on the tensor-space algebraic structures entailed in atomic-orbital-based molecular representations.  

We demonstrate the data efficiency of OrbNet-Equi on learning molecular properties using input features obtained from tight-binding QM simulations which are efficient and scalable to systems with thousands of atoms~\cite{bannwarth2021extended}. We find that OrbNet-Equi consistently achieves lower prediction errors than existing Atomistic ML methods and our previous QM-informed ML method~\cite{orbnet1} on diverse target properties such as electronic energies, dipole moments, electron densities, and frontier orbital energies. Specifically, our study on learning frontier orbital energies illustrates an effective strategy to improve the prediction of electronic properties by incorporating molecular-orbital-space information.

To showcase its transferability to complex real-world chemical spaces, we trained an OrbNet-Equi model on single-point energies of 236k molecules curated from readily available small-molecule libraries. The resulting model, OrbNet-Equi/SDC21, achieves a performance competitive to state-of-the-art composite DFT methods when tested on a wide variety of main-group quantum chemistry benchmarks, 
while being up to thousand-fold faster at runtime.
As a particular case study, we found that OrbNet-Equi/SDC21 substantially improved the prediction accuracy of ionization potentials relative to semi-empirical QM methods, even though no radical species was included for training. Thus, our method has the potential to accelerate simulations for challenging problems in organic synthesis~\cite{chen2020embedded}, battery design~\cite{janet2020accurate}, and molecular biology~\cite{dommer2021covidisairborne}.
Detailed data analysis pinpoints viable future directions to systematically improve its chemical space coverage, opening a plausible pathway towards a generic hybrid physics-ML paradigm for the acceleration of molecular modelling and discovery.

\section*{Results}

\label{sec:exp}

\subsection*{The OrbNet-Equi methodology}

OrbNet-Equi featurizes a molecular system through mean-field QM simulations.
Semi-empirical tight-binding models~\cite{bannwarth2021extended} are used through this study since they
can be solved rapidly for both small-molecules and extended systems, which enables deploying OrbNet-Equi to large chemical spaces.
In particular, we employ the recently reported GFN-xTB~\cite{gfn1} QM model
in which the mean-field electronic structure $\Psi_{0}$ is obtained through self-consistently solving a tight-binding model system (Figure~\ref{fig:features}c). Built upon $\Psi_{0}$, the inputs to the neural network comprises a stack of matrices $\mathbf{T}[{\Psi_{0}}]$ defined as 
single-electron operators $\hat{\mathcal{O}}[{\Psi_{0}}]$ represented in the atomic orbitals (Figure~\ref{fig:features}d),
\begin{equation}
    \big(\mathbf{T}[{\Psi_{0}}] \big)^{n, l, m; n', l', m'}_{AB} = \langle \Phi_A^{n, l, m} \lvert \hat{\mathcal{O}}[{\Psi_{0}}] \rvert  \Phi_B^{n', l', m'} \rangle 
\end{equation}
where $A$ and $B$ are both atom indices; $(n, l, m)$ and $(n', l', m')$ indicate a basis function in the set of atomic orbitals $\{ \Phi\}$ centered at each atom. Motivated by mean-field electronic energy expressions, the input atomic orbital features are selected as $\mathbf{T} = (\mathbf{F, P, H, S})$ using the Fock $\mathbf{F}$, density $\mathbf{P}$, core-Hamiltonian  $\mathbf{H}$, and overlap  $\mathbf{S}$ matrices of the tight-binding QM model 
(see Methods~\ref{si_qchem}), unless otherwise specified.

OrbNet-Equi 
learns a map $\mathcal{F}$ to approximate the target molecular property $\mathbf{y}$ of high-fidelity electronic structure simulations or experimental measurements,
\begin{equation}
\label{eq:learning}
   \min_{\mathcal{F}} \, \mathcal{L} \Big(\mathbf{y} , \mathcal{F}\big(\mathbf{T}[{\Psi_{0}}] \big) \Big) ,
\end{equation}
where 
$\mathcal{L}$ denotes a cost functional between the reference and predicted targets over training data.
The learning problem described by \eqref{eq:learning} requires a careful treatment on 
isometric coordinate transformations imposed on the molecular system,
because the coefficients of $\mathbf{T}[{\Psi_{0}}]$ are defined up to a given viewpoint (Figure \ref{fig:features}e). 
Precisely, the atomic orbitals $\{ \Phi_A^{n,l,m} \}$ 
undergo a unitary linear recombination subject to 3D rotations: 
$\mathcal{R} \cdot \lvert \Phi_A^{n,l,m} \rangle = \sum_{m'} \mathcal{D}^{l}_{m,m'}(\mathcal{R}) \lvert \Phi_A^{n,l,m'} \rangle$, 
where $\mathcal{D}^{l}(\mathcal{R})$ denotes the Wigner-D matrix of degree $l$ for a rotation operation $\mathcal{R}$.
As a consequence of the basis changing induced by $\mathcal{R}$, $\mathbf{T}[{\Psi_{0}}]$ is transformed block-wise:
\begin{equation}
\label{eq:basis_rot}
    \big( \mathcal{R} \cdot \mathbf{T}[{\Psi_{0}}] \big)^{l; l'}_{AB}
    = \mathcal{D}^{l}(\mathcal{R}) \, \big( \mathbf{T}[{\Psi_{0}}] \big)^{l; l'}_{AB} \, \mathcal{D}^{l'}(\mathcal{R})^{ \dagger}
\end{equation}
where the dagger symbol denotes an Hermitian conjugate. To account for the roto-translation symmetries, 
the neural network $\mathcal{F}$ must be made equivariant with respect to all such isometric basis rotations, that is,
\begin{equation}
\label{eq:rotsym}
    \mathcal{R} \cdot \mathcal{F}\big( \mathbf{T}[{\Psi_{0}}] \big) \equiv \mathcal{F}\big(\mathcal{R} \cdot \mathbf{T}[{\Psi_{0}}] \big) 
\end{equation}
which is fulfilled through our delicate design of the neural network in OrbNet-Equi (Figure \ref{fig:architect}). The neural network 
iteratively updates a set of representations $\mathbf{h}^t$ defined at each atom through its neural network modules, and reads out predictions using a pooling layer located at the end of the network.
During its forward pass, diagonal blocks of the inputs $\mathbf{T}[{\Psi_{0}}]$ are first transformed into components that are isomorphic to orbital-angular-momentum eigenstates, which are then cast to the initial representations $\mathbf{h}^{t=0}$. 
Each subsequent module 
exploits off-diagonal blocks of $\mathbf{T}[{\Psi_{0}}]$ to propagate non-local information among atomic orbitals and refine the representations $\mathbf{h}^{t}$, which resembles a process of applying time-evolution operators on quantum states. 
We provide a technical introduction to the neural network architecture in Methods~\ref{sec:nn}.
We incorporate 
other constraints on the learning task such as size-consistency solely through programming the pooling layer (Methods~\ref{si_pooling}), 
therefore achieving task-agnostic modelling for diverse chemical properties.
Additional details and theoretical results are provided in Appendix~\ref{si_implementation}-\ref{si_theory}.


\subsection*{Performance on benchmark datasets} 
\label{sec:qm9}

We begin with benchmarking OrbNet-Equi on the QM9 dataset~\cite{qm9} which has been widely adopted for assessing ML-based molecular property prediction methods. QM9 contains 134k small organic molecules 
at optimized geometries, with target properties computed by DFT. 
Following previous works~\cite{schutt2017schnet,unke2019physnet,cormorant,klicpera2020fast,schutt2021equivariant,spherenet}, 
we take 110000 random samples as the training set and 10831 samples as the test set.
We present results for both the ``direct-learning'' training strategy which corresponds to training the model directly on the target property, and, whenever applicable, the ``delta-learning'' strategy~\cite{ramakrishnan2015big} which corresponds to training on the residual between output of the tight-binding QM model and the target level of theory.

We first trained OrbNet-Equi on two representative targets, the total electronic energy $U_0$ and the molecular dipole moment vector $\vec{\mu}$ (Figure \ref{fig:qm9}a-b), for which a plethora of task-specific ML models has previously been developed \cite{Huang2020,Faber2018,bartok2017machine,orbnet1,christensen2019operators,veit2020predicting}. The total energy $U_0$ is predicted through a sum over atom-wise energy contributions and the dipole moment $\vec{\mu}$ is predicted through a combination of atomic partial charges and dipoles (Appendix~\ref{si_pooling}).  
For $U_0$ (Figure~\ref{fig:qm9}a), the direct-learning results of OrbNet-Equi match the state-of-the-art kernel-based ML method FCHL18/GPR~\cite{Faber2018} in terms of the test mean absolute error (MAE), while being scalable to large data regimes (Figure \ref{fig:qm9}a, training data size > 20,000) where no competitive result has been reported before.
With delta-learning, OrbNet-Equi outperforms our previous QM-informed ML approach OrbNet~\cite{orbnet1} by $\sim45\%$ in the test MAE. Because OrbNet also uses the GFN-xTB QM model for featurization and the delta-learning strategy for training, this improvement underscores the  
strength of our neural network design which seamlessly integrates the underlying physical symmetries. Moreover, for dipole moments $\vec{\mu}$ (Figure \ref{fig:qm9}b),
OrbNet-Equi exhibits 
steep learning curve slopes 
regardless of the training strategy, highlighting its capability of learning rotational-covariant quantities at no sacrifice of data efficiency. 

We then targeted on the learning task of frontier molecular orbital (FMO) properties, in particular energies of the highest occupied molecular orbital (HOMO), the lowest unoccupied molecular orbital (LUMO) and the HOMO-LUMO gaps which are important in the prediction of chemical reactivity and optical properties~\cite{fukui,silva2007experimental}. Because the FMOs are inherently defined in the electron energy space and are often spatially localized, it is expected to be challenging to predict FMO properties based on molecular representations in which a notion of electronic energy levels is absent.
OrbNet-Equi overcame this obstacle by breaking the orbital filling degeneracy of its input features to encode plausible electron excitations near the FMO energy levels, that is, adding energy-weighted density matrices of `hole-excitation' $\mathbf{D}^{\beta}_\mathrm{h}$ and that of `particle-excitation' $\mathbf{D}^{\beta}_\mathrm{p}$:

\begin{align}
    (D^{\beta}_\mathrm{h})_{\mu \nu} &= \sum_{i} C^{*}_{\mu i} C_{\nu i} \cdot \exp\big(-\beta (\epsilon_\mathrm{HOMO}-\epsilon_i)\big) \cdot n_i \\
    (D^{\beta}_\mathrm{p})_{\mu \nu} &= \sum_{i} C^{*}_{\mu i} C_{\nu i} \cdot \exp\big(\beta (\epsilon_\mathrm{LUMO}-\epsilon_i)\big) \cdot (1-n_i)
\end{align}
where $\epsilon_i$ and $n_{i}$ are the orbital energy and occupation number of the $i$-th molecular orbital from tight-binding QM, and $C_{\mu i}, C_{\nu i}$ denotes the molecular orbital coefficients with $\mu$ and $\nu$ indexing the atomic orbital basis. Here the effective temperature parameters $\beta$ are chosen as $\beta = [4, 16, 64, 256]$ (atomic units), and a global-attention based pooling is used to ensure size-intensive predictions (Appendix~\ref{si_mo_feats}).  Figure~\ref{fig:qm9}c shows that the inclusion of energy-weighted density matrices ($\mathbf{D}^{\beta}_\mathrm{h}, \mathbf{D}^{\beta}_\mathrm{p}$) indeed greatly enhanced model generalization on FMO energies, as evident from the drastic test MAE reduction against the model with default ground-state features ($\mathbf{F, P, S, H}$) as well as the best result from Atomistic ML methods.
Remarkably, for models using default ground-state features (Figure~\ref{fig:qm9}c, red lines) we noticed a rank reversal behavior between direct-learning and delta-learning models as more training samples become available, mirroring similar observations from a recent Atomistic ML study~\cite{delfta}. 
The absence of this crossover when ($\mathbf{D}^{\beta}_\mathrm{h}, \mathbf{D}^{\beta}_\mathrm{p}$) are provided (Figure~\ref{fig:qm9}c, blue) suggests that the origin of such a learning slow-down is the incompleteness of spatially-degenerate descriptors, and the gap between delta-learning and direct-learning curves can be restored by breaking the energy-space degeneracy.
This analysis reaffirms the role of identifying the dominant physical degrees of freedom 
in the context of ML-based prediction of quantum chemical properties, 
and is expected to benefit the modelling of relevant electrochemical and optical properties such as redox potentials.

Furthermore, OrbNet-Equi is benchmarked on 12 targets of QM9 using the 110k full training set (Appendix~Table \ref{table:qm9}), for which we programmed its pooling layer to reflect the symmetry constraint of each target property (Appendix~\ref{si_pooling}). We observed top-ranked performance on all targets with average test MAE around two-fold lower than atomistic deep learning methods. In addition, we tested OrbNet-Equi on fitting molecular potential energy surfaces by training on multiple configurations of a molecule (Appendix~\ref{sec:md17}). Results (Appendix~Table~\ref{table:rmd17}-\ref{table:md17}) showed that OrbNet-Equi obtained energy and force prediction errors that match state-of-the-art machine learning potential methods~\cite{rmd17, nequip} on the MD17 dataset~\cite{md17,rmd17}, suggesting that our method also efficiently  
generalizes over the conformation degrees of freedom apart from being transferable across the chemical space. These extensive benchmarking studies confirm that our strategy is consistently applicable to a wide range of molecular properties.

\subsection*{Accurate modelling for electron densities} 
\label{sec:density}

We next focus on the task of predicting the electron density 
$\rho(\vec{r}): \mathbb{R}^{3}\rightarrow \mathbb{R}$ 
which plays an essential role in both the formulation 
of DFT and in the interpretation of molecular interactions. It is also more challenging than predicting the energetic properties from a machine learning perspective, due to the need of 
preserving its real-space continuity and rotational covariance. OrbNet-Equi learns to output a set of expansion coefficients $\hat{d}_{A}^{n l m}$ 
to represent the predicted electron density $\hat{\rho}(\vec{r})$ through a density fitting basis set $\{\chi\}$
(Methods~\ref{si_qcore}, Appendix~\ref{sec:dens_pooling}),
\begin{equation}
    \label{eq:des_expand}
    \hat{\rho}(\vec{r}) = \sum_A^{N_\mathrm{atom}} \sum_{l}^{l_\mathrm{max}(z_A)} \sum_{m=-l}^{l} \sum_{n}^{n_\mathrm{max}(z_A, l)} \hat{d}_{A}^{n l m} \ \chi_A^{n l m}(\vec{r}) 
\end{equation}
where $l_\mathrm{max}(z_A)$ is the maximum angular momentum in the density fitting basis set for atom type $z_A$, and $n_\mathrm{max}(z_A, l)$ denotes the cardinality of basis functions with angular momentum $l$.
We train OrbNet-Equi to learn DFT electron densities on the QM9 dataset of small organic molecules and the BfDB-SSI~\cite{BFGDBBurns2017} dataset of amino-acid side-chain dimers (Figure~\ref{fig:density}) using the direct-learning strategy. OrbNet-Equi results are substantially better than Atomistic ML baselines in terms of the average $L^1$ density error 
$\varepsilon_{\rho} = \frac{\int \lvert \rho(\vec{r})-\hat{\rho}(\vec{r})\rvert d \vec{r}}{\int \lvert \rho(\vec{r}) \rvert d \vec{r}}$ (Methods~\ref{si_summary}); specifically, OrbNet-Equi achieves an average $\varepsilon_{\rho}$ of 0.191$\pm$0.003\% on BfDB-SSI using 2000 training samples compared to 0.29\% of SA-GPR~\cite{fabrizio2019electron}, and an average $\varepsilon_{\rho}$ of 0.206$\pm$0.001\% on QM9 using 123835 training samples as compared to 0.28\%-0.36\% of DeepDFT~\cite{jorgensen2020deepdft}. 
Figure~\ref{fig:density}a confirms that OrbNet-Equi predicts densities at consistently low errors across the real-space and maintains a robust asymptotic decay behavior within low-density ($\rho(\vec{r})<10^{-4}\ a_0^{-3}$) regions that are far from the molecular system. 

To understand whether the model generalizes to cases where charge transfer is significant, as in donor-acceptor systems, we introduce a simple baseline predictor termed monomer density superposition (MDS). The MDS electron density of a dimeric system is taken as the sum of independently-computed DFT electron densities of the two monomers. OrbNet-Equi yields accurate predictions in the presence of charge redistribution induced by non-covalent effects, as identified by dimeric examples from the BfDB-SSI test set for which the MDS density (Figure~\ref{fig:density}b, x-axis) largely deviates from the DFT reference density of the dimer due to inter-molecular interactions. One representative example is a strongly interacting Glutamic acid - Lysine system (Figure~\ref{fig:density}, c-d) whose salt-bridge formation is known to be essential for the helical stabilization in protein folding~\cite{marqusee1987helix}, for which OrbNet-Equi predicts $\rho(\vec{r})$ with $\varepsilon_{\rho}= 0.211\pm0.001\%$  significantly lower than that of monomer density superposition ($\varepsilon_{\rho}=1.47\pm0.02\%$).
The accurate modelling of $\rho(\vec{r})$ offers an opportunity for constructing transferable DFT models for extended systems by learning on both energetics and densities, while at a small fraction of expense relative to solving the Kohn-Sham equations from scratch.


\subsection*{Transferability on downstream tasks}
\label{sec:downstream}



Beyond data efficiency on established datasets in train-test split settings, a crucial but highly challenging aspect is whether the model accurately infers downstream properties after being trained on data that are feasible to obtain. 
To comprehensively evaluate whether OrbNet-Equi 
can be transferred to unseen chemical spaces without any additional supervision, we have trained an OrbNet-Equi model on a dataset curated from readily available small-molecule databases (Methods~\ref{si_dataset}). The training dataset contains 236k samples with chemical space coverage for drug-like molecules and biological motifs containing chemical elements C, O, N, F, S, Cl, Br, I, P, Si, B, Na, K, Li, Ca and Mg, and thermalized geometries. The resulting OrbNet-Equi/SDC21 potential energy model is solely trained on DFT single-point energies using the delta-learning strategy. Without any fine-tuning, we directly apply OrbNet-Equi/SDC21 to downstream benchmarks that are recognized for assessing the accuracy of physics-based molecular modelling methods.

The task of ranking conformer energies of drug-like molecules is benchmarked via the Hutchison dataset of conformers of $\sim$700 molecules~\cite{Hutch}  (Figure~\ref{fig:6}a; Table~\ref{table:downstream}, row 1-2).
On this task, OrbNet-Equi/SDC21 achieves a median $R^2$ score of 0.87±0.02 and $R^2$ distributions closely matching the reference DFT theory on both neutral and charged systems. On the other hand, we notice that the median $R^2$ of OrbNet-Equi/SDC21 with respect to the reference DFT theory ($\omega$B97X-D3/def2-TZVP) is 0.96±0.01, suggesting that the current performance on this task is saturated by the accuracy of DFT and can be systematically improved by applying fine-tuning techniques on higher-fidelity labels~\cite{smith2019approaching,zheng2021artificial}. Timing results on the Hutchison dataset (Table~\ref{table:infer}) confirms that the neural network inference time of OrbNet-Equi/SDC21 is on par with the GFN-xTB QM featurizer, resulting in an overall computational speed that is $10^{2-3}$ fold faster relative to existing cost-efficient composite DFT methods~\cite{Hutch,brandenburg2018b97,grimme2021r2scan}.
To understand the model's ability to describe dihedral energetics which are crucial for virtual screening tasks,
%
we benchmark OrbNet-Equi on the prediction of intra-molecular torsion energy profiles using the TorsionNet500~\cite{rai2020torsionnet} dataset, the most diverse benchmark set available for this problem (Table~\ref{table:downstream}, row 3). Although no explicit torsion angle sampling was performed during training data generation, OrbNet-Equi/SDC21 exhibits a barrier MAE of 0.173$\pm$0.003 kcal/mol much lower than the 1 kcal/mol threshold commonly considered for chemical accuracy. On the other hand, we notice a MAE of 0.7 kcal/mol 
for the TorsionNet model~\cite{rai2020torsionnet} which was trained on $\sim$1 million torsion energy samples. As shown in Figure~\ref{fig:6}b, OrbNet-Equi/SDC21 robustly captures the torsion sectors of potential energy surface on an example challenging for both semi-empirical QM~\cite{gfn1,gfn2} and cost-efficient composite DFT~\cite{brandenburg2018b97} methods, precisely resolving both the sub-optimal energy minima location at $\sim30^{\circ}$ dihedral angle as well as the barrier energy between two local minimas within a 1 kcal/mol chemical accuracy. 
Next, the ability to characterize non-covalent interactions is assessed on the S66x10 dataset~\cite{smith2016revised} of inter-molecular dissociation curves (Table~\ref{table:s66x10}), on which OrbNet-Equi achieves an equilibrium-distance binding energy MAE of $0.35\pm0.09$ kcal/mol with respect to the reference DFT theory compared against 
$1.55\pm0.17$ kcal/mol of the GFN-xTB baseline. As shown from a Uracil-Uracil base-pair example (Figure~\ref{fig:6}c) for which high-fidelity wavefunction-based reference calculations have been reported, the binding energy curve along the inter-molecular axis predicted by OrbNet-Equi/SDC21 agrees well with both DFT and the high-level CCSD(T) results. To further understand the accuracy and smoothness of the energy surfaces and the applicability on dynamics tasks, we perform geometry optimizations on the ROT34 dataset of 12 small organic molecules and the MCONF dataset of 52 conformers of melatonin~\cite{rot34,mconf} (Figure~\ref{fig:6}d; Table~\ref{table:downstream}, row 4-5). Remarkably, OrbNet-Equi/SDC21 consistently exhibits the lowest average RMSD among all physics-based and ML-based approaches (Table~\ref{table:downstream}) including the popular cost-efficient DFT method B97-3c~\cite{brandenburg2018b97}. Further details regarding the numerical experiments and error metrics are provided in Methods~\ref{si_summary}.

Remarkably, on the G21IP dataset~\cite{curtiss1991gaussian} of adiabatic ionization potentials, we find that the OrbNet-Equi/SDC21 model achieves prediction errors substantially lower than semi-empirical QM methods (Figure~\ref{fig:6}e, Table~\ref{table:gmtkn55}) even though samples of open-shell signatures are expected to be rare from the training set (Methods~\ref{si_dataset}). Such an improvement cannot be solely attributed to structure-based corrections, since there is no or negligible geometrical changes between the neutral and ionized species for both the single-atom systems and several poly-atomic systems (e.g., IP\_66, a Phosphanide anion) in the G21IP dataset. This reveals that our method has the potential to be transferred to unseen electronic states in a zero-shot manner, which represents an early evidence that a hybrid physics-ML strategy may unravel novel chemical processes such as unknown electron-catalyzed reactions~\cite{studer2014electron}.

To comprehensively study the transferability of OrbNet-Equi on complex under-explored main-group chemical spaces, we evaluate OrbNet-Equi/SDC21 on the challenging, community-recognized benchmark collection of the General Thermochemistry, Kinetics, and Non-covalent Interactions (GMTKN55)\cite{gmtkn55} datasets  (Figure~\ref{fig:7}).
Prediction error statistics on the GMTKN55 benchmark are reported with three filtration schemes. First, we evaluate the WTMAD error metrics (Methods~\ref{si_summary}) on reactions that only consist of neutral and closed-shell molecules with chemical elements CHONFSCl (Figure~\ref{fig:7}a), as is supported by an 
Atomistic-ML-based potential method, ANI-2x~\cite{ani2x}, which is trained on large-scale DFT data. 
OrbNet-Equi/SDC21 predictions are found to be highly accurate on this subset, as seen from the WTMAD with respect to CCSD(T) being on par with the DFT methods on all five reaction classes and significantly outperforming ANI-2x and the GFN family of semi-empirical QM methods~\cite{gfn1,gfn2}. It is worth noting that OrbNet-Equi/SDC21 uses much fewer number of training samples than the ANI-2x training set, 
which signifies the effectiveness of combining physics-based and ML-based modelling.

The second filtration scheme includes reactions that consist of closed-shell - but can be charged - molecules with chemical elements that have appeared in the SDC21 training dataset (Figure~\ref{fig:7}b). Although all chemical elements and electronic configurations in this subset are contained in the training dataset, we note that unseen types of physical interactions or bonding types are included, such as in alkali metal clusters from the ALK8 subset~\cite{gmtkn55} and short strong hydrogen bonds in the AHB21 subset~\cite{lao2015accurate}. Therefore, assessments of OrbNet-Equi with this filtration strategy reflect its performance on cases where examples of atom-level physics are provided but the the chemical compositions are largely unknown. Despite this fact, the median WTMADs of OrbNet-Equi/SDC21 are still competitive to DFT methods on the tasks of small-system properties, large-system properties and intra-molecular interactions. On reaction barriers and inter-molecular non-covalent interactions (NCIs), OrbNet-Equi/SDC21 results fall behind DFT
but still show improvements against the GFN-xTB baseline and match the accuracy of GFN2-xTB which is developed with physic-based schemes to improve the descriptions on NCI properties against its predecessor GFN-xTB. 

The last scheme includes all reactions in the GMTKN55 benchmarks containing chemical elements and spin states never seen during training (Figure~\ref{fig:7}c), which represents on the most stringent test and reflects the performance of OrbNet-Equi/SDC21 when being indiscriminately deployed as a quantum chemistry method. When evaluated on the collection of all GMTKN55 tasks (Figure~\ref{fig:7}, `Total' panel), OrbNet-Equi/SDC21 maintains the lowest median WTMAD among methods considered here that can be executed at the computational cost of semi-empirical QM calculations. Moreover, we note that failure modes on a few highly extrapolative subsets can be identified to diagnose cases that are challenging for the QM model used for featurization (Table~\ref{table:gmtkn55}). For example, the fact that predictions being inaccurate on the W4-11 subset of atomization energies~\cite{karton2011w4} and the G21EA subset of electron affinities~\cite{curtiss1991gaussian} parallels the absence of an explicit treatment of triplet or higher-spin species within the formulation of GFN family of tight-binding models. On the population level, the distribution of prediction WTMADs across GMTKN55 tasks also differ from that of GFN2-xTB, which implies that further incorporating physics-based approximations into the QM featurizer can complement the ML model, and thus the accuracy boundary of semi-empirical methods can be pushed to a regime where no known physical approximation is feasible.



\section*{Discussion}


We have introduced OrbNet-Equi, a QM-informed geometric deep learning framework for learning molecular or material properties using representations in the atomic orbital basis.
OrbNet-Equi shows excellent data efficiency for learning  related to  both energy-space and real-space properties, expanding the diversity of molecular properties that can be modelled by QM-informed machine learning. 
Despite only using readily available small-molecule libraries as training data, OrbNet-Equi offers an accuracy alternative to DFT methods on comprehensive main-group quantum chemistry benchmarks at a computation speed on par with semi-empirical methods, thus offering a possible replacement for conventional \textit{ab initio} simulations for general-purpose downstream applications.
For example, OrbNet-Equi could immediately facilitate applications such as  screening electrochemical properties of electrolytes for the design of flow batteries~\cite{janet2020accurate}, and performing accurate direct or hybrid QM/MM simulations for reactions in transition-metal catalysis~\cite{chen2020embedded,li2022dispersion}. The method can also improve the 
modelling for complex reactive biochemical processes~\cite{lampret2020roles} using multi-scale strategies that have been demonstrated in our previous study~\cite{dommer2021covidisairborne}, while conventional \textit{ab initio} reference calculations can be prohibitively expensive even on a minimal sub-system. 

The demonstrated transferability of OrbNet-Equi to seemingly dissimilar chemical species identifies a promising future direction of improving the accuracy and chemical space coverage through adding simple model systems of the absent types of physical interactions to the training data, a strategy that is consistent with using synthetic data to improve ML models~\cite{chen2020automated} which has been demonstrated for improving the accuracy of DFT functionals~\cite{kirkpatrick2021pushing}. Additionally, OrbNet-Equi may provide valuable perspectives for the development of physics-based QM models by relieving the burden of parameterizing Hamiltonian parameters against specific target systems, potentially expanding their design space to higher energy-scales without sacrificing model accuracy. 
Because the framework presented here can be readily extended to alternative quantum chemistry models for either molecular or material systems, we expect OrbNet-Equi to broadly benefit studies in chemistry, materials science, and biotechnology. 

\section{Methods}

\subsection{The UNiTE neural network}

\label{sec:unite}
\label{sec:nn}


This section introduces Unitary N-body Tensor Equivariant Network (UNiTE), the neural network model developed for the OrbNet-Equi method to enable learning equivariant maps between the input atomic orbital features $\mathbf{T}[\Psi_0]$ and the property predictions $\hat{\mathbf{y}}[\Psi_0]$. Given the inputs $\mathbf{T}$, UNiTE
first generates initial representations $\mathbf{h}^{t=0}$ through its diagonal reduction module (Methods \ref{sec:wigner}). 
Then UNiTE updates the representations  $\mathbf{h}^{t=0} \mapsto \mathbf{h}^{t=1} \mapsto \cdots \mapsto \mathbf{h}^{t=t_f}$ with $t_1$ stacks of block convolution (Methods \ref{sec:blockwise}), message passing  (Methods \ref{sec:message_passing}), and point-wise interaction  (Methods \ref{sec:interaction}) modules, followed by $t_2$ stacks of point-wise interaction modules.
A pooling layer (Methods \ref{sec:pooling}) outputs predictions $\hat{\mathbf{y}}$ using the final representations $\mathbf{h}^{t=t_f}$  at $t_f=t_1+t_2$ as inputs. 

$\mathbf{h}^{t}$ is a stack of atom-wise representations, i.e., for a molecular system containing $d$ atoms, $\mathbf{h}^{t}\defeq [\mathbf{h}^{t}_{1}, \mathbf{h}^{t}_{2}, \cdots, \mathbf{h}^{t}_{d}]$. The representation for the $A$-th atom, $\mathbf{h}^{t}_{A}$, is a concatenation of neurons which are associated with irreducible representations of group $\mathrm{O(3)}$. Each neuron in $\mathbf{h}^{t}_{A}$ is identified by a channel index $n\in \{1, 2, \cdots , n_{\mathrm{max}}\}$, a "degree" index $l\in \{0, 1, 2, \cdots, l_{\mathrm{max}}\}$, and a "parity" index $p\in \{+1, -1\}$. The neuron $\mathbf{h}^{t}_{A,nlp}$ is a vector of length $2l+1$ and transforms as the $l$-th irreducible representation of group $\mathrm{SO(3)}$; i.e., $\mathbf{h}^{t}_{A,nlp} = \bigoplus_m {h}^{t}_{A,nlpm}$ where $\oplus$ denotes a vector concatenation operation and $m \in \{-l, -l+1, \cdots, l-1, l\}$. We use $N_{lp}$ to denote the number of neurons with degree $l$ and parity $p$ in $\mathbf{h}^t$, and $N\defeq\sum_{l,p}N_{lp}$ to denote the total number of neurons in $\mathbf{h}^t$.

For a molecular/material system with atomic coordinates $\mathbf{x} \in \mathbb{R}^{d\times3}$, the following equivariance properties with respect to isometric Euclidean transformations are fulfilled for any input gauge-invariant and Hermitian operator $\mathcal{O}[\Psi_0]$; for all allowed indices $A, n, l, p, m$:
\begin{itemize}
\item Translation invariance:
\begin{equation}
\mathbf{h}^{t}_{A,nlpm} \mapsto \mathbf{h}^{t}_{A,nlpm} \quad \textrm{for} \quad \mathbf{x} \mapsto \mathbf{x} + \mathbf{x}_0 \label{eq:ts_eqv}
\end{equation}
where $\mathbf{x}_0 \in \mathbb{R}^{3} $ is an arbitrary global shift vector; 

\item Rotation equivariance:
\begin{equation}
    \mathbf{h}^{t}_{A,nlpm} \mapsto \sum_{m'} \mathbf{h}^{t}_{A,nlpm'} \mathcal{D}^{l}_{m,m'}(\alpha, \beta, \gamma) \label{eq:rot_eqv}
\end{equation}
for $\mathbf{x} \mapsto \mathbf{x} \cdot \mathcal{R}(\alpha, \beta, \gamma) $ where $\mathcal{R}(\alpha, \beta, \gamma)$ denotes a rotation matrix corresponding to standard Euler angles $\alpha, \beta, \gamma$;

\item Parity inversion equivariance:
\begin{equation}
    \mathbf{h}^{t}_{A,nlpm} \mapsto (-1)^{l} \cdot p \cdot \mathbf{h}^{t}_{A,nlpm} \quad \textrm{for} \quad \mathbf{x} \mapsto -\mathbf{x}  \label{eq:inv_eqv}
\end{equation}
\end{itemize}

The initial vector representations $\mathbf{h}^{t=0}$ are generated by decomposing diagonal sub-tensors of 
the input $\mathbf{T}$ into a spherical-tensor representation without explicitly solving tensor factorization, 
based on the tensor product property of group $\mathrm{SO(3)}$.
The intuition behind this operation is that the diagonal sub-tensors of $\mathbf{T}$ can be viewed as isolated systems 
interacting with an effective external field, whose rotational symmetries are described by the Wigner-Eckart Theorem~\cite{sakurai1995modern} which links tensor operators to their spherical counterparts and applies here within a natural generalization.
Each update step $\mathbf{h}^{t}\mapsto \mathbf{h}^{t+1}$ is composed of 
(a) block convolution, 
(b) message passing, and
(c) point-wise interaction modules which are all equivariant with respect to index permutations and basis transformations.
%
In an update step $\mathbf{h}^{t}\mapsto \mathbf{h}^{t+1}$, each off-diagonal block of 
$\mathbf{T}$ corresponding to a pair of atoms is contracted with $\mathbf{h}^{t}$. This block-wise contraction operation can be interpreted as performing 
local convolutions using the blocks of $\mathbf{T}$ as convolution kernels, and therefore is called \textit{block convolution} module. The output block-wise representations 
are then passed into a \textit{message passing} module, 
which is analogous to a message-passing operation on edges in graph neural networks~\cite{gilmer2017neural}. 
The message passing outputs are then fed into a \textit{point-wise interaction} module with the previous-step representation $\mathbf{h}^{t}$ 
 to finish the update 
$\mathbf{h}^{t}\mapsto \mathbf{h}^{t+1}$. 
The point-wise interaction modules are constructed as a stack of multi-layer perceptrons (MLPs), Clebsch-Gordan product operations and skip connections. 
Within those modules, a \textit{matching layer} assigns the channel indices of $\mathbf{h}^{t}$ to indices of the atomic orbital basis.

We also introduce a normalization layer termed Equivariant Normalization (EvNorm, see Methods \ref{sec:evnorm}) to improve training and generalization of the neural network.
EvNorm 
normalizes scales of the representations $\mathbf{h}$, while recording the direction-like information to be recovered afterward.  
EvNorm is fused with a point-wise interaction module through first applying EvNorm to the module inputs, then using an MLP to transform the normalized frame-invariant scale information, and finally by multiplying the recorded direction vector to the MLP's output. 
Using EvNorm within the point-wise interaction modules is found to stabilize training
and eliminate the need for tuning weight initializations and learning rates across different tasks. 


The explicit expressions for the neural network modules are provided for quantum operators $\mathcal{O}$ being one-electron operators and therefore the input tensors $\mathbf{T}$ is a stack of matrices (i.e., order-2 tensors). Without loss of generality, we also assume that $\mathbf{T}$ contains only one feature matrix. Additional technical aspects regarding the case of multiple input features, the inclusion of geometric descriptors, and implementation details are discussed in Appendix \ref{si_implementation}. The proofs regarding equivariance and theoretical generalizations to order-$N$ tensors are provided in Appendix~\ref{si_theory}. 

\subsubsection{The Diagonal Reduction module} 

\label{sec:wigner}

We define the shorthand notations $\mu\defeq(n_1, l_1, m_1)$ and $\nu\defeq(n_2, l_2, m_2)$ to index atomic orbitals. The initialization scheme for $\mathbf{h}^{t=0}$ is based on the following proposition: 
%
for each diagonal block of $\mathbf{T}$, $\mathbf{T}_{AA}$, defined for an on-site atom pair $(A, A)$,
\begin{equation}
    T_{AA}^{\mu,\nu} \defeq \langle \Phi_A^{\mu} \lvert \hat{\mathcal{O}} \rvert  \Phi_A^{\nu} \rangle ,
\end{equation}
there exists a set of $\mathbf{T}$-independent coefficients $Q^{\mu,\nu}_{nlpm}$ such that 
the following linear transformation $\psi$ 
\begin{equation}
    \label{eq:nto1}
     \psi(\mathbf{T}_{AA})_{nlpm} \defeq \sum_{\mu,\nu} T_{AA}^{\mu,\nu} \, Q^{\mu,\nu}_{nlpm} 
\end{equation}
is injective and yields $\mathbf{h}_A \defeq \psi(\mathbf{T}_{AA})$ that satisfy equivariance (\eqref{eq:ts_eqv}-\eqref{eq:inv_eqv}). 

The existence of $\mathbf{Q}$ is discussed in Appendix, Corollary~\ref{cor:wigner}. 
For the sake of computational feasibility, a physically-motivated scheme is employed to tabulate $\mathbf{Q}$ and produce order-1 equivariant embeddings $\mathbf{h}_A$, using \textit{on-site 3-index overlap integrals} $\tilde{\mathbf{Q}}$: 
\begin{align}
    \label{eq:3idx}
    \tilde{Q}^{\mu,\nu}_{nlm} &\defeq \tilde{Q}^{n_1, l_1, m_1; n_2, l_2, m_2}_{nlm} \nonumber \\
    &= \int_{\mathbf{r}\in\mathbb{R}^{3}} (\Phi_A^{n_1, l_1, m_1}(\mathbf{r}))^{*} \Phi_A^{n_2, l_2, m_2}(\mathbf{r}) \tilde{\Phi}_A^{n, l, m}(\mathbf{r}) d \mathbf{r} 
\end{align}
where $\Phi_A$ are the atomic orbital basis, and $\tilde{\Phi}_A$ are auxiliary Gaussian-type basis functions defined as (for conciseness, at $\mathbf{x}_A=0$):
\begin{equation}
    \tilde{\Phi}^{n,l,m}(\mathbf{r}) \defeq  c_{n,l} \cdot \exp(-\gamma_{n,l} \cdot r^{2}) \, r^{l} \, Y_{lm}(\frac{\mathbf{r}}{r})
\end{equation}
where $c_{n,l}$ is a normalization constant such that $\int_{\mathbf{r}} \lvert \lvert \tilde{\Phi}_A^{n, l, m}(\mathbf{r}))\lvert \lvert ^{2} d \mathbf{r}=1$ following standard conventions~\cite{dunning1977gaussian}. For numerical experiments considered in this work the scale parameters $\gamma$ are chosen as (in atomic units):
\begin{align*}
    \gamma_{n,l=0} &\defeq 128 \cdot (0.5)^{n-1} \quad \textrm{where} \quad n\in \{1, 2, \cdots, 16 \} \\
    \gamma_{n,l=1} &\defeq 32 \cdot (0.25)^{n-1} \quad \textrm{where} \quad n\in \{1, 2, \cdots, 8 \} \\
    \gamma_{n,l=2} &\defeq 4.0 \cdot (0.25)^{n-1} \quad \textrm{where} \quad n\in \{1, 2, 3, 4 \} 
\end{align*}
$\tilde{\mathbf{Q}}$ adheres to equivariance constraints 
due to its relation to $\mathrm{SO(3)}$ Clebsch-Gordan coefficients $C_{l_1 m_1;l_2 m_2}^{lm} \propto \int_{\mathbf{r}\in\mathbb{S}^{2}} Y_{l_1 m_1}(\mathbf{r}) Y_{l_2 m_2}(\mathbf{r}) (Y_{l m}(\mathbf{r}))^{*} d \mathbf{r}$~\cite{sakurai1995modern}.
Note that the auxiliary basis $\tilde{\Phi}_A$ is independent of the atomic numbers thus the resulting $\mathbf{h}_A$ are of equal length for all chemical elements. 
$\tilde{\mathbf{Q}}$ can be efficiently generated using electronic structure programs, here done with \cite{entos}. 
The resulting $\mathbf{h}_A$ in explicit form are
\begin{align*}
    \mathbf{h}_A &\defeq \bigoplus_{n,l,p,m} h_{A,nlpm} \quad \textrm{where} \\
    h_{A,nl(p=+1)m} &=
    \sum_{\mu, \nu} T_{AA}^{\mu,\nu} \, \tilde{Q}^{\mu,\nu}_{nlm} \\
    h_{A,nl(p=-1)m} &= 0
\end{align*}
$\mathbf{h}_A$ are then projected by learnable linear weight matrices such that the number of channels for each $(l,p)$ matches the model specifications. The outputs are regarded as the initial representations $\mathbf{h}^{t=0}$ to be passed into other modules.


\subsubsection{The Block Convolution module} 

\label{sec:blockwise}

In an update step $\mathbf{h}^{t}\mapsto \mathbf{h}^{t+1}$, sub-blocks of $\mathbf{T}$ are first contracted with a stack of linearly-transformed order-1 representations $\mathbf{h}^{t}$. 
\begin{equation}
    \label{eq:conv}
    \mathbf{m}_{AB,\nu}^{t,i} = \sum_{\mu} \big( \rho_i(\mathbf{h}^{t}_{A}) \big)_{\mu} \,  T_{AB}^{\mu, \nu}
\end{equation}
which can be viewed as a 1D convolution between each block $\mathbf{T}_{AB}$ (as convolution kernels) and the $\rho(\mathbf{h}^{t}_{A})$ (as the signal) in the $i$-th channel where $i \in \{1, 2, \cdots, I \}$ is the convolution channel index. The block convolution produces block-wise representations $\mathbf{m}_{AB}^{t}$ for each block index $(A, B)$. $\rho_i$ is called a \textit{matching layer} at atom $A$ and channel $i$, defined as:
\begin{equation}
    \big( \rho_i(\mathbf{h}^{t}_{A}) \big)_{\mu} = \mathrm{Gather}\big(\mathbf{W}_{l}^{i} \cdot (\mathbf{h}_A^{t})_{l(p=+1)m}, n[\mu, z_A] \big)
\end{equation}
$\mathbf{W}_{l}^{i} \in \mathbb{R}^{M_{l} \times N_{l,+1}}$ are learnable linear weight matrices specific to each degree index $l$, where $M_{l}$ is the maximum principle quantum number for shells of angular momentum $l$ within the atomic orbital basis used for featurization. The $\mathrm{Gather}$ operation maps the feature dimension to valid atomic orbitals by indexing $\mathbf{W}_{l}^{i} \cdot (\mathbf{h}_A^{t})_{l(p=+1)m}$ using $n[\mu, z_A]$, the principle quantum numbers of atomic orbitals $\mu$ for atom type $z_A$.

\subsubsection{The Message Passing module} Block-wise representations $\mathbf{m}_{AB}^{t}$ are then aggregated into each atom index $A$ by summing over the indices $B$, analogous to a `message-passing' between nodes and edges in common realizations of graph neural networks \cite{gilmer2017neural},
\begin{equation}
    \tilde{\mathbf{m}}_{A}^t = \sum_B \bigoplus\limits_{i,j} \ \mathbf{m}_{BA}^{t,i}  \cdot \alpha_{AB}^{t,j} \label{eq:mp} 
\end{equation}
up to a non-essential symmetrization and inclusion of point-cloud geometrical terms (\eqref{eq:qc_accu}). $\alpha_{AB}^{t,j}$ in \eqref{eq:mp} are scalar-valued weights parameterized as \textit{SE(3)-invariant multi-head attentions}:
\begin{equation}
    \label{eq:attn}
    \bm{\alpha}^{t}_{AB} = \mathrm{MLP}\big((\mathbf{z}_{AB}^{t} \cdot \mathbf{W}^{t}_{\upalpha}) \odot \bm{\kappa}(\lvert \lvert \mathbf{T}_{AB}\lvert \lvert ) / \sqrt{N}\big)  
\end{equation}
where $\odot$ denotes an element-wise (Hadamard) product, and
\begin{equation}
    \mathbf{z}_{AB}^{t} = \bigoplus_{n,l,p}\sum_{m=-l}^{l} h^{t}_{A, nlpm} \cdot h^{t}_{B, nlpm} 
\end{equation}
where $\mathrm{MLP}$ denotes a 2-layer MLP, $\mathbf{W}^{t}_{\upalpha}$ are learnable linear functions and $j\in\{1,2,\cdots,J\}$ denotes an attention head (one value in $\bm{\alpha}^{t}_{AB}$). $\bm{\kappa}(\cdot)$ is chosen as Morlet wavelet basis functions:
\begin{align}
    \bm{\kappa}(\lvert \lvert \mathbf{T}_{AB}\lvert \lvert ) &\defeq \mathbf{W}_{\upkappa}\Big(\bigoplus_{k} \sum_{n,l}\sum_{n',l'} \xi_{k}(\log \big(\lvert \lvert \mathbf{T}_{AB}^{n,l;n',l'}\lvert \lvert )\big)\Big) \\
    \xi_{k}(x) &\defeq \exp(-\gamma_k \cdot x^2) \cdot \cos(\pi \gamma_k \cdot x) \label{eq:rbf}
\end{align}
where $\mathbf{W}^{t}_{\upkappa}$ are learnable linear functions and $\gamma_k$ are learnable frequency coefficients initialized as $\gamma_k = 0.3 \cdot (1.08)^{k}$ where $k\in \{0, 1, \cdots, 15\}$. Similar to the scheme proposed in $\mathrm{SE(3)}$-transformers \cite{fuchs2020se}, the attention mechanism \eqref{eq:attn} 
improves the network capacity without increasing memory costs as opposed to explicitly expanding $\mathbf{T}$.

The aggregated message $\tilde{\mathbf{m}}_A^t$ is combined with the representation of current step $\mathbf{h}_{A}^{t}$ through a point-wise interaction module $\phi$ (see Methods~\ref{sec:interaction}) to complete the update $\mathbf{h}_{A}^{t} \mapsto \mathbf{h}_{A}^{t+1}$. 

\label{sec:message_passing}

\subsubsection{Equivariant Normalization (EvNorm)} 
\label{sec:evnorm}
We define $\mathrm{EvNorm}: \mathbf{h} \mapsto ( \bar{\mathbf{h}}, \hat{\mathbf{h}})$ where $\bar{\mathbf{h}}$ and $\hat{\mathbf{h}}$ are given by
\begin{equation}
    \label{eq:EvNorm}
    \bar{\mathbf{h}}_{nlp} \defeq \frac{\lVert \mathbf{h}_{nlp}\rVert -\mu^h_{nlp}}{\sigma^h_{nlp}} \quad \textrm{and} \quad
    \hat{\mathbf{h}}_{nlpm} \defeq \frac{\mathbf{h}_{nlpm}}{\lVert \mathbf{h}_{nlp} \rVert + 1/\beta_{nlp} + \epsilon}
\end{equation}
where $\lVert \cdot \rVert$ denotes taking a neuron-wise regularized $L^2$ norm: 
\begin{equation}
    \lVert \mathbf{h}_{nlp} \rVert \defeq \sqrt{\sum_{m} \mathbf{h}_{nlpm}^2 + \epsilon^{2}} - \epsilon   
\end{equation}
$\mu^h_{klp}$ and $\sigma^h_{klp}$ are mean and variance estimates of the invariant content $\lVert \mathbf{h} \rVert$ that can be obtained from either batch or layer statistics as in normalization schemes developed for scalar neural networks~\cite{ioffe2015batch,ba2016layer}; $\beta_{klp}$ are positive, learnable scalars controlling the fraction of vector scale information from $\mathbf{h}$ to be retained in $\hat{\mathbf{h}}$, and $\epsilon$ is a numerical stability factor.
The EvNorm operation \eqref{eq:EvNorm} decouples $\mathbf{h}$ to the normalized frame-invariant representation $\bar{\mathbf{h}}$ suitable for being transformed by an MLP, and a `pure-direction' $\hat{\mathbf{h}}$ that is later multiplied to the MLP-transformed normalized invariant content to finish updating $\mathbf{h}$. Note that in \eqref{eq:EvNorm}, $\mathbf{h}=\mathbf{0}$ is always a fixed point of the map $\mathbf{h} \mapsto \hat{\mathbf{h}}$ and the vector directions information $\mathbf{h}$ is always preserved.


\subsubsection{The Point-wise Interaction module and representation updates} 
\label{sec:interaction}

A \textit{point-wise interaction module} $\phi$ (\eqref{eq:pi01}-\eqref{eq:pi02}) nonlinearly updates the atom-wise representations through $\mathbf{h}^{t+1} = \phi(\mathbf{h}^{t}, \mathbf{g})$
\begin{align}
    \mathbf{f}^{t}_{lpm} &= \big( \mathrm{MLP}_{1}(\bar{\mathbf{h}}^{t}) \big)_{lp} \odot (\hat{\mathbf{h}}^{t}_{lpm} \cdot \mathbf{W}_{l,p}^{\mathrm{in},t}) \quad \textrm{where} \quad (\bar{\mathbf{h}}^{t}, \hat{\mathbf{h}}^{t} ) = \mathrm{EvNorm}(\mathbf{h}^t) \label{eq:pi01} \\
    \mathbf{q}_{l p m} &= \mathbf{g}_{lpm} + \sum_{l_1, l_2}\sum_{m_1, m_2} \sum_{p_1, p_2} (\mathbf{f}^t_{l_1 p_1 m_1} \odot \mathbf{g}_{l_2 p_2 m_2}) \, C_{l_1 m_1; l_2 m_2}^{lm} \, \delta^{(-1)^{l_1+l_2+l}}_{p_1\cdot p_2\cdot p} \label{eq:cgp} \\
    \mathbf{h}^{t+1}_{lpm} &= \mathbf{h}^{t}_{lpm} + \big(\mathrm{MLP}_{2}(\bar{\mathbf{q}})\big)_{lp} \odot ( \hat{\mathbf{q}}_{lpm} \cdot \mathbf{W}_{l,p}^{\mathrm{out},t} ) \quad \textrm{where} \quad ( \bar{\mathbf{q}}, \hat{\mathbf{q}} ) = \mathrm{EvNorm}(\mathbf{q}) \label{eq:pi02}
\end{align}
which consist of coupling another $\mathrm{O}(3)$-equivariant representation $\mathbf{g}$ with $\mathbf{h}^{t}$ and performing normalizations. In \eqref{eq:pi01}-\eqref{eq:pi02}, $C_{l_1 m_1; l_2 m_2}^{lm}$ are Clebsch-Gordan coefficients of group $\mathrm{SO(3)}$, $\delta_i^j$ is a Kronecker delta function, and $\mathrm{MLP}_{1}$ and $\mathrm{MLP}_{2}$ denote multi-layer perceptrons acting on the feature ($nlp$) dimension. $\mathbf{W}_{l,p}^{\mathrm{in},t}\in \mathbb{R}^{N_{l,p} \times N_{l,p}}$ and $\mathbf{W}_{l,p}^{\mathrm{out},t}\in \mathbb{R}^{N_{l,p} \times N_{l,p}}$ correspond to learnable linear weight matrices specific to the update step $t$ and each $(l, p)$.

For $t<t_1$, the updates are performed by combining $\mathbf{h}^t$ with the aggregated messages $\tilde{\mathbf{m}}^t$ from step $t$:
\begin{equation}
    \mathbf{h}_{A}^{t+1} = \phi\big( \mathbf{h}_{A}^{t}, \rho^{\dagger}(\tilde{\mathbf{m}}_{A}^t) \big)
    \label{eq:accu}
\end{equation}
where $\rho^{\dagger}$ is called a reverse matching layer, defined as:
\begin{align}
    \big( \rho^{\dagger}( \tilde{\mathbf{m}}_A^{t}) \big)_{l(p=+1)m} &=  \mathbf{W}_{l}^{\dagger} \cdot  \sum_{\mu} \mathrm{Scatter}\big( \tilde{\mathbf{m}}_{A,\mu}^{t}, n[\mu, z_A] \big) \\
    \big( \rho^{\dagger}( \tilde{\mathbf{m}}_A^{t}) \big)_{l(p=-1)m} &= \mathbf{0}
\end{align}
the $\mathrm{Scatter}$ operation maps the atomic-orbital dimension in $\tilde{\mathbf{m}}^{t}$ to a feature dimension with fixed length $M_{l}$ using $n[\mu, z_A]$ as the indices, and flattens the outputs into shape $(N_\mathrm{atoms}, M_{l} I J)$. $\mathbf{W}^{\dagger}_{l} \in \mathbb{R}^{N_{l,+1} \times M_{l} I J}$ are learnable linear weight matrices to project the outputs into the shape of $\mathbf{h}^{t}$. 

For $t_1 \leq t < t_2$, the updates are based on local information:
\begin{equation}
    \mathbf{h}_{A}^{t+1} = \phi\big( \mathbf{h}_{A}^{t}, \mathbf{h}_{A}^{t} \big).
\end{equation}


\subsubsection{Pooling layers and training} 
\label{sec:pooling}
A programmed pooling layer reads out the target prediction $\hat{\mathbf{y}}$ after the representations $\mathbf{h}^{t}$ are updated to the last step $\mathbf{h}^{t_f}$. 
Pooling operations employed for obtaining main numerical results are detailed in Appendix~\ref{si_pooling}; hyperparameter, training and loss function details are provided in Appendix~\ref{si_training}. As a concrete example, the dipole moment vector is predicted as $\vec{\mu} = \sum_A(\vec{x}_A \cdot q_A + \vec{\mu}_A)$ where $\vec{x}_A$ is the 3D coordinate of atom $A$, and atomic charges $q_A$ and atomic dipoles $\vec{\mu}_A$ are predicted respectively using scalar ($l=0$) and Cartesian-coordinate vector ($l=1$) components of $\mathbf{h}_A^{t_f}$.

\subsection{QM-informed featurization details and gradient calculations}
\label{si_qchem}
\label{si_xtb}

The QM-informed representation employed in this work is motivated by a pair of our previous works~\cite{orbnet1,qiao2020multi}, but in this study the features are directly evaluated in the atomic orbital basis without the need of heuristic post-processing algorithms to enforce rotational invariance. 

In particular, this work (as well as \cite{orbnet1} and \cite{qiao2020multi}) constructs features based on the GFN-xTB semi-empirical QM method~\cite{gfn1}.
As a member of the class of \textit{mean field} 
quantum chemical methods, GFN-xTB centers around the self-consistent solution of the  Roothaan-Hall equations,
\begin{equation}
    \mathbf{F} \mathbf{C} = \mathbf{S} \mathbf{C} \boldsymbol{\epsilon}.
    \label{eq:scf}
\end{equation}

All boldface symbols are matrices represented in the atomic orbital basis. 
For the particular case of GFN-xTB, 
the atomic orbital basis is similar to STO-6G and comprises a set of hydrogen-like orbitals. 
$\mathbf{C}$ is the molecular orbital coefficients which defines $\Psi_{0}$, and
$\boldsymbol{\epsilon}$ is a diagonal eigenvalue matrix of the molecular orbital energies. 
$\mathbf{S}$ is the overlap matrix and is given by

\begin{equation}
    S_{\mu\nu} = \langle \Phi^{\mu} \lvert  \Phi^{\nu} \rangle
\end{equation}
where $\mu$ and $\nu$ index the atomic orbital basis $\{ \Phi \}$. 
$\mathbf{F}$ is the \textit{Fock matrix} and is given by
\begin{equation}
    \mathbf{F} = \mathbf{H} + \mathbf{G}\left[\mathbf{P}\right].
    \label{eq:fock}
\end{equation}
$\mathbf{H}$ is the one-electron integrals including electron-nuclear attraction and electron kinetic energy. 
$\mathbf{G}$ is the two-electron integrals comprising the electron-electron repulsion.
Approximation of $\mathbf{G}$ is the key task for self-consistent field methods, and GFN-xTB provides an accurate and efficient tight-binding approximation for $\mathbf{G}$.
Finally, $\mathbf{P}$ is the (one-electron-reduced) density matrix, and is given by

\begin{equation}
    P_{\mu\nu} = \sum_{i=1}^{n_\mathrm{elec}/2} C_{\mu i}^* C_{\nu i}.
\end{equation}
$n_\mathrm{elec}$ is the number of electrons, and a closed-shell singlet ground state is assumed for simplicity.
Equations \ref{eq:scf} and \ref{eq:fock} are solved for $\mathbf{P}$.
The electronic energy $E$ is related to the Fock matrix by
\begin{equation}
    \mathbf{F} = \frac{\delta E}{\delta \mathbf{P}}.
\end{equation}
The particular form of the GFN-xTB electronic energy can be found in \cite{gfn1}.

UNiTE is trained to predict the quantum chemistry properties of interest 
based on the inputs $\mathbf{T} = ( \mathbf{F}$, $\mathbf{P}$, $\mathbf{S}$,  $\mathbf{H} )$ with possible extensions (e.g., the energy-weighted density matrices). For the example of learning the DFT electronic energy with the "delta-learning" training strategy:
\begin{equation}
    E_\mathrm{DFT} \approx E_\mathrm{TB} + \mathcal{F}(\mathbf{T}).
    \label{eq:learned_energy}
\end{equation}
Note that $\mathbf{F}$, $\mathbf{P}$, $\mathbf{S}$, and $\mathbf{H}$ all implicitly depend on the atomic coordinates $\mathbf{x}$ and charge/spin state specifications.

In addition to predicting $E$ it is also common to compute its gradient with respect to atomic nuclear coordinates $\mathbf{x}$ to predict the forces used for geometry optimization and molecular dynamics simulations. 
We directly differentiate the energy \eqref{eq:learned_energy} to obtain energy-conserving forces. 
The partial derivatives of the UNiTE energy with respect to $\mathbf{F}$, $\mathbf{P}$, $\mathbf{S}$, and $\mathbf{H}$ is determined through automatic differentiation. The resulting forces are computed through an adjoint approach developed in Appendix D of our previous work~\cite{qiao2020multi}, with the simplification that the SAAO transformation matrix $\mathbf{X}$ is replaced by the identity.

\subsection{Dataset and computational details}

\subsubsection*{Training datasets}
\label{si_dataset}


The molecule datasets used in Section~\ref{sec:qm9}-\ref{sec:density} are all previously published. 
Following Section 2.1 of~\cite{fabrizio2019electron}, the 2291 BFDb-SSI samples for training and testing are selected as the sidechain–sidechain dimers in the original BFDb-SSI dataset that contain $\leq$ 25 atoms and no sulfur element to allow for comparisons among methods.

The Selected Drug-like and biofragment Conformers (SDC21) dataset used for training the OrbNet-Equi/SDC21 model described in Section~\ref{sec:downstream} is collected from several publicly-accessible sources.
First 11,827 neutral SMILES strings were extracted from the ChEMBL database~\cite{chembl27}. 
For each SMILES string, up to four conformers were generated by Entos Breeze, and optimized at the GFN-xTB level.
Non-equilibrium geometries of the conformers were generated using either normal mode sampling~\cite{SmithANI2017} at 300K or \textit{ab initio} molecular dynamics for 200fs at 500K in a ratio of 50\%/50\%, resulting in a total of 178,836 structures.
An additional number 2,549 SMILES string were extracted from ChEMBL, and random protonation states for these were selected using Dimorphite-DL~\cite{DimorphiteDL}, as well as another 2,211 SMILES strings which were augmented by adding randomly selected salts from the list of common salts in the ChEMBL Structure Pipeline~\cite{ChemblPipelineBento2020}.
For these two collections of modified ChEMBL SMILES strings, non-equilibrium geometries were created using the same protocol described earlier, resulting in 21,141 and 27,005 additional structures for the two sets, respectively.
To compensate for the bias towards large drug-like molecules, $\sim$45,000 SMILES strings were enumerated using common bonding patterns, from which a 9,830 conformers were generated from a randomly sampled subset.
Lastly, 
molecules in the BFDb-SSI and JSCH-2005 datasets were added to the training data set~\cite{BFGDBBurns2017,JSCH}.
In total, the data set consists of 237,298 geometries spanning the elements C, O, N, F, S, Cl, Br, I, P, Si, B, Na, K, Li, Ca, and Mg.
For each geometry DFT single point energies were calculated on the dataset at the $\omega$B97X-D3/def2-TZVP level of theory in Entos Qcore version 0.8.17.\cite{wb97xd,Weigend2005,entos} 
Lastly, we additionally filtered the geometries for which DFT calculation failed to converge or broken bonds between the equilibirum and non-equilibrium geometries are detected, resulting in 235,834 geometries used for training the OrbNet-Equi/SDC21 model.

\subsubsection*{Electronic structure computational details}
\label{si_qcore}

The dipole moment labels $\vec{\mu}$ for QM9 dataset used in Section~\ref{sec:qm9} were calculated at the B3LYP level of DFT theory with def2-TZVP AO basis set to match the level of theory used for published QM9 labels, using Entos Qcore version 1.1.0~\cite{Lee1988,def2tzvp,entos}. The electron density labels $\rho(\vec{r})$ for QM9 and BFDb-SSI were computed at the $\omega$B97X-D3/def2-TZVP level of DFT theory using def2-TZVP-JKFIT~\cite{weigend2008hartree} for Coulomb and Exchange fitting, also as the electron charge density expansion basis $\{\chi\}$. The density expansion coefficients $\mathbf{d}$ are calculated as
\begin{equation}
    d_{\gamma} = \sum_{\xi} \sum_{\mu,\nu} \big( (\mathbf{S}^{\mathrm{\rho}})^{-1} \big)_{\gamma \xi} S_{\mu\nu;\xi} P_{\mu\nu}
\end{equation}
where $\mu,\nu$ are AO basis indices, $\xi,\gamma$ are density fitting basis indices. Note that $\gamma$ stands for the combined index $(A, n, l, m)$ in \eqref{eq:des_expand}. $\mathbf{P}$ is the DFT AO density matrix, $\mathbf{S}^{\mathrm{\rho}}$ is the density fitting basis overlap matrix, and $S_{\mu\nu;\xi}$ are 3-index overlap integrals between the AO basis and the density fitting basis $\{\chi\}$. 

\subsubsection*{Benchmarking details and summary statistics}
\label{si_summary}




For the mean $L^1$ electronic density error over the test sets reported in Section~\ref{sec:density}, we use 291 dimers as the test set for the BFDb-SSI dataset, and 10000 molecules as the test set for the QM9 dataset, following literature~\cite{fabrizio2019electron,jorgensen2020deepdft}. $\varepsilon_{\rho}$ for each molecule in the test sets is computed using a 3D cubic grid of voxel spacing $(0.2, 0.2, 0.2)$ Bohr for BFDb-SSI test set and voxel spacing $(1.0, 1.0, 1.0)$ Bohr for the QM9 test set, both with cutoff at $\rho(\vec{r}) = 10^{-5} \ a_0^{-3}$.  We note that two baseline methods used slightly different normalization conventions when computing the dataset-averaged $L^1$ density errors $\varepsilon_{\rho}$, (a) computing $\varepsilon_{\rho}$ for each molecule and normalizing over the number of molecules in the test set~\cite{jorgensen2020deepdft} or (b) normalizing over the total number of electrons in the test set~\cite{fabrizio2019electron}. We found the average $\varepsilon_{\rho}$ computed using normalization (b) is higher than (a) by around 5\% for our results. We follow their individual definitions for average $\varepsilon_{\rho}$ for the quantitative comparisons described in the main text, that is, using scheme (a) for QM9 but scheme (b) for BfDB-SSI.

For downstream task statistics reported in Figure~\ref{fig:6} and Table~\ref{table:downstream}, 
%
%
%
the results on the Hutchison dataset in  Figure~\ref{fig:6}a are calculated as the $R^{2}$ correlation coefficients comparing the conformer energies of multiple conformers from a given model to the energies from DLPNO-CCSD(T).
The median $R^{2}$ in Table~\ref{table:downstream} with respect to both DLPNO-CCSD(T) and $\omega$B97X-D3/def2-TZVP are calculated over the $R^{2}$-values for every molecule, and error bars are estimated by bootstrapping the pool of molecules. The error bars for TorsionNet500 and s66x10 are computed as 95\% confidence intervals. 
%
%
Geometry optimization experiments are performed through relaxing the reference geometries until convergence. Geometry optimization accuracies in Figure~\ref{fig:6}d and Table~\ref{table:downstream} are reported as the symmetry-corrected root mean square deviation (RMSD) of the minimized geometry versus the reference level of theory ($\omega$B97X-D3/def2-TZVP) calculated over molecules in the benchmark set. Additional computational details for this task are provided in Appendix~\ref{si_downstream}.

%
For the GMTKN55 benchmark dataset collection, the reported CCSD(T)/CBS results are used as reference values. 
The WTAD scores for producing Figure~\ref{fig:7} is defined similar to the updated weighted mean absolute deviation (WTMAD-2) in~\cite{gmtkn55}, but computed for each reaction in GMTKN55:
\begin{equation}
    \mathrm{WTAD}_{i,j} = \frac{56.84}{ \frac{1}{N_i} \sum_j |\Delta E|_{i,j}} \cdot |\Delta E|_{i,j}
\end{equation}
for $j$-th reaction in the $i$-th task subset. Note that the subset-wise WTMAD-2 metric in Appendix Table~\ref{table:gmtkn55} is given by
\begin{equation}
    \textrm{WTMAD-2}_i = \frac{1}{N_i} \sum_{j} \mathrm{WTAD}_{i,j}.
\end{equation}
and the overall WTMAD-2 is reproduced by
\begin{equation}
    \textrm{WTMAD-2} = \frac{1}{\sum_{i}^{55} N_i} \sum_{i,j} \mathrm{WTAD}_{i,j}.
\end{equation}
%
%

\section*{Acknowledgements}
Z.Q. acknowledges graduate research funding from Caltech and partial support from the Amazon–Caltech AI4Science fellowship. T.F.M. and A.A. acknowledge partial support from the Caltech DeLogi fund, and A.A. acknowledges support from a Caltech Bren professorship. Z.Q. acknowledges Bo Li, Vignesh Bhethanabotla, Dani Kiyasseh, Hongkai Zheng, Sahin Lale, and Rafal Kocielnik for proofreading and helpful comments on the manuscript.

\clearpage

\begin{figure}[h!]
    \centering
    \includegraphics[width=0.7\linewidth]{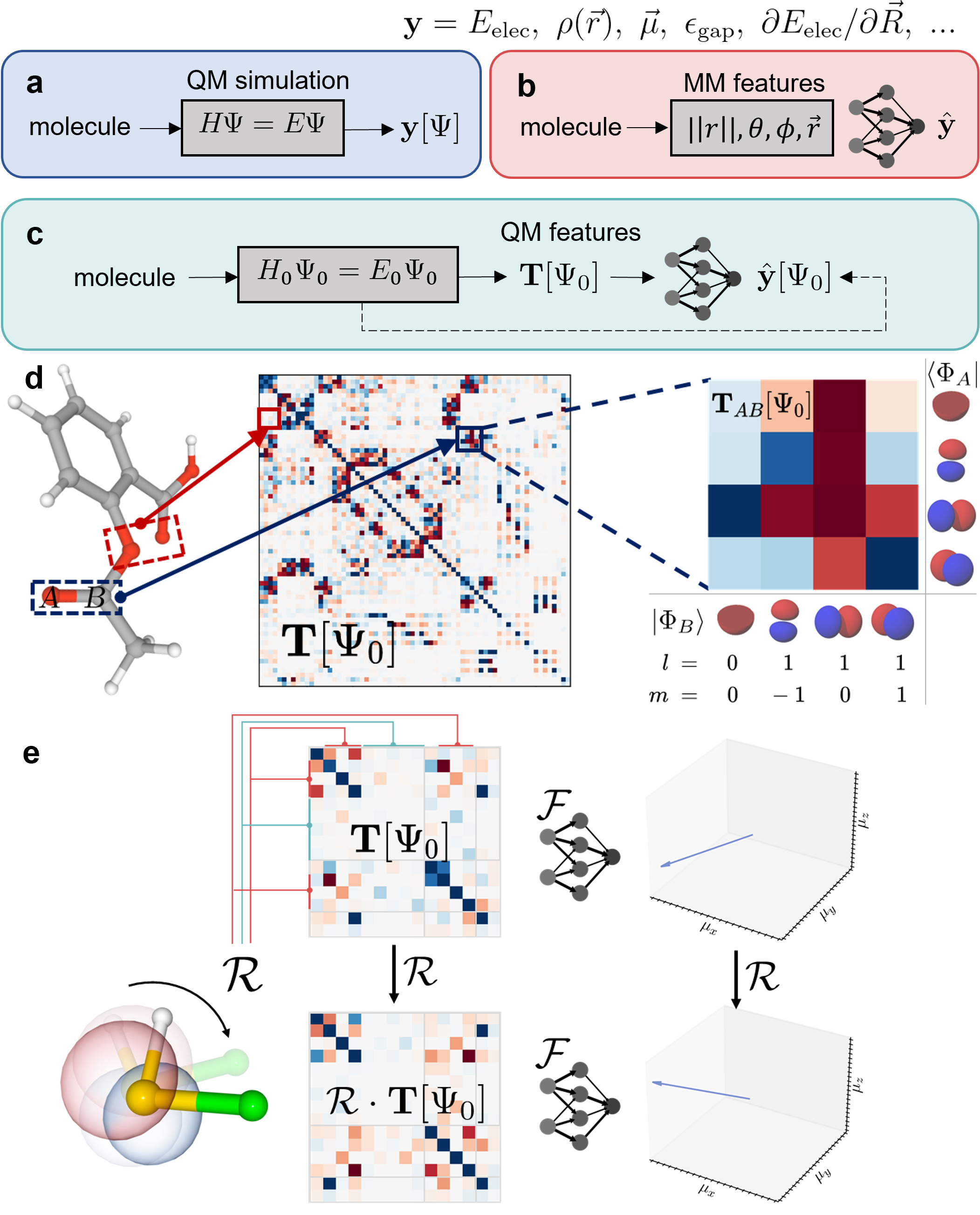}
    \caption{QM-informed machine learning for modelling molecular properties. (a) Conventional \textit{ab initio} quantum chemistry methods predict molecular properties based on electronic structure theory through computing molecular wavefunctions and interaction terms, with general applicability but at high computational cost. (b) Atomistic machine learning approaches use geometric descriptors such as interatomic distances, angles, and directions to bypass the procedure of solving the electronic structure problem, but often requires vast amounts of data to generalize toward new chemical species. (c) In our approach, features are extracted from a highly coarse-grained QM simulation to capture essential physical interactions. An equivariant neural network efficiently learns the mapping, yielding improved  transferability at an evaluation speed that is competitive to Atomistic ML methods.  (d) Characteristics of the atomic orbital features considered in OrbNet-Equi. Every pair of atoms $(A, B)$ is mapped to a block in the feature matrix, with the row dimension of the block matching the atomic orbitals of the source atom $A$ and the column dimension matching the atomic orbitals of the destination atom $B$. (e) OrbNet-Equi is equivariant with respect to isometric basis transformations on the atomic orbitals (Equations~\ref{eq:basis_rot}-\ref{eq:rotsym}), yielding consistent predictions (illustrated as the dipole moment vector of a HSF molecule) at different viewpoints.}
    \label{fig:features}
\end{figure}

\begin{figure}[h!]
    \centering
    \includegraphics[width=\textwidth]{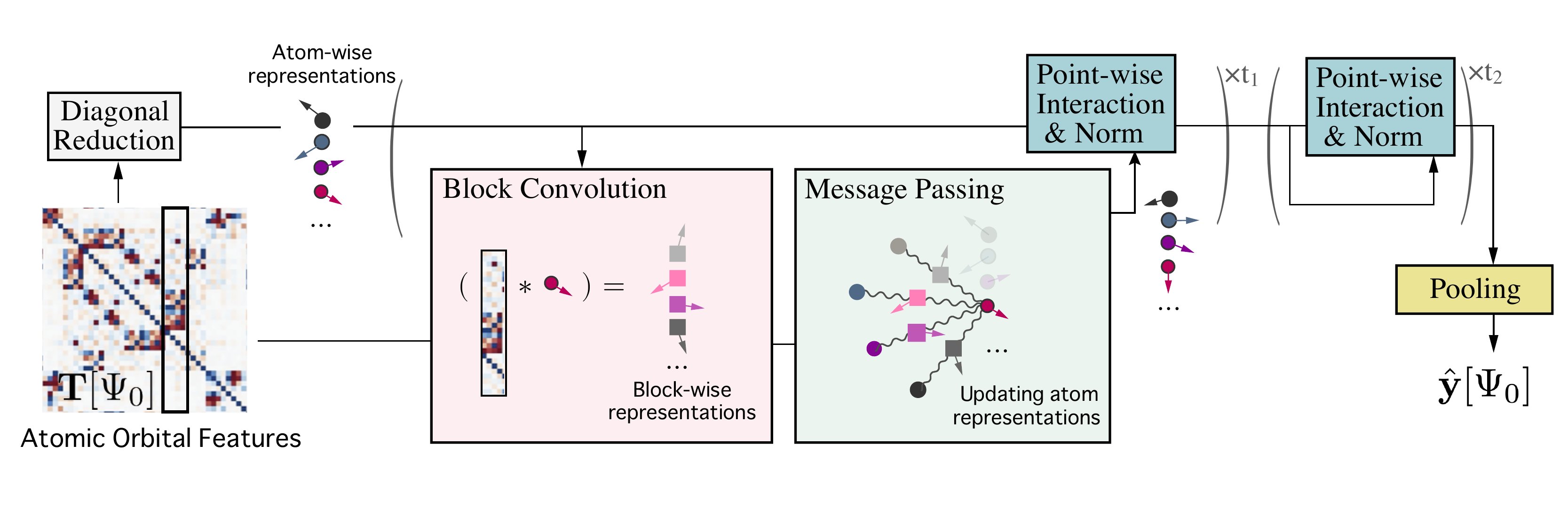}
    \caption{Schematic illustration of the OrbNet-Equi method. The input atomic orbital features $\mathbf{T}[{\Psi_{0}}]$ are obtained from a low-fidelity QM simulation. A neural network termed UNiTE first initializes atom-wise representations through the diagonal reduction module, and then updates the representations through stacks of block convolution, message passing, and point-wise interaction modules. A programmed pooling layer reads out high-fidelity  property predictions $\hat{\mathbf{y}}$ based on the final representations. Neural network architecture details are provided in Methods~\ref{sec:unite}.}
    \label{fig:architect}
\end{figure}

\begin{figure}[h!]
  \centering
  \includegraphics[width=0.8\linewidth]{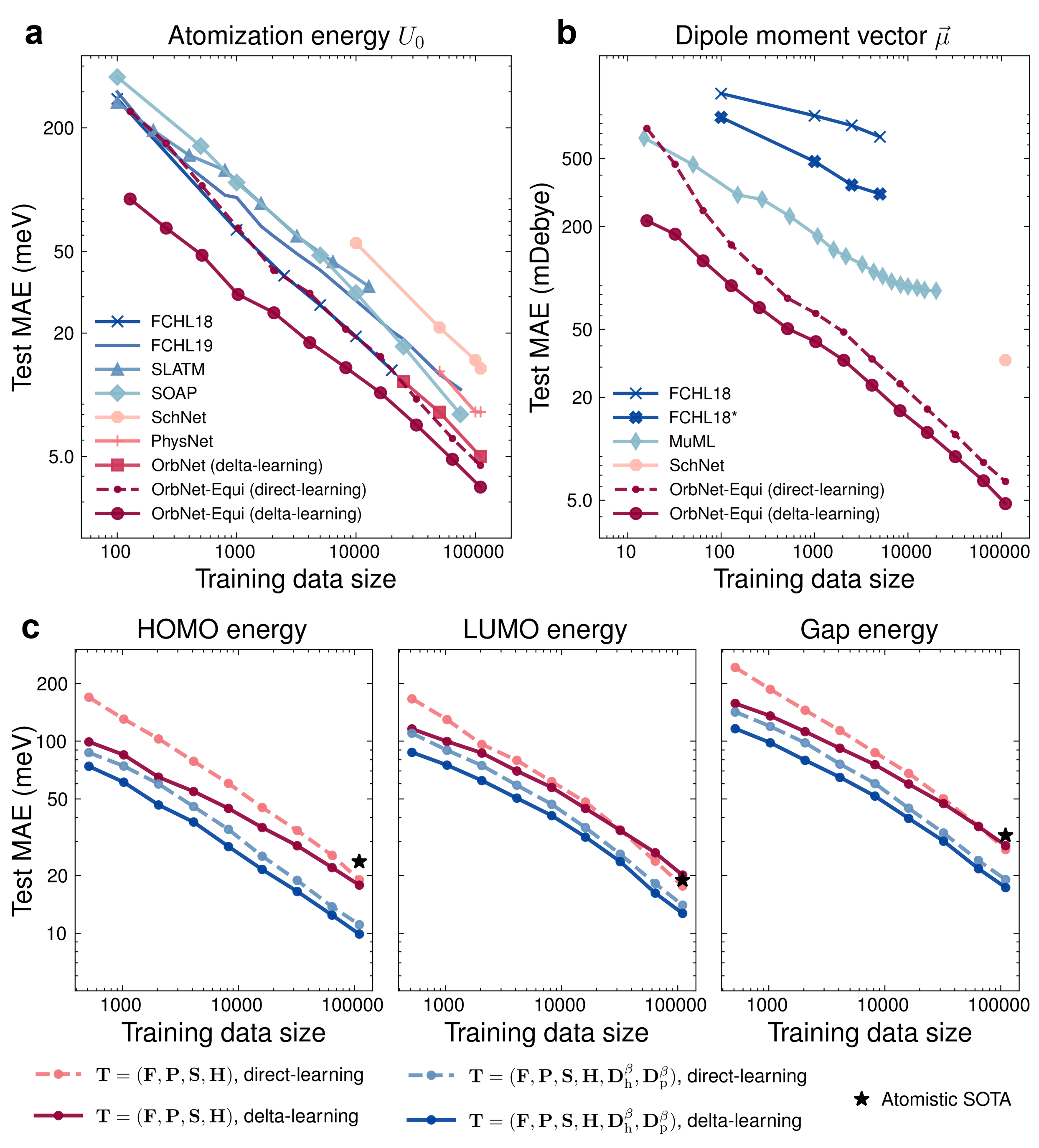}
\caption{Model performance on the QM9 dataset. (a-b) Test mean absolute error (MAE) of OrbNet-Equi is shown as functions of the number of training samples, along with previously reported results from task-specific ML methods (FCHL18\cite{Faber2018}, FCHL19\cite{Christensen2019}, SLATM\cite{huang2017efficient}, SOAP\cite{bartok2017machine}, FCHL18*\cite{christensen2019operators}, MuML\cite{veit2020predicting}) and deep-learning-based methods (SchNet\cite{schutt2017schnet}, PhysNet\cite{unke2019physnet}, OrbNet~\cite{orbnet1}) for targets (a) electronic energy $U_0$ and (b) molecular dipole moment vector $\vec{\mu}$ on the QM9 dataset. Results for OrbNet-Equi models trained with direct-learning and delta-learning are shown in dashed and solid lines, respectively. 
(c) Incorporating energy-weighted density matrices to improve data efficiency on learning frontier orbital properties. The HOMO, LUMO, and HOMO-LUMO gap energy test MAEs of OrbNet-Equi are shown as functions of the number of training samples. For models with the default feature set (red curves), the reduction in test MAE for delta-learning over direct-learning models gradually diminishes as the training data size grows. The LUMO and gap energy MAE curves exhibit a crossover around 32k-64k training samples, thereafter direct-learning models outperform delta-learning models. In contrast, when the energy-weighted density matrix features are supplied (blue curves), the test MAE curves between direct-learning and delta-learning models remain gapped when the training data size is varied. The black stars indicate the lowest test MAEs achieved by Atomistic ML methods (SphereNet~\cite{spherenet}) trained with 110k samples.
} 
\label{fig:qm9}
\end{figure}

\begin{figure}[h!]
    \centering
    \includegraphics[width=0.8\linewidth]{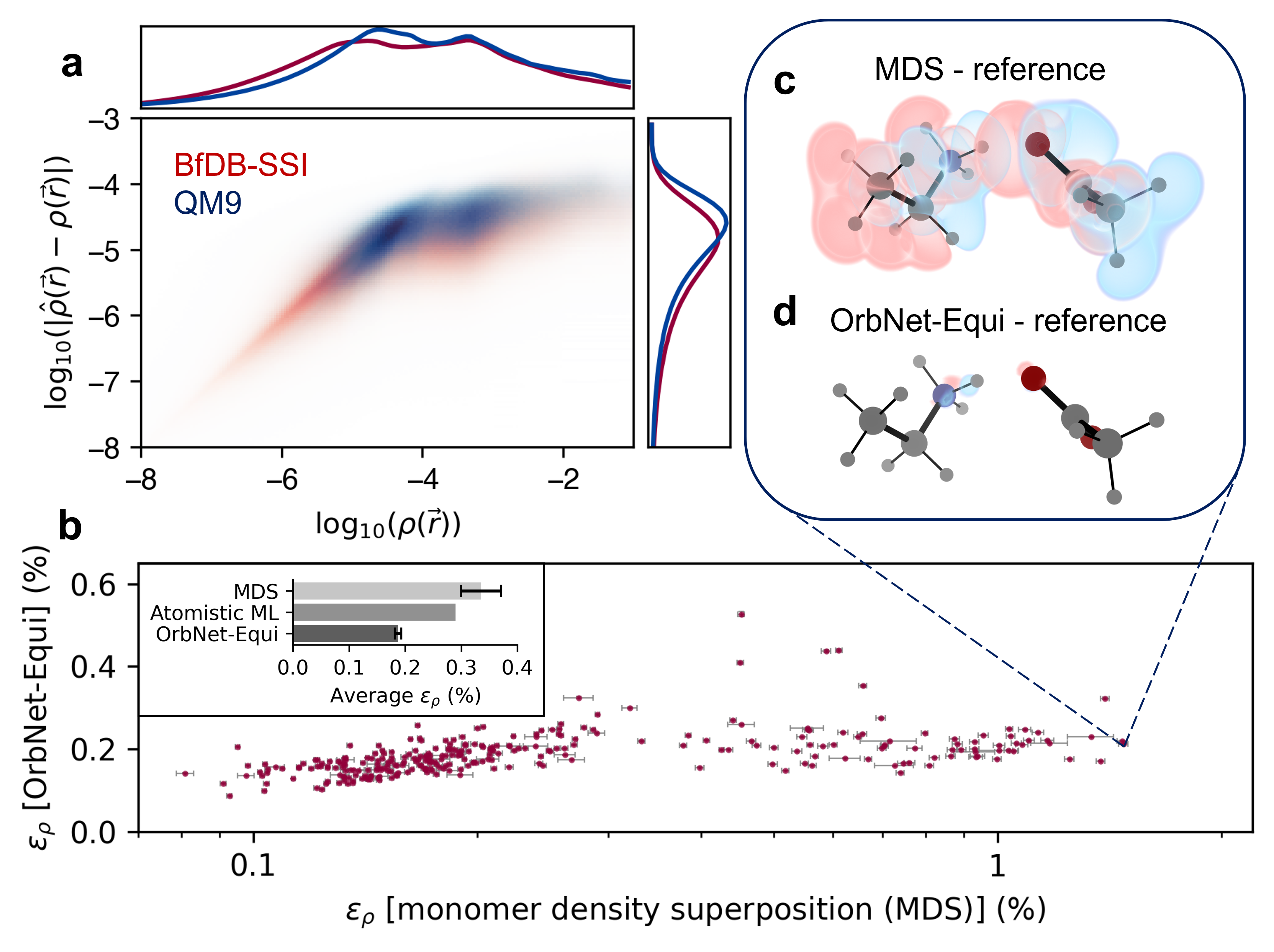}
    \caption{Learning electron charge densities for organic and biological motif systems. (a) 2D heatmaps of the log-scale reference density $\rho(\vec{r})$ and the log-scale OrbNet-Equi density prediction error $\lvert\hat{\rho}(\vec{r})-\rho(\vec{r})\rvert$ (both in $a_0^{-3}$). The heatmaps are calculated by sampling real-space query points $\vec{r}\in\mathbb{R}^3$ for all molecules in the (red) BfDB-SSI test set and (blue) QM9 test set. The nearly-linear relationship for $\log_{10}(\rho(\vec{r}))<-4$ low-density regions reveals that OrbNet-Equi-predicted densities possess a physical long-range decay behavior. Distributions of $\log_{10}(\rho(\vec{r}))$ and $\log_{10}(\lvert\hat{\rho}(\vec{r})-\rho(\vec{r})\rvert)$ are plotted within the marginal charts. (b) The $L^1$ density errors $\varepsilon_{\rho}$ of OrbNet-Equi are plotted against the $\varepsilon_{\rho}$ of densities obtained through monomer density superposition (MDS), across the BfDB-SSI test set. Error bars mark the 99\% confidence intervals of $\varepsilon_{\rho}$ for individual samples. The inset figure shows the average $\varepsilon_{\rho}$ for MDS, an Atomistic ML method~\cite{fabrizio2019electron}, and OrbNet-Equi predictions on the BfDB-SSI test set. OrbNet-Equi yields the lowest average prediction error and consistently produces accurate electron densities for cases where inter-molecular charge transfer is substantial. (c-d)  Visualization of density deviation maps 
    for (c) MDS and (d) OrbNet-Equi-predicted densities on the $\textrm{Glu}^{-}/\textrm{Lys}^{+}$ system (SSI-139GLU-144LYS-1), a challenging example from the BfDB-SSI test set. Red isosurfaces correspond to $\Delta\rho = -0.001\ a_0^{-3}$ and blue isosurfaces correspond to $\Delta\rho = +0.001\ a_0^{-3}$, where $\Delta\rho$ is the model density subtracted by the DFT reference density.}
    \label{fig:density}
\end{figure}

\begin{figure}[h!]
    \centering
    \includegraphics[width=0.7\linewidth]{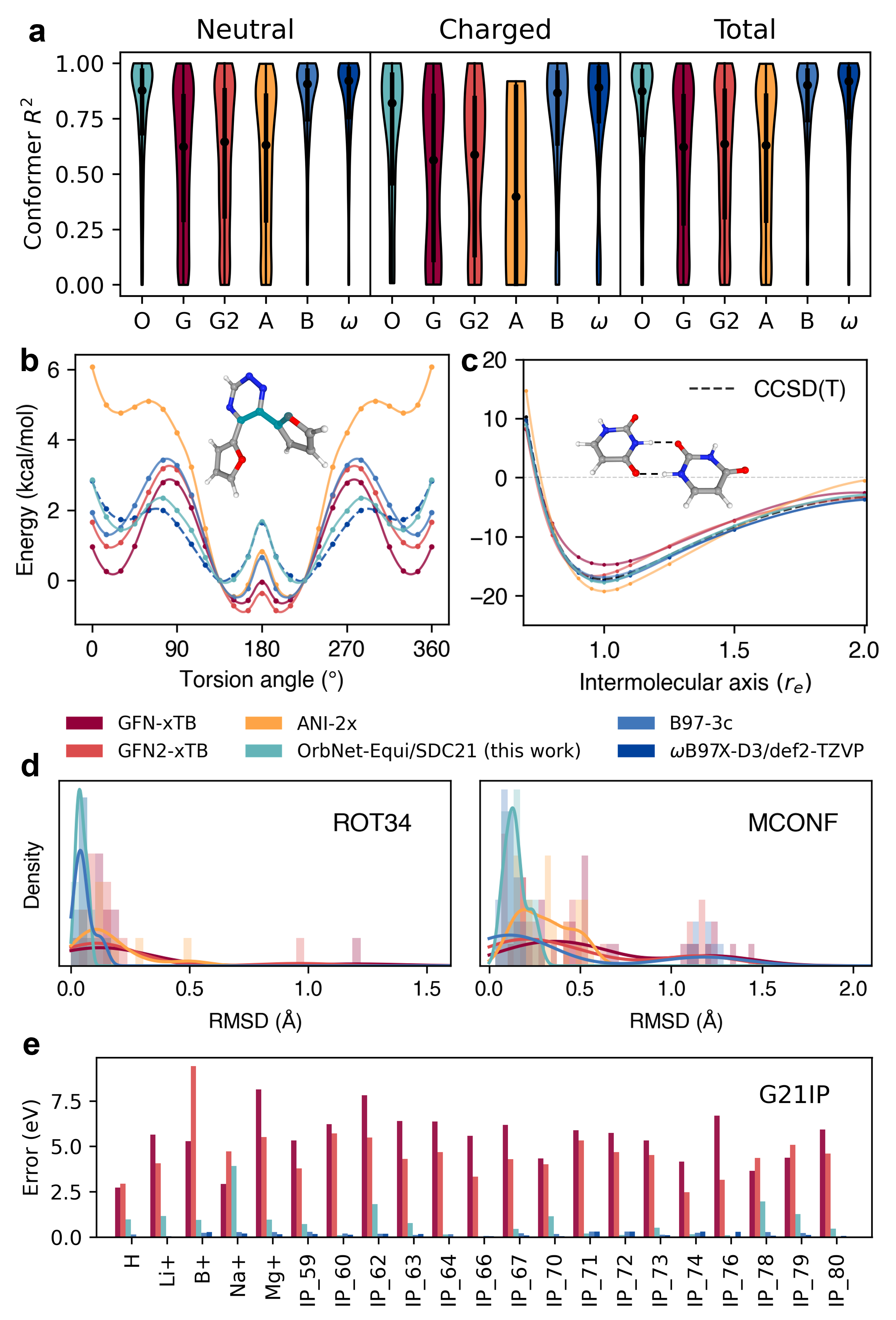}
    \caption{OrbNet-Equi/SDC21 infers diverse downstream properties at an efficiency of semi-empirical tight-binding calculations. (a) Conformer energy ranking on the Hutchison dataset of drug-like molecules. The horizontal axis is labelled with acronyms indicating each method (O: OrbNet-Equi/SDC21 (this work); G: GFN-xTB; G2: GFN2-xTB; A: ANI-2x; B: B97-3c; $\omega$: $\omega$B97X-D3/def2-TZVP). The y-axis corresponds to the molecule-wise $R^2$ between predictions and the reference (DLPNO-CCSD(T)) conformer energies. Violin plots display the distribution of $R^2$ scores for each method over the (left) neutral, (middle) charged, and (right) all molecules from the Hutchison dataset. Medians and first/third quantiles are shown as black dots and vertical bars. (b) A torsion profiles example from the TorsionNet500 benchmark. All predicted torsion scans surfaces are aligned to the true global minima of the highest level of theory ($\omega$B97X-D3/def2-TZVP) results, with spline interpolations.  (c) A uracil-uracil base pair example for non-covalent interactions. The dimer binding energy curves are shown as functions of the intermolecular axis ($r_e$) where $r_e=1.0$ corresponds to the distance of optimal binding energy. (d) Geometry optimization results on the (left) ROT34 and (right) MCONF datasets. Histograms and kernel density estimations of the symmetry-corrected RMSD scores (Methods~\ref{si_summary}) with respect to the reference DFT geometries are shown for each test dataset. (e) Evidence of zero-shot model generalization on radical systems. OrbNet-Equi/SDC21 yields prediction errors drastically lower than semi-empirical QM methods for adiabatic ionization potential on the G21IP dataset, achieving accuracy comparable to DFT on 7 out of 21 test cases.}
    \label{fig:6}
\end{figure}

\begin{figure}[h!]
    \centering
    \includegraphics[width=\linewidth]{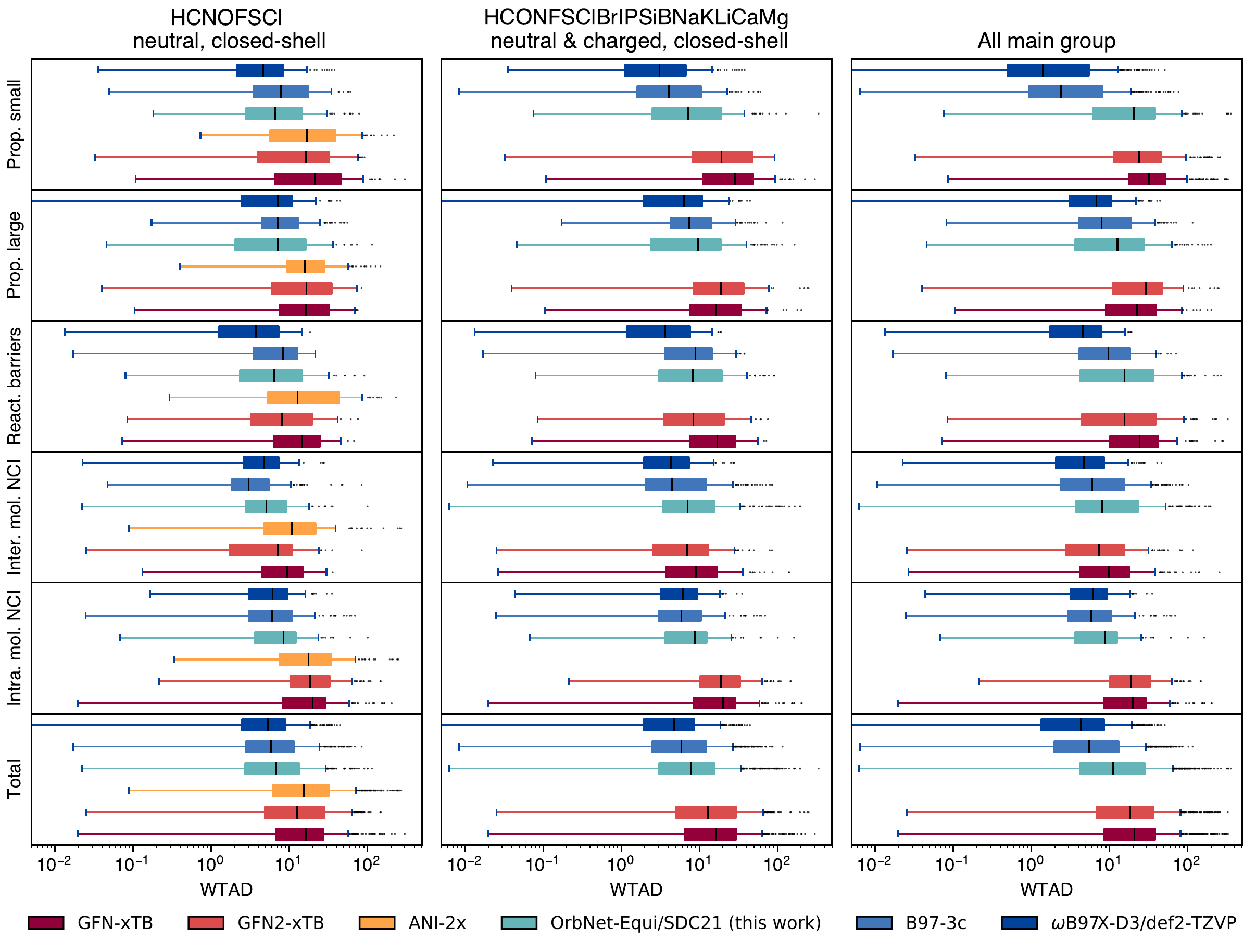}
    \caption{Assessing model performance on tasks from the GMTKN55 challenge. Box plots depict the distributions of task-difficulty-weighted absolution deviations (WTAD, see Methods~\ref{si_summary}) filtered by chemical elements and electronic states (a) supported by the ANI-2x model; (b) appeared in the dataset used for training OrbNet-Equi/SDC21; (c) all reactions. Statistics are categorized by each class of tasks in the GMTKN55 benchmark, as shown in y-axis labels. Prop. small: Basic properties and reaction energies for small systems; Prop. large: Reaction energies for large systems and isomerisation reactions; React. barriers: Reaction barrier heights; Inter. mol. NCI: Intermolecular noncovalent interactions; Intra. mol. NCI: Intramolecular noncovalent interactions; Total: total statistics of all tasks.}
    \label{fig:7}
\end{figure}

\clearpage


\newpage
\appendix

\setcounter{section}{0}
\setcounter{table}{0}
\setcounter{figure}{0}
\setcounter{equation}{0}
\setcounter{definition}{0}
\setcounter{theorem}{0}
\setcounter{corollary}{0}
\renewcommand{\thesection}{S\arabic{section}}
\renewcommand{\thetable}{S\arabic{table}}
\renewcommand{\thefigure}{S\arabic{figure}}
\renewcommand{\theequation}{S\arabic{equation}}
\renewcommand{\thedefinition}{S\arabic{definition}}
\renewcommand{\thetheorem}{S\arabic{theorem}}
\renewcommand{\thecorollary}{S\arabic{corollary}}
\allowdisplaybreaks

\section{Additional neural network details}
\label{si_implementation}

\subsection{Efficient GPU evaluation of spherical harmonics and Clebsch-Gordan coefficients}
\label{si_o3}

All $\mathrm{O(3)}$-representation related operations are implemented through element-wise operations on arrays and gather-scatter operations, without the need of recursive computations that can be difficult to parallelize on GPUs at runtime.  
The real spherical harmonics (RSHs) are computed based on Equations 6.4.47-6.4.50 of \cite{helgaker2014molecular}, which reads:
\begin{align}
    Y_{lm}(\vec{r}) &= N^\mathrm{S}_{lm} \sum_{t=0}^{[(l-\lvert m \rvert /2)]} \sum_{u=0}^{t} \sum_{v=v_m}^{[\lvert m\rvert/2-v_m]+v_m} C_{tuv}^{lm} (\frac{x}{\lVert r \rVert})^{2t+\lvert m \rvert-2(u+v)} (\frac{y}{\lVert r \rVert})^{2(u+v)} (\frac{z}{\lVert r \rVert})^{l-2t-\lvert m \rvert} \\
    C^{lm}_{tuv} &= (-1)^{t+v-v_m} (\frac{1}{4})^t \binom{l}{t} \binom{l-t}{\lvert m \rvert + t} \binom{t}{u} \binom{\lvert m \rvert}{2v} \\
    N^\mathrm{S}_{lm} &= \frac{1}{2^{\lvert m \rvert}l!} \sqrt{\frac{2(l+\lvert m \rvert)!(l-\lvert m \rvert)!}{2^{\delta_{0m}}}} \\
v_m &=
\begin{cases}
0 & \text{if  } m \geq 0 \\
\frac{1}{2} & \text{if  } m<0 \\
\end{cases}
\end{align}
where $[\cdot]$ is the floor function. The above scheme only requires computing element-wise powers of 3D coordinates and a linear combination with pre-tabulated coefficients. 
The Clebsch-Gordan (CG) coefficients are first tabulated using their explicit expressions for complex spherical harmonics (CSHs) based on Equation 3.8.49 of Ref.~\citenum{sakurai1995modern}, and are then converted to RSH CG coefficients with the transformation matrix between RSHs and CSHs~\cite{blanco1997evaluation}.






\subsection{Multiple input channels} UNiTE is naturally extended to inputs that possess extra feature dimensions, as in the case of AO features $\mathbf{T}$ described in Section \ref{si_xtb} the extra dimension equals the cardinality of selected QM operators. Those stacked features is processed by a learnable linear layer $\mathbf{W}^{\mathrm{in}}$ 
resulting in a fixed-size channel dimension. Each channel is then shared among a subset of convolution channels (indexed by $i$), instead of using one convolution kernel for all channels $i$. 
For the numerical experiments of this work, $\mathbf{T}$ are mixed into $I$ input channels by $\mathbf{W}^{\mathrm{in}}$ and we assign a convolution channel to each input channel.

\subsection{Restricted summands in Clebsch-Gordan coupling} For computational efficiency, in the Clebsch-gordan coupling \eqref{eq:cgp} (i.e., \eqref{eq:si_cgp}) of a point-wise interaction block, we further restrict the angular momentum indices $(l_1, l_2)$ within the range $\{ (l_1, l_2); l_1+l_2 \leq l_\mathrm{max}, l_1 \leq n, l_2 \leq n \}$ where $l_\mathrm{max}$ is the maximum angular momentum considered in the implementation.


\subsection{Incorporating geometric information} Because the point cloud of atomic coordinates $\mathbf{x}$ is available in addition to the atomic-orbital-based inputs $\mathbf{T}$, we incorporated such geometric information through the following modified message-passing scheme to extend \eqref{eq:mp}:
\begin{equation}
    \label{eq:qc_accu}
    (\tilde{\mathbf{m}}_A^t)_{lm} = \sum_{B\neq A} \bigoplus\limits_{i,j} \Big( (\mathbf{m}_{AB}^{t,i})_{lm} + Y_{lm}(\hat{x}_{AB}) \big( \mathbf{W}^{l,t}_{i} \lvert \lvert \mathbf{m}_{AB}^{t,i}\lvert \lvert  \big) \Big) \cdot {\alpha}_{AB}^{t,j} 
\end{equation}
where $Y_{lm}$ denotes a spherical harmonics of degree $l$ and order $m$, $\hat{x}_{AB}\defeq \frac{\vec{x}_{AB}}{\lvert \lvert \vec{x}_{AB}\lvert \lvert }$ denotes the direction vector between atomic centers A and B, and 
$\mathbf{W}^{l,t}_{i}$ are learnable linear functions.

\subsection{Pooling layers}
\label{si_pooling}

We define schemes for learning different classes of chemical properties with OrbNet-Equi 
without modifying the base UNiTE model architecture. We use $A$ to denote an atom index, $\lvert A\lvert $ to denote the total number of atoms in the molecule, $z_A \in \mathbb{N}^{+}$ to denote the atomic number of atom $A$, and $\vec{x}_A \in \mathbb{R}^{3}$ to denote the atomic coordinate of atom $A$.

\subsubsection{Energetic properties} 
\label{sec:pooling_energy}
A representative target in this family is the molecular electronic energy $E(\mathbf{x})$ (i.e., $U_0$ in the convention of QM9), which is rotation-invariant and proportional to the system size (i.e., extensive). The pooling operation is defined as:
\begin{equation}
    y_\theta = \sum_A \mathbf{W}_{\mathrm{o}} \cdot \lvert \lvert \mathbf{h}^{t_f}_A\lvert \lvert  + b^{\mathrm{o}}_{z_A} 
\end{equation}
which is a direct summation over atom-wise contributions. $\mathbf{W}_{\mathrm{o}}$ is a learnable linear layer and $b^{\mathrm{o}}_{z_A}$ are learnable biases for each atomic number $z$. To account for nuclei contributions to molecular energies, 
we initialize $b^{\mathrm{o}}_{z}$ from a linear regression on the training labels with respect to $\{z_A\}$ to speed up training on those tasks. This scheme is employed for learning $U_0$, $U$, $H$, $G$, ZPVE and $c_v$ on QM9, the energies part in MD17 and for the OrbNet-Equi/SDC21 model. 


\subsubsection{Dipole moment $\vec{\mu}$}
\label{sec:pooling_dipole}
The dipole moment $\vec{\mu}$ can be thought as a vector in $\mathbb{R}^{3}$. It is modelled as a combination of atomic charges $q_A$ and atomic dipoles $\vec{\mu}_A$, and the pooling operation is defined as
\begin{align}
    \vec{\mu}_{\theta} &= \sum_A(\vec{R}_A \cdot q_A + \vec{\mu}_A) \\ 
    q_A &= q'_A - \Delta q \quad \textrm{where} \quad \Delta q \defeq \frac{\sum_A q'_A}{\lvert A\lvert } \label{eq:compensation} \\
    q'_A &\defeq \mathbf{W}_{\mathrm{o,0}} \cdot (\mathbf{h}^{t_f}_A)_{l=0,p=1} + b^{\mathrm{o}}_{z_A}  \\
    (\vec{\mu}_A)_{m} &\defeq \mathbf{W}_{\mathrm{o,1}} \cdot (\mathbf{h}^{t_f}_A)_{l=1,p=1,m} \quad \textrm{where} \quad m\in \{ x,y,z\}
\end{align}
where $\mathbf{W}_{\mathrm{o,0}}$ and $\mathbf{W}_{\mathrm{o,1}}$ are learnable linear layers. Equation \ref{eq:compensation} ensures the translation invariance of the prediction through charge neutrality. 


Note that OrbNet-Equi is trained by directly minimizing a loss function $\mathcal{L}(\vec{\mu}, \vec{\mu}_{\theta})$ between the ground truth and the predicted molecular dipole moment vectors. For the published QM9 reference labels~\cite{qm9} only the dipole norm $\mu \defeq \lvert \lvert \vec{\mu}\lvert \lvert $ is available; we use the same pooling scheme to readout $\vec{\mu}_{\theta}$ but train on $\mathcal{L}(\mu, \lvert \lvert \vec{\mu}_{\theta}\lvert \lvert )$ instead to allow for comparing to other methods in Table~\ref{table:qm9}.

\subsubsection{Polarizability $\alpha$} For isotropic polarizability $\alpha$, the pooling operation is defined as
\begin{align}
    \alpha_{\theta} &= \sum_A(\alpha_A + \vec{R}_A \cdot \vec{p}_A) \\ 
    \alpha_A &\defeq \mathbf{W}_{\mathrm{o,0}} \cdot (\mathbf{h}^{t_f}_A)_{l=0,p=1} + b^{\mathrm{o}}_{z_A}  \\
    \vec{p}_A &= \vec{p'}_A - \Delta \vec{p} \quad \textrm{where} \quad \Delta \vec{p} \defeq \frac{\sum_A \vec{p'}_A}{\lvert A\lvert } \\
    (\vec{p'}_A)_{m} &\defeq \mathbf{W}_{\mathrm{o,1}} \cdot (\mathbf{h}^{t_f}_A)_{l=1,p=1,m} \quad \textrm{where} \quad m\in \{ x,y,z\}
\end{align}

\subsubsection{Molecular orbital properties} 
\label{si_mo_feats}

For frontier molecular orbital energies, a \textit{global-attention} based pooling is employed to produce intensive predictions:
\begin{align}
    a_A &= \mathrm{Softmax}(\mathbf{W}_{\mathrm{a}} \cdot \lvert \lvert \mathbf{h}^{t_f}_A\lvert \lvert ) \defeq \frac{\mathbf{W}_{\mathrm{a}} \cdot \lvert \lvert \mathbf{h}^{t_f}_A\lvert \lvert }{\sum_A \mathbf{W}_{\mathrm{a}} \cdot \lvert \lvert \mathbf{h}^{t_f}_A\lvert \lvert } \\
    y_\theta &= \sum_A a_A \cdot (\mathbf{W}_{\mathrm{o}} \cdot \lvert \lvert \mathbf{h}^{t_f}_A\lvert \lvert  + b^{\mathrm{o}}_{z_A} )
\end{align}
where $\mathbf{W}_{\mathrm{a}}$ and $\mathbf{W}_{\mathrm{o}}$ are learnable linear layers and $b^{\mathrm{o}}_{z_A}$ are learnable biases for each atomic number $z$. Similar to energy tasks, we initialize $b^{\mathrm{o}}_{z}$ from a linear fitting on the targets to precondition training.


We take the difference between the predicted HOMO energies ($\epsilon_{\mathrm{HOMO}}$) and LUMO energies ($\epsilon_{\mathrm{LUMO}}$) as the HOMO-LUMO Gap ($\Delta\epsilon$) predictions. 



\subsubsection{Electronic spatial extent $\langle R^{2} \rangle$} The pooling scheme for $\langle R^{2} \rangle$ is defined as:
\begin{align}
    \langle R^{2} \rangle_{\theta} &= \sum_A(\lvert \lvert \vec{R}_A - \vec{R}_0\lvert \lvert ^{2} \cdot q_A + s_A) \\ 
    \vec{R}_0 &\defeq \frac{\sum_A (\vec{R}_A \cdot q_A + \vec{\mu}_A) }{\sum_A q_A} \\
    q_A &\defeq \mathbf{W}_{\mathrm{o,0}} \cdot (\mathbf{h}^{t_f}_A)_{l=0,p=1} + b^{\mathrm{o}}_{z_A}  \\
    (\vec{\mu}_A)_{m} &\defeq \mathbf{W}_{\mathrm{o,1}} \cdot (\mathbf{h}^{t_f}_A)_{l=1,p=1,m} \quad \textrm{where} \quad m\in \{ x,y,z\} \\
    s_A &\defeq \mathbf{W}_{\mathrm{o,2}} \cdot (\mathbf{h}^{t_f}_A)_{l=0,p=1}
\end{align}
where $\mathbf{W}_{\mathrm{o,0}}$, $\mathbf{W}_{\mathrm{o,1}}$ and $\mathbf{W}_{\mathrm{o,2}}$ are learnable linear layers.

\subsubsection{Electron densities $\rho(\vec{r})$} 
\label{sec:dens_pooling}

Both the ground truth and predicted electron densities $\rho(\vec{r})$ are represented as a superposition of atom-centered density fitting basis $\{\chi\}$,
\begin{equation}
    \rho(\vec{r}) = \sum_A^{N_\mathrm{atom}} \sum_{l}^{l_\mathrm{max}(z_A)} \sum_{m=-l}^{l} \sum_{n}^{n_\mathrm{max}(z_A, l)} d_{A}^{n l m} \ \chi_A^{n l m}(\vec{r}) 
\end{equation}
similar to the approach employed in~\cite{fabrizio2019electron}; here we use the def2-TZVP-JKFIT density fitting basis for $\{\chi\}$. Computational details regarding 
obtaining the reference density coefficients $d_{A}^{n l m}$ are given in Section~\ref{si_qcore}, and the training loss function is defined in SI~\ref{si_training_density}. The pooling operation to predict $\rho(\vec{r})$ from UNiTE is defined as
\begin{align}
\label{eq:dens_pooling}
    \hat{d}_{A}^{n l m} \defeq \big( \mathbf{W}^{\mathrm{d}}_{z_A, l} \cdot (\mathbf{h}^{t_f}_A)_{l,p=1,m} \big)_n
\end{align}
where $\mathbf{W}^{\mathrm{d}}_{z, l}$ are learnable weight matrices specific to each atomic number $z$ and angular momentum index $l$, and $z_A$ denotes the atomic number of atom $A$. This atom-centered expansion scheme compactly parameterizes the model-predicted density $\hat{\rho}(\vec{r})$. We stress that all UNiTE neural network parameters except for this density pooling layer \eqref{eq:dens_pooling} are independent of the atomic numbers $z$.

\subsection{Time complexity}
The asymptotic time complexity of UNiTE model inference is $\mathcal{O}(B N I )$, where $B$ is the number of 
non-zero elements in $\mathbf{T}$, and
$I$ denotes the number of convolution channels in a convolution block \eqref{eq:conv}.
This implies UNiTE scales as $\mathcal{O}(N (nd)^{N})$ if the input is dense, but can achieve a lower time complexity for sparse inputs, e.g., when long-range cutoffs are applied.
We note that in each convolution block \eqref{eq:conv} the summand $T_{\vec{u},\vec{v}} \cdot \prod_{j=2}^{N}    \big( \rho_{u_j}(\mathbf{h}^{t}_{u_j}) \big)^i_{v_j} \neq 0$ only if the tensor coefficient $T_{\vec{u}, \vec{v}} \neq 0$; therefore \eqref{eq:conv} can be exactly evaluated using $((N-1) B I)$ arithmetic operations. In each message passing block \eqref{eq:mp} the number of arithmetic operations scales as $\mathcal{O}(B’I)$ where $B'$ is the number of indices $\vec{u}$ such that $\mathbf{m}^{t}_{\vec{u}} \neq 0$, and $B’ \leq B$. The embedding block $\phi$ and the point-wise interaction block $\psi$ has $\mathcal{O}(d)$ time complexities since they act on each point independently and do not contribute to the asymptotic time complexity.


\section{Theoretical results}
\label{si_theory}


We formally introduce the problem of interest, restate the definitions of the building blocks of UNiTE (Methods~\ref{sec:unite}) using more formal notations, and prove the theoretical results claimed in this work. We first generalize the input data domain to a generic class of tensors beyond quantum chemistry quantities; for brevity we call such inputs \textit{N-body tensors}.  


\subsection{$N$-body tensors (informal)}

We are interested in 
a class of 
tensors $\mathbf{T}$, for which each  
sub-tensor $\mathbf{T}(u_1, u_2, \cdots, u_N)$
describes relation among a collection of $N$ geometric objects 
defined in an $n$-dimensional physical space. For simplicity, we will first introduce the tensors of interest using a special case based on point clouds embedded in the $n$-dimensional Euclidean space, associating a (possibly different) set of orthogonal basis with each point's neighbourhood. 
In this setting, our main focus is the change of the order-N tensor's coefficients when applying 
$n$-dimensional rotations and reflections 
to the local reference frames. 


\begin{definition}[$N$-body tensor] \label{def:nbti}
Let $\{ \mathbf{x}_1, \mathbf{x}_2, \cdots, \mathbf{x}_d \}$ be $d$ points in $\mathbb{R}^{n}$ for each $u \in \{1, 2, \cdots, d\}$. For each point index $u$, we define an orthonormal basis (local reference frame) $\{\mathbf{e}_{u;v_u} \}$ centered at $\mathbf{x}_u$\footnotemark, and denote the space spanned by the basis as $V_u \defeq \mathrm{span}(\{\mathbf{e}_{u;v_u} \}) \subseteq \mathbb{R}^{n}$.  We consider a tensor $\hat{\mathbf{T}}$ defined via $N$-th direct products of the `concatenated' basis $\{\mathbf{e}_{u;v_u}; (u, v_u) \}$:
\footnotetext{We additionally allow for $\mathbf{0} \in \{\mathbf{e}_{u;v_u} \}$ to represent features in $\mathbf{T}$ that transform as scalars.}
\begin{equation}
    \hat{\mathbf{T}} \defeq \sum_{\vec{u}, \vec{v}} T\big( (u_1;v_1), (u_2;v_2), \cdots, (u_N;v_N) \big) \ \mathbf{e}_{u_1; v_1} \otimes \mathbf{e}_{u_2; v_2} \otimes \cdots \otimes \mathbf{e}_{u_N;{v_N}}
\end{equation}
$\hat{\mathbf{T}}$ is a tensor of order-$N$ and is an element of $(\bigoplus_{u=1}^{d} V_u)^{\otimes N}$. We call its coefficients $\mathbf{T}$ an \textit{N-body tensor} if $\mathbf{T}$ is invariant to global translations ($\forall \, \mathbf{x}_0 \in \mathbb{R}^{n}, \mathbf{T}[\mathbf{x}] = \mathbf{T}[\mathbf{x}+\mathbf{x}_0]$, and is symmetric:
\begin{equation}
    T\big( (u_1;v_1), (u_2;v_2), \cdots, (u_N;v_N) \big) = T\big( (u_{\sigma_1};v_{\sigma_1}), (u_{\sigma_2};v_{\sigma_2}), \cdots, (u_{\sigma_N};v_{\sigma_N}) \big)
\end{equation}
where $\sigma$ denotes arbitrary permutation on its dimensions $\{1, 2, \cdots, N\}$. Note that each sub-tensor, $\mathbf{T}_{\vec{u}}$, does not have to be symmetric.  The shorthand notation $\vec{u} := (u_1, u_2, \cdots, u_N)$ indicates a subset of $N$ points in $\{ \mathbf{x}_1, \mathbf{x}_2, \cdots, \mathbf{x}_d \}$ which then identifies a sub-tensor\footnotemark $\mathbf{T}_{\vec{u}} := \mathbf{T}(u_1, u_2, \cdots, u_N)$ in the $N$-body tensor $\mathbf{T}$; $\vec{v} \defeq (v_1, v_2, \cdots, v_N)$ index a coefficient $T_{\vec{u}}(v_1, v_2, \cdots, v_N) \defeq T\big( (u_1;v_1), (u_2;v_2), \cdots, (u_N;v_N) \big)$ in a sub-tensor $\mathbf{T}_{\vec{u}}$, where each index $v_j \in \{1, 2, \cdots, \mathrm{dim}(V_{u_j}) \}$ for $j \in \{1, 2, \cdots, N \}$. 
\end{definition}

\begin{figure}[h!]
    \centering
    \includegraphics[width=0.8\linewidth]{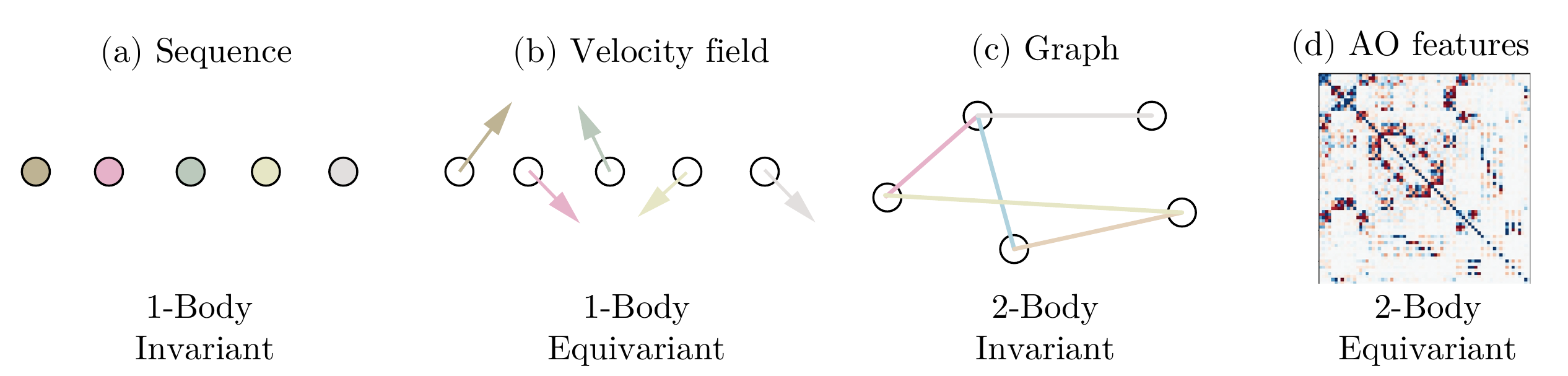}
    \caption{Examples of $N$-body tensors. }
    \label{fig:examples}
\end{figure}

\begin{figure}[h!]
    \centering
    \includegraphics[width=0.8\linewidth]{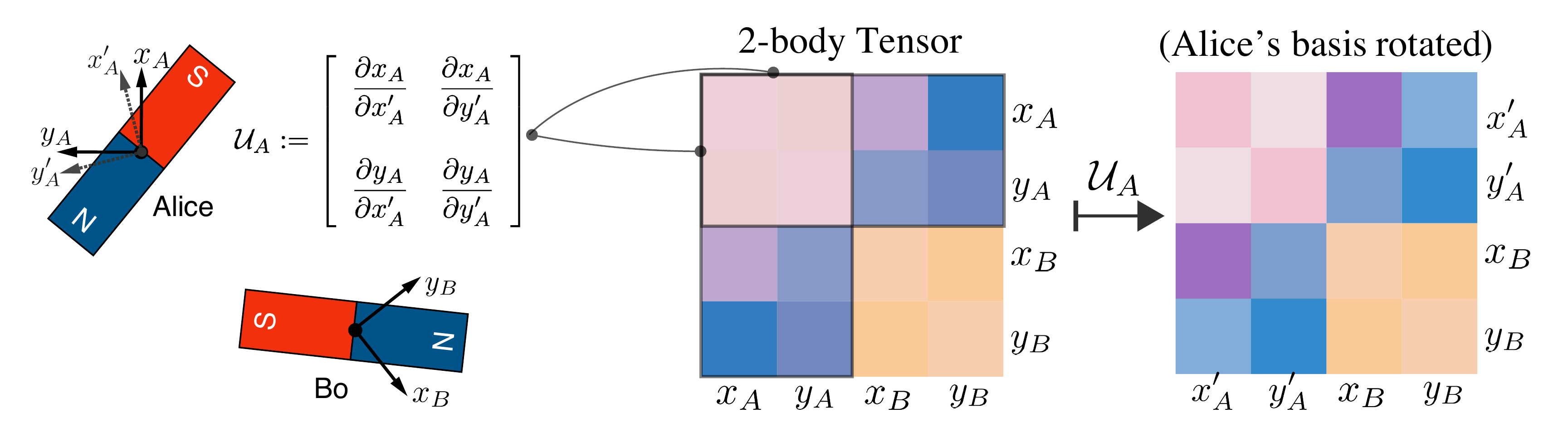}
    \caption{Illustrating an $N$-body tensor with $N=2$. Imagine Alice and Bo are doing experiments with two bar magnets without knowing each other's reference frame. The magnetic interactions depend on both bar magnets' orientations and can be written as a 2-body tensor. When Alice make a rotation on her reference frame, sub-tensors containing index $A$ are transformed by a unitary matrix $\mathcal{U}_A$, giving rise to the 2-body tensor coefficients in the transformed basis. We design neural network to be equivariant to all such local basis transformations.}
    \label{fig:magnet}
\end{figure}

\footnotetext{For example, if there are $d=5$ points defined in the 3-dimensional Euclidean space $\mathbb{R}^{3}$ and each point is associated with a standard basis $(x, y, z)$, then for the example of N=4, there are $5^{4}$ sub-tensors and each sub-tensor $\mathbf{T}_{(u_1, u_2, u_3, u_4)}$ contains $3^{4}=81$ elements with indices spanning from $xxxx$ to $zzzz$. In total, $\mathbf{T}$ contains $(5\times3)^{4}$ coefficients. The coefficients of $\mathbf{T}$ are in general complex-valued as formally discussed in Definition S2, but are real-valued for the special case introduced in Definition~\ref{def:nbti} .}

We aim to build neural networks $\mathcal{F}_{\theta} : (\bigoplus_{u=1}^{d} V_u)^{\otimes N} \rightarrow \mathcal{Y}$ that map 
$\hat{\mathbf{T}}$ to order-1 tensor- or scalar-valued outputs $\mathbf{y} \in \mathcal{Y}$. While $\hat{\mathbf{T}}$ 
is independent of the choice of local reference frame $\mathbf{e}_{u}$, its coefficients $\mathbf{T}$ (i.e. the $N$-body tensor) vary when rotating or reflecting the basis $\mathbf{e}_{u} \defeq \{\mathbf{e}_{u;v_u}; v_u \}$, i.e. acted by an element $\mathcal{U}_u \in \mathrm{O(n)}$.
Therefore, the neural network $\mathcal{F}_{\theta}$ should be constructed equivariant with respect to those reference frame transformations.

\subsection{Equivariance} 
For a map $f: \mathcal{V} \rightarrow \mathcal{Y}$ and a group $G$, $f$ is said to be $G$-equivariant if for all $g \in G$ and $\mathbf{v} \in \mathcal{V}$, $\varphi'(g) \cdot f(\mathbf{v}) = f(\varphi(g) \cdot \mathbf{v})$ where $\varphi(g)$ and $\varphi'(g)$ are the group representation of element $g$ on $\mathcal{V}$ and $\mathcal{Y}$, respectively.  In our case, the group $G$ is composed of 
(a) Unitary transformations $\mathcal{U}_u$ locally applied to basis: $\mathbf{e}_{u} \mapsto \mathcal{U}_u^{\dagger} \cdot \mathbf{e}_{u}$, which are rotations and reflections for $\mathbb{R}^{n}$. $\mathcal{U}_u$ induces transformations on tensor coefficients: $\mathbf{T}_{\vec{u}} \mapsto (\mathcal{U}_{u_1} \otimes \mathcal{U}_{u_2} \otimes \cdots \otimes  \mathcal{U}_{u_N}) \mathbf{T}_{\vec{u}}$, and an intuitive example for infinitesimal basis rotations in $N=2,n=2$ is shown in Figure \ref{fig:magnet}; 
(b) Tensor index permutations: $(\vec{u},\vec{v}) \mapsto \sigma(\vec{u},\vec{v})$; 
(c) Global translations: $\mathbf{x} \mapsto \mathbf{x}+\mathbf{x}_0$. 
For conciseness, we borrow the term $G$-equivariance to say $\mathcal{F}_{\theta}$ is equivariant to all the symmetry transformations listed above.





\subsection{$N$-body tensors}

Here we generalize the definition of N-body tensors to the basis of irreducible group representations instead of a Cartesian basis. The atomic orbital features discussed in the main text fall into this class, since the angular parts of atomic orbitals (i.e., spherical harmonics $Y_{lm}$) form the basis of the irreducible representations of group $\mathrm{SO(3)}$.

\begin{definition} \label{def:nbt}
 Let $G_1, G_2, \cdots, G_d$ denote unitary groups where $G_u \subset \mathrm{U(n)}$ are closed subgroups of $\mathrm{U(n)}$ for each $u\in \{1, 2, \cdots, d \}$. We denote $G \defeq G_1 \times G_2 \times \cdots \times G_d$. Let $(\pi_L, \mathbb{V}^L)$ denote a irreducible unitary representation of $\mathrm{U(n)}$ labelled by $L$. For each $u \in \{0, 1, \cdots, d\}$, we assume there is a finite-dimensional Banach space $V_u \simeq \bigoplus_{L} (\mathbb{V}^L)^{\oplus K_L}$ where $K_L \in \mathbb{N}$ is the multiplicity of $\mathbb{V}^L$ (e.g. the number of feature channels associated with representation index $L$), with basis $\{\bm{\pi}_{L, M} \}_u$ such that $\mathrm{span}(\{\bm{\pi}_{L, M, u} ; k, L, M\}) = V_u$ for each $u\in \{1, 2, \cdots, d \}$ and $k\in \{1, 2, \cdots, K_L \}$, and $\mathrm{span}(\{\bm{\pi}_{L, M, u} ; M\}) \simeq \mathbb{V}^{L}$ for each $u, L$. We denote $\mathcal{V}\defeq \bigoplus_{u} V_u$, and index notation $v\defeq (k, L, M)$. For a tensor $\hat{\mathbf{T}} \in \mathcal{V}^{\otimes N}$, we call the coefficients $\mathbf{T}$ of $\hat{\mathbf{T}}$ in the $N$-th direct products of basis  $\{\bm{\pi}_{L, M, u}; L, M, u \}$ an $N$-body tensor, if $\hat{\mathbf{T}} = \sigma( {\hat{\mathbf{T}}} )$ for any permutation $\sigma \in \mathrm{Sym}(N)$ (i.e. permutation invariant). 
\end{definition}

Note that the vector spaces $V_u$ do not need to be embed in the same space $\mathbb{R}^{n}$ as in the special case from Definition S1, but can be originated from general `parameterizations' $u \mapsto V_u$, e.g., coordinate charts on a manifold.

\begin{corollary}
If $V_u = \mathbb{C}^{n}$, $G_u = \mathrm{U(n)}$ and $\bm{\pi}_{L, M, u} = \mathbf{e}_M$ where $\{ \mathbf{e}_M \}$ is a standard basis of $\mathbb{C}^{n}$, then $\mathbf{T}$ is an $N$-body tensor if $\hat{\mathbf{T}}$ is permutation invariant.
\end{corollary}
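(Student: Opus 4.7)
The plan is to recognize this corollary as a direct verification that the concrete instantiation fits within the abstract framework of Definition S2. Under the given hypotheses, every assumption required by Definition S2 reduces to a well-known fact, so the conclusion (permutation invariance of $\hat{\mathbf{T}}$ suffices for $\mathbf{T}$ to be an $N$-body tensor) follows immediately by unpacking the definition.

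First, I would verify that $\mathbb{C}^{n}$, equipped with the defining action of $\mathrm{U(n)}$, admits the required decomposition $V_u \simeq \bigoplus_{L} (\mathbb{V}^L)^{\oplus K_L}$. The defining representation of $\mathrm{U(n)}$ on $\mathbb{C}^{n}$ is irreducible — this is standard, since $\mathrm{U(n)}$ acts transitively on vectors of fixed norm, so any nonzero vector generates all of $\mathbb{C}^{n}$ under the group action and no nontrivial invariant subspace can exist. Labelling this irreducible component by $L_0$, we have $\mathbb{V}^{L_0} \simeq \mathbb{C}^{n}$ with $K_{L_0}=1$ and $K_L=0$ for $L\neq L_0$. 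The standard basis $\{\mathbf{e}_M\}_{M=1}^{n}$ then fills the role of $\{\bm{\pi}_{L_0,M,u}\}_M$, satisfying both $\mathrm{span}(\{\bm{\pi}_{L_0,M,u}\}_M) = \mathbb{C}^{n} \simeq \mathbb{V}^{L_0}$ and $\mathrm{span}(\{\bm{\pi}_{L,M,u}; k,L,M\}) = V_u$. Trivially, $G_u = \mathrm{U(n)}$ is a closed subgroup of $\mathrm{U(n)}$.

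With all hypotheses of Definition S2 now in place, the definition applies verbatim: $\mathbf{T}$ is an $N$-body tensor precisely when $\hat{\mathbf{T}} = \sigma(\hat{\mathbf{T}})$ for every $\sigma \in \mathrm{Sym}(N)$. This is exactly the claim. The argument has no substantial obstacle — the only non-trivial ingredient is the irreducibility of the defining representation of $\mathrm{U(n)}$, which is a textbook fact. In spirit, the corollary is really a sanity check: for the simplest Cartesian setup the abstract isotypic decomposition collapses to a single summand, so the framework of Definition S2 specializes cleanly and the \emph{only} nontrivial requirement on $\mathbf{T}$ is permutation invariance.
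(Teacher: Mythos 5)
Your proposal is correct and follows essentially the same route as the paper: both arguments reduce the corollary to the irreducibility of the defining representation of $\mathrm{U(n)}$ on $\mathbb{C}^{n}$, so that the standard basis serves as the basis of a single irreducible summand and permutation invariance of $\hat{\mathbf{T}}$ is the only remaining condition in Definition~\ref{def:nbt}. The paper simply cites the irreducibility of fundamental representations where you supply a short transitivity argument; the content is the same.
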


\begin{proof} For $V_u = \mathbb{C}^{n}$, $\pi: G_u \rightarrow \mathrm{U}(\mathbb{C}^{n})$ is a fundamental representation of $\mathrm{U(n)}$. Since the fundamental representations of a Lie group are irreducible, it follows that $\{ \mathbf{e}_M \}$ is a basis of a irreducible representation of $\mathrm{U(n)}$, and $\mathbf{T}$ is an $N$-body tensor. \end{proof}

Similarly, when $V_u = \mathbb{R}^{n}$ and $G_u = \mathrm{O(n)} \subset \mathrm{U(n)}$, $\mathbf{T}$ is an $N$-body tensor if $\hat{\mathbf{T}}$ is permutation invariant. Then we can recover the special case based on point clouds in $\mathbb{R}^{n}$ in Definition~\ref{def:nbti}. 


Procedures for constructing complete bases for irreducible representations of $\mathrm{U(n)}$ with explicit forms are 
established~\cite{molev2002gelfand}. A special case is $G_u = \mathrm{SO(3)}$, for which a common construction of a complete set of $\{\bm{\pi}_{L, M} \}_u$ is using the spherical harmonics $ \bm{\pi}_{L, M, u} \defeq  Y_{lm} $; this is an example that polynomials $Y_{lm}$ can be constructed as a basis of square-integrable functions on the 2-sphere $L^{2}(S^{2})$ and consequently as a basis of the irreducible representations $(\pi_L, \mathbb{V}^L)$ for all $L$~\cite{hall2013quantum}. 



\subsection{Decomposition of diagonals $\mathbf{T}_u$}

\label{si_wigner}

We consider the algebraic structure of the diagonal sub-tensors $\mathbf{T}_u$, which can be understood from tensor products of irreducible representations.

First we note that for a sub-tensor $\mathbf{T}_{\vec{u}} \in V_{u_1} \otimes V_{u_2} \otimes \cdots \otimes V_{u_N}$, the action of $g \in G$ is given by
\begin{equation}
    g \cdot \mathbf{T}_{\vec{u}} = (\pi(g_{u_1}) \otimes \pi(g_{u_2}) \otimes \cdots \otimes \pi(g_{u_N}) )\mathbf{T}_{\vec{u}}
\end{equation}
for diagonal sub-tensors $\mathbf{T}_{u}$, this reduces to the action of a diagonal sub-group 
\begin{equation}
    g \cdot \mathbf{T}_{u} = (\pi(g_{u}) \otimes \pi(g_{u}) \otimes \cdots \otimes \pi(g_{u}) )\mathbf{T}_{u}
\end{equation}
which forms a representation of $G_u \in \mathrm{U}(n)$ on $V_u^{\otimes N}$. According to the isomorphism $V_u \simeq \bigoplus_{L} (\mathbb{V}^L)^{\oplus K_L}$ in Definition S2 we have $\pi(g_u) \cdot \mathbf{v} = \bigoplus_{L} U^{L}_{g_u} \cdot \mathbf{v}_{L}$ for $\mathbf{v} \in V_u$ where $\mathbf{v}_L \in \mathbb{V}^{L}$, more explicitly
\begin{equation}
    g \cdot \mathbf{T}_{u}(\vec{k}, \vec{L}) = ( U^{L_1}_{g_u} \otimes U^{L_2}_{g_u} \otimes \cdots \otimes U^{L_N}_{g_u} ) \, \mathbf{T}_{u}(\vec{k}, \vec{L})
\end{equation}
where we have used the shorthand notation $\mathbf{T}_{u}(\vec{k}, \vec{L}) \defeq \mathbf{T}_{u}\big((k_1, L_1), (k_2, L_2), \cdots, (k_N, L_N)\big)$ and $U^{L}_{g_u}$ denotes the unitary matrix representation of $g_u \in \mathrm{U(n)}$ on $\mathbb{V}^{L}$ expressed in the basis $\{\bm{\pi}_{L,M,u}; M\}$, on the vector space $\mathbb{V}^{L}$ for the irreducible representation labelled by $L$. Therefore $\mathbf{T}_{u}(\vec{k}, \vec{L}) \in \mathbb{V}^{L_1} \otimes \mathbb{V}^{L_2} \otimes \cdots \otimes \mathbb{V}^{L_N}$ is the representation space of an $N$-fold tensor product representations of $\mathrm{U(n)}$. We note the following theorem for the decomposition of $\mathbf{T}_{u}(\vec{k}, \vec{L})$:

\begin{theorem}[Theorem 2.1 and Lemma 2.2 of~\cite{klink1996multiplicity}]
 The representation of $\mathrm{U(n)}$ on the direct product of $\mathbb{V}^{L_1}, \mathbb{V}^{L_2}, \cdots, \mathbb{V}^{L_N}$ decomposes into direct sum of irreducible representations:
\begin{equation}
    \mathbb{V}^{L_1} \otimes \mathbb{V}^{L_2} \otimes \cdots \otimes \mathbb{V}^{L_N} \simeq \bigoplus_{L} \bigoplus_{\nu}^{\mu(L_1, L_2, \cdots, L_N; L)} \mathbb{V}^{L;\nu}
\end{equation}
and 
\begin{equation}
    \label{eq:decomp}
    \sum_{L} \mu(L_1, L_2, \cdots, L_N; L) \, \mathrm{dim}(\mathbb{V}^{L}) = \prod_{u=1}^{N} \mathrm{dim}(\mathbb{V}^{L_u})
\end{equation}
where $\mu(L_1, L_2, \cdots, L_N; L)$ is the multiplicity of $L$ denoting the number of replicas of $\mathbb{V}^{L}$ being present in the decomposition of $\mathbb{V}^{L_1} \otimes \mathbb{V}^{L_2} \otimes \cdots \otimes \mathbb{V}^{L_N}$. 
\end{theorem}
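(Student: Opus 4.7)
The plan is to reduce this statement to the standard complete reducibility theorem for compact Lie groups. First I would observe that $\mathrm{U}(n)$ is a compact Lie group, so any finite-dimensional continuous representation decomposes into a direct sum of irreducible subrepresentations. This can be proved either by invoking the Peter--Weyl theorem directly, or by the more elementary averaging argument: given any inner product on the representation space, average it against the normalized Haar measure on $\mathrm{U}(n)$ to obtain a $\mathrm{U}(n)$-invariant inner product; then the orthogonal complement of any invariant subspace is itself invariant, and the decomposition into irreducibles follows by induction on dimension.

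Second, I would verify that $\mathbb{V}^{L_1} \otimes \cdots \otimes \mathbb{V}^{L_N}$ is indeed a finite-dimensional continuous representation of $\mathrm{U}(n)$ under the diagonal action $g \mapsto U^{L_1}_g \otimes U^{L_2}_g \otimes \cdots \otimes U^{L_N}_g$. Finite-dimensionality is clear since the dimension equals $\prod_{u=1}^{N} \mathrm{dim}(\mathbb{V}^{L_u})$, and continuity follows from the continuity of each $g \mapsto U^{L_u}_g$ together with the continuity of the tensor product map on matrices. Applying the first step to this representation then yields an isomorphism
\begin{equation*}
    \mathbb{V}^{L_1} \otimes \cdots \otimes \mathbb{V}^{L_N} \simeq \bigoplus_{L} \bigoplus_{\nu=1}^{\mu(L_1,\ldots,L_N;L)} \mathbb{V}^{L;\nu},
\end{equation*}
where $\mu(L_1,\ldots,L_N;L) \in \mathbb{N}$ is defined intrinsically as the multiplicity of $\mathbb{V}^L$ in the decomposition. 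Well-definedness of the multiplicities follows from the uniqueness (up to isomorphism and reordering) of such isotypic decompositions for completely reducible representations, which can be proved via Schur's lemma by examining $\mathrm{Hom}_{\mathrm{U}(n)}(\mathbb{V}^{L}, \mathbb{V}^{L_1}\otimes\cdots\otimes\mathbb{V}^{L_N})$.

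Third, the dimension identity follows immediately by taking $\mathrm{dim}(\cdot)$ of both sides of the isomorphism, using that dimension is additive under direct sums and multiplicative under tensor products:
\begin{equation*}
    \prod_{u=1}^{N} \mathrm{dim}(\mathbb{V}^{L_u}) = \mathrm{dim}\Big(\mathbb{V}^{L_1}\otimes\cdots\otimes\mathbb{V}^{L_N}\Big) = \sum_{L} \mu(L_1,\ldots,L_N;L)\,\mathrm{dim}(\mathbb{V}^{L}).
\end{equation*}

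The main conceptual step is establishing complete reducibility, which rests crucially on the compactness of $\mathrm{U}(n)$; for non-compact Lie groups the analogous statement fails in general. The technical work needed here is mild because the theorem only asserts the existence and dimension-counting property of the decomposition, rather than giving an explicit formula for the multiplicities $\mu(L_1,\ldots,L_N;L)$. Computing these multiplicities explicitly, for example through Weyl's character formula, Littlewood--Richardson-type combinatorics, or the Clebsch--Gordan series, would be substantially more involved but is not required for the stated result; the physically relevant instantiation for $\mathrm{SO}(3)$ (governing the equivariance properties used throughout the paper) is the familiar $\mathbb{V}^{l_1}\otimes\mathbb{V}^{l_2}\simeq\bigoplus_{l=|l_1-l_2|}^{l_1+l_2}\mathbb{V}^{l}$.
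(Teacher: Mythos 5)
Your proposal is correct and complete for the statement as quoted. Note, however, that the paper does not actually prove this theorem itself: it imports it wholesale from the cited reference of Klink and Ton-That and simply points the reader there, so there is no in-paper argument to match against. The cited source establishes the decomposition constructively, working with explicit labellings of the $\mathrm{U}(n)$ irreducibles (Gelfand--Tsetlin-type patterns) and explicit intertwining maps, which is what the authors actually need downstream when they tabulate the coefficients $\mathbf{Q}$ realizing the reduction $V_u^{\otimes N}\rightarrow V_u^{\star}$. Your route is the soft, abstract one: compactness of $\mathrm{U}(n)$ gives a Haar-averaged invariant inner product, hence complete reducibility, hence the isotypic decomposition with well-defined multiplicities via Schur's lemma, and the dimension identity by counting. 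This buys brevity and generality (it works verbatim for any compact group) at the cost of being non-constructive --- it proves the multiplicities $\mu(L_1,\ldots,L_N;L)$ exist without exhibiting the change-of-basis matrices, which is the content the paper ultimately leans on in Corollaries S2--S3. You correctly identify this trade-off yourself in your closing remarks; for the theorem as stated (existence plus dimension count), your argument suffices. One pedantic point worth making explicit: the sum over $L$ in the dimension identity is finite because the total dimension $\prod_u \dim(\mathbb{V}^{L_u})$ is finite, so only finitely many isotypic components are nonzero.
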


Note that we have abstracted the labelling details for $\mathrm{U(n)}$ irreducible representations into the index $L$. See \cite{klink1996multiplicity} for proof and details on representation labelling. We now state the following result for generating order-1 representations (Materials and methods, ~\ref{sec:wigner}): 


\begin{corollary} 
There exists an invertible linear map $\psi: V_u^{\otimes N} \rightarrow V^{\star}_u \defeq \bigoplus (\mathbb{V}^{L})^{\oplus \mu(L;V_u)}$ where $\mu(L;V_u) \in \mathbb{N}$, such that for any $\mathbf{T}_{u}$, $L$ and $\nu \in \{1, 2, \cdots , \mu(L;V_u) \}$, $\psi(g_u \cdot \mathbf{T}_{u})_{\nu,L} = U^{L}_{g_u} \cdot \psi(\mathbf{T}_{u})_{\nu,L}$ if $\mu(L;V_u)>0$.
\end{corollary}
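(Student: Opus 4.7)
The plan is to build $\psi$ as an iterated Clebsch--Gordan-style decomposition, and then read off equivariance from the fact that every step is an intertwiner for the $\mathrm{U}(n)$-action (hence in particular for its restriction to $G_u \subset \mathrm{U}(n)$). First I would use the given isomorphism $V_u \simeq \bigoplus_L (\mathbb{V}^L)^{\oplus K_L}$ and distribute the $N$-fold tensor product over the direct sum to obtain
\begin{equation*}
V_u^{\otimes N} \ \simeq \ \bigoplus_{\vec{L}} \bigl(\mathbb{V}^{L_1}\otimes \cdots \otimes \mathbb{V}^{L_N}\bigr)^{\oplus K_{L_1}\cdots K_{L_N}} .
\end{equation*}
This step is $G_u$-equivariant because the action of $g_u$ on $V_u$ restricts to $U^{L}_{g_u}$ on each $\mathbb{V}^L$-factor, and a tensor product of representations carries the tensor product of the factor actions.

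Next I would invoke the theorem of \cite{klink1996multiplicity} quoted above on each tensor-product summand to get $\mathbb{V}^{L_1}\otimes\cdots\otimes \mathbb{V}^{L_N} \simeq \bigoplus_L \bigoplus_{\nu=1}^{\mu(\vec{L};L)} \mathbb{V}^{L;\nu}$ as $\mathrm{U}(n)$-representations, where the isomorphism is by construction an intertwiner (it carries the tensor action $U^{L_1}_{g_u}\otimes \cdots \otimes U^{L_N}_{g_u}$ to the block-diagonal action $\bigoplus_{L,\nu} U^L_{g_u}$). Collecting multiplicities then yields
\begin{equation*}
V_u^{\otimes N} \ \simeq \ \bigoplus_L (\mathbb{V}^L)^{\oplus \mu(L;V_u)}, \qquad \mu(L;V_u) \defeq \sum_{\vec{L}} K_{L_1}\cdots K_{L_N}\, \mu(\vec{L};L),
\end{equation*}
which is precisely $V^\star_u$. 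Taking $\psi$ to be the composition of the two isomorphisms above, the $\nu$-th copy of $\mathbb{V}^L$ inside $V^\star_u$ transforms by $U^L_{g_u}$, which is exactly the claimed equivariance. Invertibility is automatic since $\psi$ is the composition of two linear isomorphisms, and a dimension sanity check is given by summing \eqref{eq:decomp} weighted by $K_{L_1}\cdots K_{L_N}$, which recovers $\sum_L \mu(L;V_u)\dim(\mathbb{V}^L) = \dim(V_u)^N$.

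The only genuine subtlety --- and the reason the corollary only asserts existence rather than canonicity --- is that the intertwiners used in the second step are determined only up to a unitary change of basis on each isotypic component $\bigoplus_\nu \mathbb{V}^{L;\nu}$, i.e., the multiplicity label $\nu$ carries gauge freedom. That freedom is exactly what the Methods section fixes by the concrete construction of $\psi$ via the on-site three-index overlap integrals $\tilde{Q}^{\mu,\nu}_{nlm}$ in \eqref{eq:3idx}, which picks a physically natural basis of Clebsch--Gordan-like coefficients indexed by Gaussian radial profiles. So the main work is not in the existence claim (which is a formal consequence of representation theory of $\mathrm{U}(n)$) but in exhibiting a numerically convenient $\psi$ that is also injective on the physically relevant subspace of inputs; the corollary isolates the existence half of this task.
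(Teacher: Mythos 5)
Your proposal is correct and follows essentially the same route as the paper's proof: decompose $V_u^{\otimes N}$ into blocks indexed by $(\vec{k},\vec{L})$ (equivalently, by $\vec{L}$ with multiplicity $K_{L_1}\cdots K_{L_N}$), apply Theorem S1 block-wise to obtain invertible intertwiners $\psi_{\vec{L}}$, and take $\psi$ as their direct sum, with equivariance and invertibility inherited from each block. Your added dimension check and the remark on the gauge freedom of the multiplicity label are consistent with, though not required by, the paper's argument.
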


\begin{proof} 
First note that each block $\mathbf{T}_{u}(\vec{k}, \vec{L})$ of $\mathbf{T}_u$ is an element of $\mathbb{V}^{L_1} \otimes \mathbb{V}^{L_2} \otimes \cdots \otimes \mathbb{V}^{L_N}$ up to an isomorphism. \eqref{eq:decomp} in Theorem S1 states there is an invertible linear map $\psi_{\vec{L}}: \mathbb{V}^{L_1} \otimes \mathbb{V}^{L_2} \otimes \cdots \otimes \mathbb{V}^{L_N} \rightarrow \bigoplus_{L} (\mathbb{V}^{L})^{\oplus {\mu(L_1, L_2, \cdots, L_N; L)}}$, such that $\tau(g_u) = (\psi_{\vec{L}})^{-1} \circ \pi(g_u) \circ \psi_{\vec{L}}$ for any $g_u \in G_u$, where $\tau: G_u \rightarrow \mathrm{U}(\mathbb{V}^{L_1} \otimes \mathbb{V}^{L_2} \otimes \cdots \otimes \mathbb{V}^{L_N})$ and $\pi: G_u \rightarrow \mathrm{U}(\bigoplus_{L} (\mathbb{V}^{L})^{\oplus {\mu(L_1, L_2, \cdots, L_N; L)}})$ are representations of $G_u$. Note that $\pi$ is defined as a direct sum of irreducible representations of $\mathrm{U(n)}$, i.e. $\pi(g_u)  \psi_{\vec{L}}(\mathbf{T}_u(\vec{k}, \vec{L})) \defeq \bigoplus_{\nu,L} U^{L}_{g_u} \big( \psi_{\vec{L}}(\mathbf{T}_u(\vec{k}, \vec{L}))) \big)_{\nu,L}$. Note that $\psi(\mathbf{T}_u) \defeq \bigoplus_{L} \bigoplus_{\vec{k}, \vec{L}} \psi_{\vec{L}}(\mathbf{T}_u(\vec{k}, \vec{L}))_{L}$ and $\mu(L, V_u)\defeq \sum_{\vec{k}, \vec{L}} \mu(L_1, L_2, \cdots, L_N; L)$ directly satisfies $\psi(g_u \cdot \mathbf{T}_{u})_{\nu,L} = \psi(\tau(g_u) \mathbf{T}_{u})_{\nu,L} = U^{L}_{g_u} \psi(\mathbf{T}_{u})_{\nu,L}$ for $\nu \in \{1, 2, \cdots , \mu(L;V_u) \}$. Since each $\psi_{\vec{L}}$ are finite-dimensional and invertible, it follows that the finite direct sum $\psi$ is invertible.
\end{proof}

For Hermitian tensors, we conjecture the same result for $\mathrm{SU}(2)$, $\mathrm{O}(2)$ and $\mathrm{O}(3)$ as each irreducible representation is isomorphic to its complex conjugate. 

We then formally restate the proposition in Methods~\ref{sec:wigner} which was originally given for orthogonal representations of $\mathrm{O(3)}$ (i.e., the real spherical harmonics):
\begin{corollary} \label{cor:wigner}
For each $L$ where $\mu(L;V_u)>0$, there exist $n_L \times \mathrm{dim}(V_u)^N$ $\mathbf{T}$-independent coefficents $Q^{\vec{v}}_{\nu,L,M}$ parameterizing the linear transformation $\psi$ that performs $\mathbf{T}_{\vec{u}} \mapsto \mathbf{h}_u \defeq \psi(\mathbf{T}_{u})$, if $u_1=u_2=\cdots=u_N=u$:
\begin{equation}
    \label{eq:l1}
    \big( \psi( \mathbf{T}_{u} ) \big)_{\nu, L, M} \defeq \sum_{\vec{v}} T_{u}(v_1, v_2, \cdots v_N) \, Q^{\vec{v}}_{\nu,L,M} \quad \textrm{for} \quad \nu\in\{1,2,\cdots,n_L\}
\end{equation}
such that the linear map $\psi$ is injective, $\sum_{L} n_L \leq \mathrm{dim}(V_u)^N$, and for each $g_u\in G_u$:
\begin{equation}
    \label{eq:cg}
    \psi\big( g_u \cdot \mathbf{T}_{u} \big)_{\nu,L} = U^{L}_{g_u} \big( \psi(\mathbf{T}_{u}) \big)_{\nu,L} 
\end{equation}
\end{corollary}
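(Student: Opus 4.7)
The plan is to derive this corollary as a direct coordinatization of Corollary~S1, which already supplies an invertible linear map $\psi: V_u^{\otimes N} \to V_u^{\star} = \bigoplus_L (\mathbb{V}^L)^{\oplus \mu(L; V_u)}$ satisfying the intertwining property $\psi(g_u \cdot \mathbf{T}_u)_{\nu, L} = U^L_{g_u}\, \psi(\mathbf{T}_u)_{\nu, L}$. The first step is to fix the domain basis as the tensor product basis $\{\bm{\pi}_{L_1, M_1, u} \otimes \cdots \otimes \bm{\pi}_{L_N, M_N, u}\}$ of $V_u^{\otimes N}$, which is indexed by $\vec{v} = (v_1, \ldots, v_N)$ with $v_j = (k_j, L_j, M_j)$, and the codomain basis as the irreducible-representation basis $\{\bm{\pi}_{L, M}^{(\nu)}\}$ coming from the decomposition in Theorem~S1.

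Next I would read off the coordinate representation of $\psi$ in these bases: by linearity we have
\begin{equation*}
\big(\psi(\mathbf{T}_u)\big)_{\nu, L, M} = \sum_{\vec{v}} T_u(v_1, \ldots, v_N) \, Q^{\vec{v}}_{\nu, L, M},
\end{equation*}
where $Q^{\vec{v}}_{\nu, L, M}$ is just the matrix element of the fixed linear map $\psi$ between the two chosen bases. Since $\psi$ is a property of the representation theory of $G_u$ alone and does not depend on which tensor $\mathbf{T}_u$ is being transformed, the coefficients $Q^{\vec{v}}_{\nu, L, M}$ are manifestly $\mathbf{T}$-independent. Setting $n_L \defeq \mu(L; V_u) = \sum_{\vec{k}, \vec{L}} \mu(L_1, \ldots, L_N; L)$ makes the index $\nu$ range over $\{1, 2, \ldots, n_L\}$ as required.

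The dimension bound $\sum_L n_L \leq \dim(V_u)^N$ then follows from the fact that $\psi$ is invertible between $V_u^{\otimes N}$ and $V_u^{\star}$, so $\sum_L n_L \cdot \dim(\mathbb{V}^L) = \dim(V_u)^N$; since $\dim(\mathbb{V}^L) \geq 1$ for each $L$ with $n_L > 0$, the claimed inequality is immediate. Injectivity of the coordinate formula in \eqref{eq:l1} is inherited from invertibility of $\psi$ in Corollary~S1. Finally, for the equivariance relation \eqref{eq:cg}, I would simply restate the intertwining identity from Corollary~S1 in the chosen bases: the left-hand side becomes $\sum_{\vec{v}} (g_u \cdot \mathbf{T}_u)(\vec{v}) \, Q^{\vec{v}}_{\nu, L, M}$ while the right-hand side is $U^L_{g_u}$ acting on $\sum_{\vec{v}} T_u(\vec{v}) Q^{\vec{v}}_{\nu, L, M}$, and equality of these is exactly the basis-level expression of the abstract equivariance already established.

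I expect no substantive obstacle here, since the corollary is essentially a restatement of Corollary~S1 in explicit coordinates. The only delicate point is bookkeeping: one must carefully separate the multiplicity index $\nu$ (which ranges over $n_L = \mu(L; V_u)$ and collects both the multiplicities within each $\vec{L}$-block and the multiplicities across different $(\vec{k}, \vec{L})$-blocks) from the magnetic index $M$ (which indexes the basis of each $\mathbb{V}^L$). The coefficients $Q^{\vec{v}}_{\nu, L, M}$ are, up to basis conventions, generalized Clebsch--Gordan/Wigner coefficients for iterated tensor products of irreducibles of $G_u$, and their existence and uniqueness up to unitary multiplicity transformations is a standard consequence of the Wigner--Eckart structure invoked in the paper.
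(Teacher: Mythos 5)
Your proposal is correct and follows essentially the same route as the paper's proof: both take the invertible intertwining map $\psi$ already established in Corollary S2 (which you cite as "Corollary S1," but the content is the right one) and define $Q^{\vec{v}}_{\nu,L,M}$ as its matrix elements between the tensor-product basis of $V_u^{\otimes N}$ and the irreducible basis of $V_u^{\star}$, from which $\mathbf{T}$-independence, injectivity, and equivariance are immediate. Your explicit derivation of the bound $\sum_L n_L \leq \mathrm{dim}(V_u)^N$ from $\sum_L n_L \cdot \mathrm{dim}(\mathbb{V}^L) = \mathrm{dim}(V_u)^N$ is a small but welcome elaboration that the paper leaves implicit.
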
 

\begin{proof} According to Definition~\ref{def:nbt}, a complete basis of $V_u^{\otimes N}$ is given by $\{ \bm{\pi}_{L_1, M_1, u} \otimes \bm{\pi}_{L_2, M_2, u} \otimes \cdots \otimes \bm{\pi}_{L_N, M_N, u}; (\vec{k}, \vec{L}, \vec{M}) \}$ and a complete basis of $(V^{\star}_u)_L$ is $\{ \bm{\pi}_{L, M, u}; M \}$. Note that $V_u$ and $(V^{\star}_u)_L$ are both finite dimensional.  Therefore an example of $\mathbf{Q}_L$ is the $\mathrm{dim}(V_u)^N \times \mu(L;V_u)$ matrix representation of the bijective map $\psi$ in the two basis, which proves the existence. \end{proof}

Note that Corollary S3 does not guarantee the resulting order-1 representations $\mathbf{h}_u \defeq \psi(\mathbf{T}_{u})$ (i.e. vectors in $V^{\star}_u$) to be invariant under permutations $\sigma$, as the ordering of $\{\nu\}_{L}$ may change under $\mathbf{T} \mapsto \sigma(\mathbf{T})$.
Hence, the symmetric condition on $\mathbf{T}$ is important to achieve permutation equivariance for the decomposition $V_u^{\otimes N} \rightarrow V^{\star}_u$; we note that $\mathbf{T}_u$ has a symmetric tensor factorization and is an element of $\mathrm{Sym}^{N}(V_u)$, then algebraically the existence of a permutation-invariant decomposition is ensured by the Schur-Weyl duality~\cite{fulton2013representation} giving the fact that all representations in the decomposition of $\mathrm{Sym}^{N}(V_u)$ must commute with the symmetric group $\mathrm{S}_N$. With the matrix representation $\mathbf{Q}$ in \eqref{eq:l1}, clearly for any $\sigma$, $\psi(\sigma(\mathbf{T}_u)) = \sigma(\mathbf{T}_u) \cdot \mathbf{Q} = \mathbf{T}_u \cdot \mathbf{Q} = \psi(\mathbf{T}_u)$.  For general asymmetric $N$-body tensors, we expect the realization of permutation equivariance to be sophisticated and may be achieved through tracking the Schur functors from the decomposition of $V_u^{\otimes N} \rightarrow V^{\star}_u$, which is considered out of scope of the current work.
Additionally, the upper bound $\sum_{L} n_L \leq \mathrm{dim}(V_u)^N$ is in practice often not saturated and the contraction \eqref{eq:l1} can be simplified. For example, when $N>2$ it suffices to perform permutation-invariant decomposition on symmetric $\mathbf{T}_u$ recursively through Clebsch-Gordan coefficients $\mathbf{C}$ 
which has the following property:
\begin{equation}
    \label{eq:cge}
    \mathbf{C}^{\nu,L}_{L_1;L_2} \cdot (U^{L_1}_{g_u} \otimes U^{L_2}_{g_u}) \cdot (\mathbf{C}^{\nu,L}_{L_1;L_2})^{\dagger} = U^{L}_{g_u} \quad \textrm{for} \quad \nu \in \{1, 2, \cdots, \mu(L_1, L_2; L)\}
\end{equation}
i.e., $\mathbf{C}$ parameterizes the isomorphism $\psi_{\vec{L}}$ of Theorem S2 for $N=2$, $\vec{L}=(L_1, L_2)$. Then $\psi$ can be constructed with the procedure $(V_u)^{\otimes N} \mapsto V'_u \otimes (V_u)^{\otimes (N-2)} \mapsto V''_u \otimes (V_u)^{\otimes (N-3)} \mapsto V^{\star}_u$ without explicit order-$N+1$ tensor contractions, where each reduction step can be parameterized using $\mathbf{C}$. 

Procedures for computing $\mathbf{C}$ in general are established~\cite{alex2011numerical,gliske2007algorithms}. For the main results reported in this work $\mathrm{O(3)}\simeq \mathrm{SO(3)} \times \mathbb{Z}_2$ is considered, where $\mu(L_1, L_2; L) \leq 1$ and the basis of an irreducible representation $\bm{\pi}_{L,M}$ can be written as $\bm{\pi}_{L,M} \defeq \lvert l,m,p\rangle$ where $p \in \{1, -1 \}$ and $m \in \{-l, -l+1, \cdots, l-1, l \}$. $\rvert l,m,p\rangle$ can be thought as a spherical harmonic $Y_{lm}$ but may additionally flips sign under point reflections $\mathcal{I}$ depending on the parity index $p$: $\mathcal{I} \, \lvert l,m,p\rangle = p \cdot (-1)^{l} \, \lvert l,m,p\rangle$ where $\forall \mathbf{x} \in \mathbb{R}^{3}, \mathcal{I}(\mathbf{x}) = -\mathbf{x}$. Clebsch-Gordan coefficients $\mathbf{C}$ for $\mathrm{O}(3)$ is given by:
\begin{equation}
    \label{eq:o3cg}
    C^{\nu=1, lpm}_{l_1 p_1 m_1; l_2 p_2 m_2} = C_{l_1 m_1; l_2 m_2}^{lm} \delta^{((-1)^{l_1+l_2+l})}_{p_1\cdot p_2\cdot p}
\end{equation}
where $C_{l_1 m_1; l_2 m_2}^{lm}$ are $\mathrm{SO(3)}$ Clebsch-Gordan coefficients. 
For $N=2$, the problem reduces to using Clebsch-Gordan coefficients to decompose $\mathbf{T}_{u}$ as a combination of matrix representations of \textit{spherical tensor operators} which are linear operators transforming under irreducible representation of $\mathrm{SO(3)}$ based on the the Wigner-Eckart Theorem (see~\cite{sakurai1995modern} for formal derivations). Remarkably, a recent work~\cite{lang2020wigner} discussed connections of a class of neural networks to the Wigner-Eckart Theorem in the context of operators in spherical CNNs, which also provides a thorough review on this topic.



 

Both $V_u$ and $V_u^{\star}$ are defined as direct sums of the representation spaces $\mathbb{V}^{L}$ of irreducible representations of $G_u$, but each $L$ may be associated with a different multiplicity $K_L$ or $K_L^{\star}$ (e.g. different numbers of feature channels). We also allow for the case that the definition basis $\{\mathbf{e}_{u;v} \}$ for the $N$-body tensor $\mathbf{T}$ differ from $\{\bm{\pi}_{u; L, M} \}$ by a known linear transformation $\mathbf{D}_u$ such that $ \mathbf{e}_{u;v} \defeq \sum_{L, M} (D_u)_{v}^{L,M}  \bm{\pi}_{u; L, M} $, or $(D_u)_{v}^{L,M} \defeq \langle \mathbf{e}_{u;v}, \bm{\pi}_{u; L, M} \rangle$ where $\langle \cdot, \cdot \rangle$ denotes an Hermitian inner product, and we additionally define if $K_L=0$, $\langle \mathbf{e}_{u;v}, \bm{\pi}_{u; L, M} \rangle \defeq 0$. We then give a natural extension to Definition S2:

\begin{definition}
  We extend the basis in Definition~\ref{def:nbt} for $N$-body tensors to $\{\mathbf{e}_{u;v} \}$ where $\mathrm{span}(\{\mathbf{e}_{u;v}; v \}) = V_u$, if 
\begin{equation}
    \mathbf{D}_u \cdot \pi^{2}(g_u) = \pi^{1}(g_u) \cdot \mathbf{D}_u \quad \forall g_u \in G_u
\end{equation}
where $\pi^{1}$ and $\pi^{2}$ are matrix representations of $g_u$ on $V_u \subset V_u^{\star}$ in basis $\{\mathbf{e}_{u;v} \}$ and in basis $\{\bm{\pi}_{u; L, M} \}$. Note that $\pi^{2}(g_{u}) \cdot \mathbf{v} = U^{L}_{g_u} \cdot \mathbf{v}$ for $\mathbf{v}\in \mathbb{V}^{L}$. 
\end{definition}

\subsection{Generalized neural network building blocks}
\label{sec:si_nnbb}

We clarify that in all the sections below $n$ refers to a feature channel index within a irreducible representation group labelled by $L$, which should not be confused with $\mathrm{dim}(V_u)$. More explicitly, we note $n \in \{1, 2, \cdots, N^{\mathrm{h}}_{L} \}$ where $N^{\mathrm{h}}_{L}$ is the number of vectors in the order-1 tensor $\mathbf{h}_u^t$ that transforms under the $L$-th irreducible representation $G_u$ (i.e. the multiplicity of $L$ in $\mathbf{h}_u^t$). $M \in \{1, 2, \cdots, \mathrm{dim}(\mathbb{V}^{L}) \}$ indicates the $M$-th component of a vector in the representation space of the $L$-th irreducible representation of $G_u$, corresponding to a basis vector $\bm{\pi}_{L,M,u}$. We also denote the total number of feature channels in $\mathrm{h}$ as $N^{\mathrm{h}} \defeq \sum_{L} N^{\mathrm{h}}_{L}$.

For a simple example, if the features in the order-1 representation $\mathbf{h}^{t}$ are specified by $L\in \{0, 1\}$, $N^{\mathrm{h}}_{L=0}=8$, $N^{\mathrm{h}}_{L=1}=4$, $\mathrm{dim}(\mathbb{V}^{L=0})=1$, and $\mathrm{dim}(\mathbb{V}^{L=1})=5$, then $N^{\mathrm{h}}=8+4=12$ and $\mathbf{h}^{t}_u$ is stored as an array with $\sum_{L} N^{\mathrm{h}}_{L} \cdot \mathrm{dim}(\mathbb{V}^{L}) = (8\times 1 + 4\times 5) = 28$ elements.

We reiterate that $\vec{u}\defeq (u_1, u_2, \cdots, u_N)$ is a sub-tensor index (location of a sub-tensor in the $N$-body tensor $\mathbf{T}$), and $\vec{v}\defeq (v_1, v_2, \cdots, v_N)$ is an element index in a sub-tensor $\mathbf{T}_{\vec{u}}$. 

\paragraph{Convolution and message passing.} We generalize the definition of a block convolution module \eqref{eq:conv} to order-$N$ and complex numbers:
\begin{equation}
    \label{eq:si_conv}
    (\mathbf{m}_{\vec{u}}^{t})^i_{v_1} = \sum_{v_2, \cdots, v_N} T_{\vec{u}}(v_1, v_2, \cdots, v_N) \prod_{j=2}^{N}    \big( \rho_{u_j}(\mathbf{h}^{t}_{u_j})^{*} \big)^i_{v_j}
\end{equation}
Message passing modules \eqref{eq:mp}-\eqref{eq:accu} is generalized to order $N$:
\begin{align}
    \label{eq:si_mp}
    \tilde{\mathbf{m}}_{u_1}^t &= \sum_{u_2, u_3, \cdots, u_{N}} \bigoplus\limits_{i,j} \ (\mathbf{m}_{\vec{u}}^{t})^{i}  \cdot \alpha_{\vec{u}}^{t,j} \\
    \mathbf{h}_{u_1}^{t+1} &= \phi\big( \mathbf{h}_{u_1}^{t}, \rho_{u_1}^{\dagger}(\tilde{\mathbf{m}}_{u_1}^t) \big)
    \label{eq:si_accu}
\end{align}

\paragraph{EvNorm.} 
We write the EvNorm operation \eqref{eq:EvNorm} as $\mathrm{EvNorm}: \mathbf{h} \mapsto ( \bar{\mathbf{h}}, \hat{\mathbf{h}})$ where
\begin{equation}
\label{eq:si_evnorm}
    \bar{h}_{n L} \defeq \frac{\lvert \lvert \mathbf{h}_{n L}\lvert \lvert -\mu^x_{n L}}{\sigma^x_{n L}} \quad \textrm{and} \quad
    \hat{h}_{n LM} \defeq \frac{x_{n LM}}{\lvert \lvert \mathbf{h}_{n L}\lvert \lvert  + 1/\beta_{n L} + \epsilon}
\end{equation}

\paragraph{Point-wise interaction $\phi$.} We adapt the notations and explicitly expand \eqref{eq:pi01}-\eqref{eq:pi02} for clarity. The operations within a point-wise interaction block $\mathbf{h}_u^{t+1} = \phi(\mathbf{h}_u^{t}, \mathbf{g}_u)$ are defined as :
\begin{align}
    (\mathbf{f}_u^{t})_{n L M} &= \big( \mathrm{MLP}_{1}(\bar{\mathbf{h}}_A^{t}) \big)_{n L} \, (\hat{\mathbf{h}}_u^{t})_{n LM} \quad \textrm{where} \quad (\bar{\mathbf{h}}_u^{t}, \hat{\mathbf{h}}_u^{t} ) = \mathrm{EvNorm}(\mathbf{h}_u^t) \label{eq:pi1} \\
    (\mathbf{q}_{u})_{n L M} &= (\mathbf{g}_{u})_{n L M} + \sum_{L_1, L_2}\sum_{M_1, M_2} (\mathbf{f}_u^t)_{n L_1 M_1} (\mathbf{g}_u)_{n L_2 M_2} \, C^{\nu(n), LM}_{L_1 M_1; L_2 M_2} \label{eq:pi2}\\
    (\mathbf{h}_u^{t+1})_{n L M} &= (\mathbf{h}_u^{t})_{n L M} + \big(\mathrm{MLP}_{2}(\bar{\mathbf{q}}_u) \big)_{n L} \, (\hat{\mathbf{q}}_u)_{n L M} \quad \textrm{where} \quad ( \bar{\mathbf{q}}_u, \hat{\mathbf{q}}_u ) = \mathrm{EvNorm}(\mathbf{q}_u)  \label{eq:pi3}
\end{align}
where $\nu: \mathbb{N}^{+} \rightarrow \mathbb{N}^{+}$ assigns an output multiplicity index $\nu$ to a group of feature channels $n$. 

For the special example of $\mathrm{O(3)}$ where the output multiplicity $\mu(L_1, L_2; L) \leq 1$ (see Theorem S1 for definitions), we can restrict $\nu(n) \equiv 1$ for all values of $n$, and \eqref{eq:pi2} can be rewritten as
\begin{equation}
\label{eq:si_cgp}
    (\mathbf{q}_{u})_{n l p m} = (\mathbf{g}_{u})_{nlpm} + \sum_{l_1, l_2}\sum_{m_1, m_2} \sum_{p_1, p_2} (\mathbf{f}_u^t)_{n l_1 p_1 m_1} (\mathbf{g}_u)_{n l_2 p_2 m_2} \, C_{l_1 m_1; l_2 m_2}^{lm} \, \delta^{((-1)^{l_1+l_2+l})}_{p_1\cdot p_2\cdot p}
\end{equation}
which is based on the construction of $C^{\nu(n), LM}_{L_1 M_1; L_2 M_2}$ in \eqref{eq:o3cg}. The above form recovers \eqref{eq:cgp}.

\paragraph{Matching layers.} Based on Definition S3, we can define generalized matching layers $\rho_u$ and $\rho^{\dagger}_u$ as
\begin{align}
    \big( \rho_u( \mathbf{h}_u^{t}) \big)_{v}^{i} &= \sum_{L,M} \big( \mathbf{W}_{L}^{i} \cdot  (\mathbf{h}_u^{t})_{LM} \big) \cdot \langle \mathbf{e}_{u;v}, \bm{\pi}_{u;L,M} \rangle \\
    \big( \rho_u^{\dagger}( \tilde{\mathbf{m}}_u^{t}) \big)_{L M} &= \sum_{v}  \mathbf{W}_{L}^{\dagger} \cdot  (\tilde{\mathbf{m}}_u^{t})_{v} \cdot \langle \bm{\pi}_{u;L,M} , \mathbf{e}_{u;v} \rangle \label{eq:matching}
\end{align}
where $\mathbf{W}_L^{i}$ are learnable ($1 \times N^{\mathrm{h}}_{L}$) matrices; $\mathbf{W}_L^{\dagger}$ are learnable ($N^{\mathrm{h}}_{L} \times (N^{\mathrm{i}} \, N^{\mathrm{j}})$) matrices where $N^{\mathrm{i}}$ denotes the number of convolution channels (number of allowed $i$ in \eqref{eq:conv}).

\subsection{$G$-equivariance}

With main results from Corollary S2 and Corollary S3 and basic linear algebra, the equivariance of UNiTE can be straightforwardly proven. $G$-equivariance of the Diagonal Reduction layer $\psi$ is stated in Corollary S3, and it suffices to prove the equivariance for other building blocks.

\textit{Proof of  $G$-equivariance for the convolution block \eqref{eq:si_conv}.} For any $g\in G$:
\begin{align*}
    & \sum_{v_2, \cdots, v_N} \big(g \cdot T_{\vec{u}}(v_1, v_2, \cdots, v_N)\big) \prod_{j=2}^{N}    \big( \rho_{u_j}(g \cdot \mathbf{h}^{t}_{u_j})^{*} \big)^i_{v_j} \\
    &= \sum_{v_2, \cdots, v_N} ((\bigotimes_{\vec{u}} \pi^{1}(g_{u_j}) \cdot T_{\vec{u}})(v_1, v_2, \cdots, v_N)) \prod_{j=2}^{N}    \big( \rho_{u_j}(\pi^{2}(g_{u_j}) \cdot \mathbf{h}^{t}_{u_j})^{*} \big)^i_{v_j} \\
    &= \sum_{v_2, \cdots, v_N} \big((\bigotimes_{\vec{u}} \pi^{1}(g_{u_j}) \cdot T_{\vec{u}})(v_1, v_2, \cdots, v_N)\big) \prod_{j=2}^{N}    \big( \sum_{L,M} (\mathbf{D}_{u_j})_{v_j}^{L,M} \cdot \big( \mathbf{W}_{L}^{i} \cdot  (\pi^{2}(g_{u_j}) \cdot \mathbf{h}^{t}_{u_j})_{LM} \big) \big)^{*} \big)^i \\
    &= \sum_{v_2, \cdots, v_N} \big((\bigotimes_{\vec{u}} \pi^{1}(g_{u_j}) \cdot T_{\vec{u}})(v_1, v_2, \cdots, v_N)\big) \prod_{j=2}^{N}    \big( ( \sum_{L,M} \mathbf{W}_{L}^{i} \cdot (\mathbf{D}_{u_j})_{v_j}^{L,M} \cdot (\pi^{2}(g_{u_j}) \cdot \mathbf{h}^{t}_{u_j})_{LM} \big) \big)^{*} \big)^i \\
    &\stackrel{\textrm{(S35)}}{=} \sum_{v_2, \cdots, v_N} \big((\bigotimes_{\vec{u}} \pi^{1}(g_{u_j}) \cdot T_{\vec{u}})(v_1, v_2, \cdots, v_N)\big) \prod_{j=2}^{N}    \big( \sum_{L,M} \big( \mathbf{W}_{L}^{i} \cdot (\pi^{1}(g_{u_j}) \cdot \mathbf{D}_{u_j}^{L,M} \cdot \mathbf{h}^{t}_{u_j})_{v_j} \big) \big)^{*} \big)^i \\
    &= \sum_{v_2, \cdots, v_N} \big((\bigotimes_{\vec{u}} \pi^{1}(g_{u_j}) \cdot T_{\vec{u}})(v_1, v_2, \cdots, v_N)\big) \prod_{j=2}^{N}  \big( \pi^{1}(g_{u_j})^{*} \cdot (\rho_{u_j}(\mathbf{h}^{t}_{u_j}))^{*} \big)^i_{v_j} \\
    &= \sum_{v_2, \cdots, v_N} \sum_{v'_1, v'_2, \cdots, v'_N} \big( (\pi^{1}(g_{u_j}))_{v_j, v'_j} \cdot T_{\vec{u}}(v'_1, v'_2, \cdots, v'_N)\big) \prod_{j=2}^{N} \sum_{v''_j}  (\pi^{1}(g_{u_j})^{*})_{v_j, v''_j} \cdot \big((\rho_{u_j}(\mathbf{h}^{t}_{u_j}))^{*} \big)^i_{v''_j} \\
    &= \sum_{v'1, v'_2, \cdots, v'_N} (\pi^{1}(g_{u_1}))_{v_1, v'_1} T_{\vec{u}}(v'_1, v'_2, \cdots, v'_N) \prod_{j=2}^{N} \sum_{v_j, v''_j} (\pi^{1}(g_{u_j}))_{v_j, v'_j} (\pi^{1}(g_{u_j})^{*})_{v_j, v''_j} \cdot \big( (\rho_{u_j}(\mathbf{h}^{t}_{u_j}))^{*} \big)^i_{v''_j} \\
    &= \sum_{v'_1, v'_2, \cdots, v'_N} (\pi^{1}(g_{u_1}))_{v_1, v'_1} T_{\vec{u}}(v'_1, v'_2, \cdots, v'_N) \prod_{j=2}^{N} \sum_{v''_j} \delta_{v'_j}^{v''_j}  \cdot \big( (\rho_{u_j}(\mathbf{h}^{t}_{u_j}))^{*} \big)^i_{v''_j} \\
    &= \sum_{v'_1} (\pi^{1}(g_{u_1}))_{v_1, v'_1}  \sum_{v'_2, \cdots, v'_N} T_{\vec{u}}(v'_1, v'_2, \cdots, v'_N) \prod_{j=2}^{N} \big( (\rho_{u_j}(\mathbf{h}^{t}_{u_j}))^{*} \big)^i_{v'_j} \\
    &= \sum_{v'_1} (\pi^{1}(g_{u_1}))_{v_1, v'_1} (\mathbf{m}_{\vec{u}}^{t})^i_{v'_1}  \\
    &= \pi^{1}(g_{u_1}) \cdot (\mathbf{m}_{\vec{u}}^{t})^i_{v'_1} =  (g \cdot (\mathbf{m}_{\vec{u}}^{t})^i)_{v_1} \\
\end{align*}

\textit{Proof of $G$-equivariance for the message passing block \eqref{eq:si_mp}-\eqref{eq:si_accu}.} From the invariance condition $g \cdot \alpha_{\vec{u}}^{t,j} = \alpha_{\vec{u}}^{t,j}$, clearly
\begin{align*}
    & \sum_{u_2, u_3, \cdots, u_{N}} \bigoplus\limits_{i,j} \ (g \cdot \mathbf{m}_{\vec{u}}^{t})^{i}  \cdot \alpha_{\vec{u}}^{t,j} \\
    &= \sum_{u_2, u_3, \cdots, u_{N}} \bigoplus\limits_{i,j} (\pi^{1}(g_{u_1}) \cdot (\mathbf{m}_{\vec{u}}^{t})^i) \cdot \alpha_{\vec{u}}^{t,j} \\
    &= \pi^{1}(g_{u_1}) \cdot \sum_{u_2, u_3, \cdots, u_{N}} \bigoplus\limits_{i,j} ((\mathbf{m}_{\vec{u}}^{t})^i) \cdot \alpha_{\vec{u}}^{t,j} \\
    &= \pi^{1}(g_{u_1}) \cdot \tilde{\mathbf{m}}_{u_1}^t = g \cdot \tilde{\mathbf{m}}_{u_1}^t
\end{align*}


\textit{Proof of $G$-equivariance for $\mathrm{EvNorm}$ \eqref{eq:si_evnorm}.} Note that the vector norm $\lvert \lvert \mathbf{x}_{n L}\lvert \lvert $ is invariant to unitary transformations $\mathbf{x}_{n L} \mapsto U^{L}_{g_u} \cdot \mathbf{x}_{n L}$. Then $\overline{(g \cdot \mathbf{x})} = \frac{\lvert \lvert \mathbf{x}_{n L}\lvert \lvert -\mu^x_{n L}}{\sigma^x_{n L}} = \bar{\mathbf{x}}$,
and $\widehat{(g \, \mathbf{x}_{n L})} = \frac{(\pi^{2}(g_{u}) \cdot x)_{n LM}}{\lvert \lvert \mathbf{x}_{n L}\lvert \lvert  + 1/\beta_{n L} + \epsilon} = \pi^{2}(g_{u}) \cdot \hat{\mathbf{x}}_{n L} = g\cdot \hat{\mathbf{x}}_{n L}$.

\textit{Proof of $G$-equivariance for the point-wise interaction block \eqref{eq:pi1}-\eqref{eq:pi3}. } Equivariances for \eqref{eq:pi1} and \eqref{eq:pi3} are direct consequences of the equivariance of EvNorm $\overline{(g \cdot \mathbf{x})} = \bar{\mathbf{x}}$ and $\widehat{(g \, \mathbf{x}_{n L})} = g\cdot \hat{\mathbf{x}}_{n L}$, if $g_u \cdot \mathbf{x}_{n L} = \pi^{2}(g_{u}) \cdot \mathbf{x}_{n L} \equiv U^{L}_{g_u} \cdot \mathbf{x}_{n L}$. Then it suffices to prove $g \cdot (\mathbf{q}_{u})_{n L} = U^{L}_{g_u} \cdot (\mathbf{q}_{u})_{n L}$, which is ensured by \eqref{eq:cge}:
\begin{align*}
    & (g_u \cdot \mathbf{g}_{u})_{n L M} + \sum_{L_1, L_2}\sum_{M_1, M_2} (g_u \cdot \mathbf{f}_{u}^t)_{n L_1 M_1} (g_u \cdot \mathbf{g}_u)_{n L_2 M_2} \, C^{\nu(n), LM}_{L_1 M_1; L_2 M_2} \\
    &= (U_{g_u}^{L} \cdot \mathbf{g}_{u})_{n L M} + \sum_{L_1, L_2}\sum_{M_1, M_2} (U_{g_u}^{L_1} \cdot \mathbf{f}_{u}^t)_{n L_1 M_1} (U_{g_u}^{L_2} \cdot \mathbf{g}_u)_{n L_2 M_2} \, C^{\nu(n), LM}_{L_1 M_1; L_2 M_2} \\
    &= (U_{g_u}^{L} \cdot \mathbf{g}_{u})_{n L M} + \sum_{L_1, L_2}\sum_{M_1, M_2}\sum_{M'_1, M'_2} (U_{g_u}^{L_1} \otimes U_{g_u}^{L_2})_{M_1,M'_1}^{M_2,M'_2} \cdot (\mathbf{f}_{u}^t)_{n L_1 M'_1} (\mathbf{g}_u)_{n L_2 M'_2} \, C^{\nu(n), LM}_{L_1 M_1; L_2 M_2} \\
    &\stackrel{\textrm{(S33)}}{=} (U_{g_u}^{L} \cdot \mathbf{g}_{u})_{n L M} + \sum_{L_1, L_2}\sum_{M'_1, M'_2} \sum_{M'} (U_{g_u}^{L})_{M, M'} \cdot \big( (\mathbf{f}_{u}^t)_{n L_1 M'_1} (\mathbf{g}_u)_{n L_2 M'_2} \, C^{\nu(n), LM'}_{L_1 M'_1; L_2 M'_2} \big) \\
    &= U_{g_u}^{L} \cdot \big( \mathbf{g}_{u} + \sum_{L_1, L_2}\sum_{M'_1, M'_2} (\mathbf{f}_{u}^t)_{n L_1 M'_1} (\mathbf{g}_u)_{n L_2 M'_2} \, C^{\nu(n), LM'}_{L_1 M'_1; L_2 M'_2} \big)_{n L M} \\
    &= U_{g_u}^{L} \cdot (\mathbf{q}_{u})_{n L M} = \pi^{2}({g_u}) \cdot (\mathbf{q}_{u})_{n L M} = g \cdot (\mathbf{q}_{u})_{n L M}
\end{align*}

For permutation equivariance, it suffices to realize $\sigma(\mathbf{T})\equiv\mathbf{T}$ due to the symmetric condition in Definition S2 so \eqref{eq:si_conv} is invariant under $\sigma$, the permutation invariance of $\psi$ (see Equation \ref{eq:l1}), and the actions of $\sigma$ on network layers in $\phi$ defined for a single dimension $\{(u;v)\}$ are trivial (since $\sigma(u) \equiv u$). 



\section{Supplementary numerical results}
\label{si_results}


\begin{table}[h!]
\caption{Test mean absolute errors (MAEs) on QM9 for atomistic deep learning models and OrbNet-Equi trained on 110k samples. 
OrbNet-Equi results on first 8 tasks are obtained by training on the residuals between the DFT reference labels and the tight-binding QM model estimations (delta-learning), because the tight-binding results for these targets can be directly obtained from the single-point calculation that featurizes the molecule. Results on the last 4 targets are obtained through directly training on the target properties (direct-learning). 
}
\label{table:qm9}
\centering
\begin{tabular}{m{0.1\textwidth}m{0.05\textwidth}m{0.06\textwidth}m{0.09\textwidth}m{0.1\textwidth}m{0.06\textwidth}m{0.09\textwidth}m{0.13\textwidth}}
\toprule
{Target} & {Unit} & {SchNet} & {Cormorant} & {DimeNet++} & {PaiNN} & {SphereNet} & {OrbNet-Equi} \\ 
\midrule
$\mu$  & {mD}    & {33}  & {38} & {29.7}  & {12}  & {26.9}  & { {6.3}$\pm$0.2} \\
$\epsilon_{\mathrm{HOMO}}$  & {meV} & {41}  & {32.9} & {24.6}  & {27.6}  & {23.6} & {{9.9}$\pm$0.02}   \\
$\epsilon_{\mathrm{LUMO}}$  & {meV} & {34} & {38}  &{19.5} &{20.4}  & {18.9} & {{12.7}$\pm$0.3}   \\
$\Delta\epsilon$ & {meV}    & {63} &{38} &{32.6} & {45.7} & {32.3} & {{17.3}$\pm$0.3}   \\
$U_0$ & {meV} & {14} & {22} & {6.3} & {5.9} & {6.3} & {{3.5}$\pm$0.1}    \\
$U$   & {meV} & {19} & {21} & {6.3} & {5.8} & {7.3} & {{3.5}$\pm$0.1}    \\
$H$   & {meV} & {14} & {21} & {6.5} & {6.0} & {6.4} & {{3.5}$\pm$0.1}    \\
$G$   & {meV} & {14} & {20} & {7.6} & {7.4} & {8.0} & {{5.2}$\pm$0.1}    \\
\midrule
$\alpha$ & $a_0^3$   & {0.235} & {0.085}  & {{0.044}} & {0.045} & {0.047}  & {{0.036}$\pm$0.002} \\
$\langle R^{2} \rangle$  & $a_0^2$ & {0.073} & {0.961}  & {0.331} & {0.066} & {0.292} & {{0.030}$\pm$0.001} \\
ZPVE & {meV} & {1.7} & {2.0} & {1.2} & {1.3} & {{1.1}} & {{1.11}$\pm$0.04}    \\
$c_v$ & $\mathrm{\frac{cal}{mol K}}$ & {0.033} & {0.026} & {0.023} & {0.024} & {0.022} & {{0.022}$\pm$0.001}  \\
\midrule
\midrule
std. MAE & \% & 1.76 & 1.44 & 0.98 & 1.01 & 0.94 & {0.47} \\
log. MAE & - & -5.2 & -5.0 & -5.7 & -5.8 & -5.7 & {-6.4} \\ 
\bottomrule
\end{tabular}
\end{table}

\begin{table}[h!]
\scriptsize
\caption{Test MAEs on the rMD17 dataset in terms of energies (in meV) and forces (in meV/\AA) for models trained on 1000 sample geometries for each molecular system. For OrbNet-Equi, both direct learning and delta learning results are reported. 
}
\label{table:rmd17}
\centering
\begin{tabular}{llcccc}
\toprule
Molecule & & FCHL19~\cite{rmd17} & NequIP ($l=3$)~\cite{nequip} & \makecell{OrbNet-Equi (direct learning)} & \makecell{OrbNet-Equi (delta learning)} \\ \midrule
\multirow{2}{*}{Aspirin}        & Energy  & 6.2    & 2.3 & 2.4 & 1.8 \\
               & Forces  & 20.9    & 8.5  &  7.6 & 6.1 \\ \midrule
\multirow{2}{*}{Azobenzene}        & Energy  & 2.8    & 0.7     &  1.1  & 0.63 \\
               & Forces  &  10.8   & 3.6  &  4.2 & 2.7  \\ \midrule
\multirow{2}{*}{Ethanol}        & Energy  & 0.9    & 0.4      & 0.62 & 0.42  \\
               & Forces & 6.2      & 3.4  & 3.7 & 2.6 \\ \midrule
\multirow{2}{*}{Malonaldehyde}  & Energy  & 1.5      & 0.8      & {1.2} & 0.80   \\
               & Forces  & 10.2      & 5.2  & 7.1 & 4.6  \\ \midrule
\multirow{2}{*}{Naphthalene}    & Energy  & 1.2     & 0.2      & {0.46}  & 0.27  \\
               & Forces  & 6.5      & 1.2  & 2.6 & 1.5  \\ \midrule
\multirow{2}{*}{Paracetamol}        & Energy  & 2.9    & 1.4     &  1.9 & 1.2 \\
               & Forces  &  12.2  & 6.9  &  7.1 & 4.5  \\ \midrule
\multirow{2}{*}{Salicylic Acid} & Energy  & 1.8      & 0.7      & {0.73} & 0.52  \\
               & Forces  & 9.5      & 4.0  & {3.8}  & 2.9 \\ \midrule
\multirow{2}{*}{Toluene}        & Energy  & 1.6      & 0.3      & {0.45} & 0.27   \\
               & Forces  & 8.8      & 1.6  & 2.5 & 1.6  \\ \midrule
\multirow{2}{*}{Uracil}         & Energy  & 0.4       & {0.4}  & {0.58} & 0.35   \\
               & Forces  & 4.2      & 3.2  & 3.8 & 2.4  \\ \midrule
\multirow{2}{*}{Benzene}        & Energy   & 0.3        & 0.04      & {0.07} & 0.02   \\
               & Forces & 2.6        & 0.3  & 0.73  & 0.27   \\ \bottomrule
\end{tabular}
\end{table}


\begin{table}[h!]
\scriptsize
\caption{OrbNet-Equi test force MAEs (in kcal/mol/\AA) on the original MD17 dataset using 1000 training geometries. }
\label{table:md17}
\centering
\begin{tabular}{lcc}
\toprule
Molecule & OrbNet-Equi (direct learning) & \makecell{OrbNet-Equi  (delta learning)} \\ \midrule
Aspirin  & {0.156} & 0.118 \\ \midrule
Ethanol  & {0.092} & 0.069 \\ \midrule
Malonaldehyde   & {0.159} & 0.128 \\ \midrule
Naphthalene  & {0.064} & 0.048 \\ \midrule
Salicylic Acid  & {0.097} & 0.067 \\ \midrule
Toluene  & {0.072}  & 0.057 \\ \midrule
Uracil   & {0.098} & 0.072  \\  \bottomrule
\end{tabular}
\end{table}

\begin{table}[h!]
\caption{OrbNet-Equi inference time breakdowns (mean/std in milliseconds) for the calculation of energy and forces on the Hutchison dataset~\cite{Hutch}.}
\centering
\label{table:infer}
\begin{tabular}{cc|cc}
\toprule
Feature generation & NN inference & NN back propagation & Nuclear gradients calculation \\
\midrule
85.8 $\pm$ 40.1 & 181 $\pm$ 83 & 273 $\pm$ 73 & 33.2 $\pm$ 1.8\\
\bottomrule
\end{tabular}
\end{table}

\begin{table}[h!]
\tiny
\caption{Summary statistics of representative semi-empirical quantum mechanics (GFN-xTB and GFN2-xTB), machine learning (ANI-2x), density functional theory (B97-3c) methods and OrbNet-Equi/SDC21 on down-steam tasks. See section~\ref{si_downstream} regarding results on geometry optimization tasks.
}
\label{table:downstream}
\centering
\begin{tabular}{cccccccc}
\toprule
Task & Dataset & Metric & GFN-xTB & GFN2-xTB & ANI-2x & B97-3c & OrbNet-Equi/SDC21 \\ \midrule
Conformer ordering & Hutchison~\cite{Hutch} & Med. $R^2$ / DLPNO-CCSD(T) & 0.62$\pm$0.04 & 0.64$\pm$0.04 & 0.63$\pm$0.06 & {0.90$\pm$0.01} & {0.87$\pm$0.02} \\
Conformer ordering & Hutchison~\cite{Hutch} & Med. $R^2$ / $\omega$B97X-D3/def2-TZVP & 0.64$\pm$0.04 & 0.69$\pm$0.04 & 0.68$\pm$0.04 & 0.97$\pm$0.01 & 0.96$\pm$0.01 \\
Torsion profiles & TorsionNet~\cite{rai2020torsionnet} & MAE\tablefootnote{With respect to $\omega$B97X-D3/def2-TZVP. Note that ANI-2x is trained on a different DFT theory and the number is provided for reference only.} (kcal/mol) & 0.948$\pm$0.017 & 0.731$\pm$0.013 & 0.893$\pm$0.017 & 0.284$\pm$0.006 & {0.173$\pm$0.003} \\
Geometry optimization & ROT34~\cite{rot34} & Avg. RMSD (Å) & 0.227$\pm$0.087 & 0.210$\pm$0.072 & - & {0.063$\pm$0.013} & {0.045$\pm$0.005} \\[-1pt]
Geometry optimization & MCONF~\cite{mconf} & Avg. RMSD (Å) & 0.899$\pm$0.106 & 0.603$\pm$0.064 & - & 0.511$\pm$0.072 & {0.227$\pm$0.042} \\ 
\bottomrule
\end{tabular}
\end{table}

\begin{table}[h!]
\small
\caption{MAEs (in kcal/mol) of binding energy predictions on the S66x10~\cite{smith2016revised} dataset, computed for different inter-molecular distances in $r_\mathrm{e}$ unit with $r_\mathrm{e}$ being the equilibrium inter-molecular distance. $\omega$B97X-D3/def2-TZVP binding energy results are used as the reference level of theory. Standard errors of the mean are reported in the parentheses. }
\label{table:s66x10}
\centering
\begin{tabular}{llllll}
\toprule
Distance ($r_\mathrm{e}$) & GFN-xTB        & GFN2-xTB       & ANI-2x         & B97-3c         & OrbNet-Equi/SDC21    \\ \midrule
0.7      & 6.7584(2.1923) & 6.8887(2.2193) & 2.3236(0.5964) & 1.7856(0.6036) & 1.6443(0.4657) \\
0.8      & 2.6225(0.7901) & 2.8569(0.8791) & 1.1433(0.2438) & 0.9751(0.2456) & 0.9241(0.2836) \\
0.9      & 1.4087(0.1956) & 1.3301(0.2715) & 1.0103(0.1603) & 0.5922(0.1034) & 0.5336(0.1515) \\
0.95     & 1.4365(0.1694) & 1.2018(0.1807) & 0.9752(0.1589) & 0.5018(0.0837) & 0.4341(0.1124) \\
1.0      & 1.5552(0.1730) & 1.1927(0.1773) & 0.9688(0.1484) & 0.4433(0.0673) & 0.3540(0.0881) \\
1.05     & 1.5962(0.1740) & 1.1960(0.1845) & 0.9501(0.1461) & 0.3756(0.0525) & 0.3090(0.0872) \\
1.1      & 1.5577(0.1751) & 1.1800(0.1848) & 0.9404(0.1620) & 0.3049(0.0435) & 0.3328(0.0946) \\
1.25     & 1.2690(0.1694) & 0.9764(0.1715) & 0.9645(0.1706) & 0.1344(0.0241) & 0.4432(0.0736) \\
1.5      & 0.8270(0.1533) & 0.5764(0.1211) & 0.8503(0.1362) & 0.0610(0.0165) & 0.4697(0.0687) \\
2.0      & 0.3346(0.0899) & 0.1664(0.0370) & 0.7139(0.2071) & 0.0294(0.0078) & 0.2820(0.0744) \\ \bottomrule
\end{tabular}
\end{table}

\begin{table}[h!]
\scriptsize
\caption{Subset-averaged WTMAD-2 ($\text{WTMAD-2}_i = \frac{1}{N_i} \sum_{j} \text{WTAD}_{i,j}$, see Methods~\ref{si_summary}) on the GMTKN55 collection of benchmarks, reported for all potential energy methods considered in this study against the CCSD(T)/CBS reference values. Standard errors of the mean are reported in the parentheses. For cases in which no reaction within a subset is supported by a method, results are marked as ``-''. The OrbNet-Equi/SDC21 (filtered) column corresponds to OrbNet-Equi/SDC21 evaluated on reactions that consist of chemical elements and electronic states appeared in the SDC21 training dataset, as shown in Figure~\ref{fig:7}b.}
\label{table:gmtkn55}
\centering
\begin{tabular}{ccccccccc}
\toprule
Group                         & Subset    & GFN1-xTB     & GFN2-xTB    & ANI-2x     & B97-3c    & $\omega$B97xD3   & OrbNet-Equi/SDC21  & OrbNet-Equi/SDC21 (filtered) \\ \midrule
\multirow{18}{*}{Prop. small} & W4-11     & 32.5(1.5)    & 22.0(1.2)   & -          & 1.4(0.1)  & 0.7(0.1)  & 31.1(1.5)    & -                      \\
                              & G21EA     & 392.6(185.7) & 158.3(12.3) & -          & 13.9(1.3) & 12.0(1.4) & 254.1(182.7) & -                      \\
                              & G21IP     & 31.2(2.0)    & 26.4(2.0)   & -          & 0.8(0.1)  & 0.7(0.1)  & 8.1(1.3)     & -                      \\
                              & DIPCS10   & 26.2(2.6)    & 23.9(5.4)   & -          & 0.4(0.1)  & 0.5(0.1)  & 4.4(2.2)     & 4.4(2.5)               \\
                              & PA26      & 48.8(0.5)    & 49.0(0.6)   & -          & 1.7(0.2)  & 1.0(0.1)  & 2.5(0.4)     & 2.5(0.4)               \\
                              & SIE4x4    & 163.6(27.8)  & 108.5(14.2) & -          & 38.0(5.2) & 20.5(3.1) & 133.0(30.1)  & -                      \\
                              & ALKBDE10  & 39.2(8.7)    & 35.5(9.0)   & -          & 4.5(1.0)  & 3.0(0.8)  & 42.6(7.3)    & -                      \\
                              & YBDE18    & 18.6(2.7)    & 22.1(2.7)   & -          & 5.9(1.0)  & 2.5(0.4)  & 23.9(2.0)    & 20.3(1.7)              \\
                              & AL2X6     & 24.0(6.5)    & 23.2(3.7)   & -          & 3.5(1.2)  & 4.9(0.5)  & 20.1(4.8)    & -                      \\
                              & HEAVYSB11 & 23.6(2.8)    & 6.0(1.6)    & -          & 2.5(0.5)  & 2.5(0.3)  & 28.1(5.2)    & -                      \\
                              & NBPRC     & 22.5(6.4)    & 21.6(6.1)   & -          & 3.2(1.0)  & 3.4(1.0)  & 23.8(4.5)    & 23.8(4.5)              \\
                              & ALK8      & 47.7(19.0)   & 21.7(7.4)   & -          & 3.2(0.8)  & 3.7(1.2)  & 68.2(40.9)   & 68.2(40.9)             \\
                              & RC21      & 35.1(4.5)    & 37.7(4.4)   & -          & 10.2(1.3) & 5.2(0.7)  & 36.2(4.9)    & -                      \\
                              & G2RC      & 32.5(5.7)    & 24.3(4.3)   & 34.3(7.9)  & 9.2(1.4)  & 5.1(0.8)  & 16.2(2.9)    & 16.2(3.0)              \\
                              & BH76RC    & 56.2(8.7)    & 49.2(9.6)   & 218.5(-)   & 9.7(2.4)  & 6.1(0.8)  & 46.0(7.0)    & 45.1(14.8)             \\
                              & FH51      & 22.1(2.8)    & 20.9(3.3)   & 24.5(4.2)  & 8.1(1.1)  & 4.5(0.5)  & 9.4(2.1)     & 9.4(2.1)               \\
                              & TAUT15    & 108.1(21.9)  & 18.3(4.8)   & 46.9(9.9)  & 31.9(4.8) & 19.6(3.1) & 21.8(5.5)    & 21.8(5.5)              \\
                              & DC13      & 38.9(10.1)   & 33.9(7.6)   & 23.5(13.3) & 11.7(2.0) & 7.0(1.5)  & 32.8(13.7)   & 13.3(3.7)              \\ \midrule
\multirow{9}{*}{Prop. large}      & MB16-43 & 18.5(2.2)  & 31.8(3.1)  & -          & 3.3(0.4)  & 4.9(0.4)  & 16.5(2.6)  & 31.3(18.6) \\
                              & DARC      & 27.7(1.6)    & 31.1(2.2)   & 9.6(1.7)   & 7.6(1.0)  & 2.2(0.7)  & 1.3(0.3)     & 1.3(0.3)               \\
                              & RSE43     & 50.7(3.7)    & 56.9(3.9)   & -          & 26.1(2.0) & 10.7(0.8) & 44.2(7.3)    & -                      \\
                              & BSR36     & 8.2(0.7)     & 9.7(1.6)    & 31.0(3.3)  & 6.7(0.5)  & 15.3(1.6) & 24.1(2.4)    & 24.1(2.4)              \\
                              & CDIE20    & 28.6(4.8)    & 25.3(4.1)   & 50.9(9.5)  & 27.8(3.1) & 10.1(2.3) & 16.0(3.6)    & 16.0(3.6)              \\
                              & ISO34     & 24.6(3.7)    & 26.9(4.2)   & 50.5(39.8) & 7.3(1.4)  & 4.6(0.7)  & 7.3(3.3)     & 7.3(3.3)               \\
                              & ISOL24    & 28.3(4.4)    & 30.3(4.7)   & 18.9(4.1)  & 13.5(2.8) & 7.1(1.2)  & 7.6(1.3)     & 7.6(1.3)               \\
                              & C60ISO    & 4.6(1.0)     & 3.4(0.9)    & 26.0(2.9)  & 3.6(1.1)  & 7.8(1.2)  & 2.3(0.4)     & 2.3(0.4)               \\
                              & PArel     & 55.8(13.2)   & 72.0(18.5)  & -          & 22.1(6.2) & 8.2(2.0)  & 43.5(9.3)    & 43.5(9.3)              \\ \midrule
\multirow{7}{*}{React. barriers}  & BH76    & 64.2(6.6)  & 59.9(6.6)  & 68.9(55.2) & 21.0(1.7) & 6.9(0.6)  & 57.0(6.0)  & 35.7(6.7)  \\
                              & BHPERI    & 25.4(2.1)    & 27.9(2.3)   & 65.7(9.6)  & 12.5(0.8) & 7.8(0.9)  & 10.5(2.4)    & 10.5(2.4)              \\
                              & BHDIV10   & 10.5(2.3)    & 10.2(2.6)   & 13.8(7.2)  & 7.3(1.5)  & 1.3(0.3)  & 8.2(2.1)     & 8.2(2.1)               \\
                              & INV24     & 10.4(2.2)    & 5.9(1.0)    & 26.2(5.7)  & 3.5(0.8)  & 2.9(0.8)  & 20.1(4.9)    & 20.1(4.9)              \\
                              & BHROT27   & 21.5(3.2)    & 10.6(1.6)   & 12.9(3.2)  & 5.5(1.0)  & 4.3(0.7)  & 5.4(0.9)     & 5.4(0.9)               \\
                              & PX13      & 14.1(3.1)    & 4.7(1.1)    & 22.7(7.4)  & 12.1(0.8) & 5.4(0.7)  & 22.1(5.7)    & 22.1(5.7)              \\
                              & WCPT18    & 8.6(1.3)     & 6.2(1.1)    & 10.0(1.6)  & 8.9(1.2)  & 3.5(0.6)  & 12.1(1.6)    & 12.1(1.6)              \\ \midrule
\multirow{12}{*}{Inter. mol. NCI} & RG18    & 31.8(7.1)  & 11.0(3.1)  & -          & 11.8(3.0) & 11.1(1.8) & 53.6(13.1) & -          \\
                              & ADIM6     & 17.1(2.8)    & 19.5(4.2)   & 5.8(1.2)   & 8.9(2.1)  & 6.2(2.2)  & 4.5(1.2)     & 4.5(1.2)               \\
                              & S22       & 10.4(1.7)    & 5.9(0.9)    & 11.7(2.8)  & 2.2(0.4)  & 2.8(0.5)  & 4.1(0.6)     & 4.1(0.6)               \\
                              & S66       & 11.2(0.8)    & 7.6(0.6)    & 11.5(1.2)  & 3.4(0.4)  & 5.4(0.4)  & 5.1(0.5)     & 5.1(0.5)               \\
                              & HEAVY28   & 30.0(9.6)    & 27.8(5.0)   & -          & 36.8(4.0) & 12.1(2.2) & 54.3(8.8)    & -                      \\
                              & WATER27   & 5.2(0.7)     & 2.1(0.3)    & 33.4(8.1)  & 6.6(0.9)  & 10.0(1.6) & 12.0(1.5)    & 12.0(1.5)              \\
                              & CARBHB12  & 6.3(1.4)     & 16.9(6.7)   & 58.3(17.1) & 19.5(4.3) & 7.8(1.2)  & 19.7(3.9)    & 19.7(3.9)              \\
                              & PNICO23   & 31.0(6.7)    & 14.7(2.7)   & 251.8(7.3) & 21.8(2.6) & 5.0(0.7)  & 39.1(9.5)    & 39.1(9.5)              \\
                              & HAL59     & 16.6(2.6)    & 15.8(1.7)   & 74.4(41.7) & 20.1(2.9) & 4.2(0.4)  & 33.5(6.0)    & 33.5(6.0)              \\
                              & AHB21     & 11.8(2.6)    & 7.5(1.2)    & -          & 8.3(1.2)  & 8.6(1.2)  & 18.6(3.7)    & 18.6(3.7)              \\
                              & CHB6      & 8.4(3.9)     & 11.5(2.3)   & -          & 2.9(1.2)  & 2.8(0.9)  & 24.0(10.7)   & 24.0(10.7)             \\
                              & IL16      & 3.0(0.6)     & 2.2(0.3)    & -          & 1.2(0.3)  & 1.1(0.2)  & 2.5(0.4)     & 2.5(0.4)               \\ \midrule
\multirow{9}{*}{Intra. mol. NCI}  & IDISP   & 26.1(14.4) & 27.1(17.0) & 82.5(45.7) & 15.6(5.3) & 11.1(3.6) & 19.7(8.9)  & 19.7(8.9)  \\
                              & ICONF     & 45.7(13.5)   & 28.3(4.8)   & 73.3(27.2) & 6.6(1.5)  & 5.9(1.4)  & 29.8(10.3)   & 29.8(10.3)             \\
                              & ACONF     & 20.5(3.4)    & 6.0(1.3)    & 4.8(1.1)   & 6.6(0.8)  & 2.7(0.3)  & 1.6(0.4)     & 1.6(0.4)               \\
                              & Amino20x4 & 26.0(2.1)    & 22.2(2.2)   & 23.2(2.2)  & 7.6(0.7)  & 6.1(0.5)  & 7.0(0.6)     & 7.0(0.6)               \\
                              & PCONF21   & 76.0(14.1)   & 61.6(10.1)  & 77.2(18.5) & 29.0(5.4) & 11.7(2.0) & 17.8(2.5)    & 17.8(2.5)              \\
                              & MCONF     & 16.5(1.2)    & 19.7(1.5)   & 9.4(0.9)   & 3.8(0.4)  & 5.5(0.4)  & 5.4(0.5)     & 5.4(0.5)               \\
                              & SCONF     & 30.9(12.0)   & 20.3(6.2)   & 30.5(7.0)  & 9.5(2.1)  & 3.7(1.2)  & 6.6(1.3)     & 6.6(1.3)               \\
                              & UPU23     & 12.3(1.6)    & 28.9(2.9)   & -          & 5.0(0.8)  & 9.4(1.0)  & 10.7(1.5)    & 10.7(1.5)              \\
                              & BUT14DIOL & 19.4(1.6)    & 25.4(1.8)   & 28.8(1.7)  & 8.4(0.5)  & 8.3(0.4)  & 11.9(0.5)    & 11.9(0.5)              \\ \bottomrule
\end{tabular}
\end{table}

\subsection{The QM9 dataset}

\label{sec:si_qm9_uncer}

We provide the QM9 MAEs on all 12 target properties as reported in Table~\ref{table:qm9}. The standardized MAE and standardized log MAE in Table~\ref{table:qm9} are computed following the Appendix C of~\cite{DimeNet}. Uncertainties for test MAEs are obtained by statistical bootstrapping with sample size 5000 and 100 iterations.


\subsection{MD17 and rMD17 datasets} 
\label{sec:md17}

The MD17 dataset \cite{md17} contains energy and force labels from molecular dynamics trajectories of small organic molecules, and is used to benchmark ML methods for modelling a single instance of a molecular potential energy surface. Recently the revised-MD17 (rMD17) dataset\cite{rmd17} was reported with improved label fidelity. For both the MD17 and the rMD17 dataset, We train OrbNet-Equi simultaneously on energies and forces of 1000 geometries of each molecule and test on another 1000 geometries of the same molecule, using previously reported dataset splits (Section \ref{si_training_md17}). All results are obtained without performing system-specific hyperparameter selections and without using additional regularization techniques such as model ensembling or stochastic weight averaging~\cite{izmailov2018averaging}.

As shown in Table \ref{table:rmd17} and Table \ref{table:md17}, OrbNet-Equi with direct learning achieves competitive accuracy when compared against the best results reported by kernel methods \cite{sGDML,rmd17} and graph neural networks (GNNs) \cite{DimeNet,nequip,schutt2021equivariant}. Additional performance gains are observed when the models are trained with the delta learning strategy, resulting in the lowest test MAEs for both energy and forces on most of the test molecules. 
We note that MD17 represents an ideal case scenario where abundant high-level reference calculations are available on the potential energy surface of a single chemical system and the configurations of interest only spans a thermally-accessible energy scale.
Despite the highly-interpolative nature of this learning task, OrbNet-Equi results still matches the accuracy that can be achieved with state-of-the-art neural network potentials for which system-dependent optimizations are often employed.

\subsection{Inference timings}

\label{si_infer}

Wall-clock timing results for evalulating the pretrained OrbNet-Equi/SDC21 model are reported on the Hutchison dataset~\cite{Hutch} which represents the distribution of realistic drug-like molecules,. All timing results are obtained using 16 cores of an Intel Xeon Gold 6130 CPU. We note that due to the use of a Lagrangian formalism we previous developed \cite{qiao2020multi} to efficiently compute the analytic nuclear gradients in which the operator derivatives with respect to atomic coordinates are not explicitly evaluated, for such medium-sized organic molecules the overhead for computing energy and forces in addition to only computing the energies is still dominated by neural network back propagation.

\subsection{Downstream benchmarks} 

\label{si_downstream}

Table~\ref{table:downstream}-\ref{table:gmtkn55} provide summary statistics of method performances on downstream main-group quantum chemistry benchmarks considered in this study.


Additional computational details for geometry optimization experiments are provided as follows. The symmetry-corrected root-mean-square-deviations (RMSDs) are computed between the test geometries and the reference DFT ($\omega$B97X-D3/def2-TZVP) geometries following a Hungarian algorithm~\cite{allen2014implementation} to account for equivalent atoms. OrbNet-Equi/SDC21 and GFN-xTB results are obtained using Entos Qcore version 1.1.0 with the L-BFGS algorithm with tight thresholds (energy change after iteration < 1E-6 a.u., gradient RMS < 4.5E-4 a.u, max element of gradient < 3E-4 a.u., optimization step RMS < 1.8E-3 a.u., and max element of optimization step < 1.2E-3 a.u.). GFN2-xTB results are obtained with the XTB~\cite{xtbcode} package with default settings. ANI-2x energy calculations are performed with the TorchANI~\cite{gao2020torchani} package and geometry optimizations are performed with the geomeTRIC optimizer~\cite{tric} with default settings. Using this software setting, ANI-2x optimizations are found unable to converge on 21 out of 52 conformers on MCONF and 1 out of 12 conformers on ROT34, and the average errors are reported as ``-''. 
On the subsets that ANI-2x geometry optimizations converged, the average RMSD for ANI-2x is 0.154$\pm$0.038 $\AA$ on ROT34 and  0.324$\pm$0.026 $\AA$ on MCONF versus the reference geometries.
The histograms and kernel density estimations displayed in Figure~\ref{fig:6}d are computed on the subsets where all methods successfully converged, that is, 11 conformers from ROT34 and 31 conformers from MCONF.

\clearpage

\section{Hyperparameters and training details}
\label{si_training}


\subsection{Model hyperparameters}

We use the same set of model hyperparameters to obtain all numerical experiment results on open benchmarks reported in this work. The hyperparameters are summarized in Table \ref{table:hyper} and Table \ref{table:feat}.  The hyperparameters for the OrbNet-Equi/SDC21 model is locally optimized based on a 4219-sample validation set from the training data distribution, with the additional difference of using LayerNorm~\cite{ba2016layer} for mean $\mu$ and variance $\sigma$ estimates, and $\epsilon=0.5$ in all EvNorm layers. This choice is made to improve model robustness when applied to extrapolative molecular geometries. 

\begin{table}[h!]
\setlength\extrarowheight{3pt}
\caption{The model hyperparameters for OrbNet-Equi used for benchmarking studies.}
\label{table:hyper}
\centering
\begin{tabular}{cccc}
\toprule
 Symbol & Meaning & Defined in & Value(s)  \\
 \midrule
 ${N}$ & Total number of feature channels in $\mathbf{h}^{t}$ & Methods.~\ref{sec:unite}, \eqref{eq:nto1} & 256 \\
 ${N}_{lp}$ & Number of feature channels for each $(l, p)$ in $\mathbf{h}^{t}$ & Methods.~\ref{sec:unite}, \eqref{eq:nto1} & See Table \ref{table:feat} \\
 $t_1$ & Number of convolution-message-\allowbreak passing update steps & Methods.~\ref{sec:unite} & 4 \\
 $t_2$ & Number of post-update \allowbreak point-wise interaction modules & Methods.~\ref{sec:unite} & 4 \\
 $I$ & Number of convolution channels $i$ & Methods.~\ref{sec:blockwise}, \eqref{eq:conv} & 8 \\
 $J$ & Number of attention heads $j$ & Methods.~\ref{sec:message_passing}, \eqref{eq:mp} & 8 \\
 $d^{\mathrm{MLP}}$ & Depth of MLPs & MLPs in \eqref{eq:pi01}-\eqref{eq:pi02} & 2 \\
  & Activation function & MLPs in \eqref{eq:pi01}-\eqref{eq:pi02} & Swish~\cite{ramachandran2017searching} \\
 ${N}^\upxi$ & Number of Radial basis functions $\xi$ & Methods.~\ref{sec:message_passing}, \eqref{eq:rbf} & 16 \\
  & Estimation scheme for ($\mu$, $\sigma$) for EvNorm, $t < t_1$ & Methods.~\ref{sec:evnorm}, \eqref{eq:EvNorm} & BatchNorm~\cite{ioffe2015batch} \\
  & Estimation scheme for ($\mu$, $\sigma$) for EvNorm, $t \geq t_1$ & Methods.~\ref{sec:evnorm}, \eqref{eq:EvNorm} & LayerNorm~\cite{ba2016layer} \\
  & Initialization of $\beta_{nlp}$ in EvNorm layers & Methods.~\ref{sec:evnorm}, \eqref{eq:EvNorm} & $\mathrm{Uniform}\big([0.5, 1.5)\big)$ \\
 $\epsilon$  & Stability factor $\epsilon$ in EvNorm layers & Methods.~\ref{sec:evnorm}, \eqref{eq:EvNorm} & 0.1 \\
 & Total number of parameters & - & 2.1M \\
\bottomrule
\end{tabular}
\end{table}

\begin{table}[h!]
\caption{The number of feature channels $N^{\mathrm{h}}_{lp}$ for each representation group $(l,p)$ of $\mathbf{h}^{t}$ used in this work across all values of $t$. Note that $l_\mathrm{max}=4$.}
\label{table:feat}
\centering
\begin{tabular}{c| ccccc}
\toprule
  ${N}_{lp}$ & $l=0$ & $l=1$ & $l=2$ & $l=3$ & $l=4$ \\
 \midrule
  $p=+1$ & 128 & 48 & 24 & 12 & 6 \\
  $p=-1$ & 24 & 8 & 4 & 2 & 0 \\
\bottomrule
\end{tabular}
\end{table}

\subsection{Training}
For all training setups we use the Adam optimizer~\cite{kingma2014adam} with maximal learning rate $5 \times 10^{-4}$ and parameters $\beta_1=0.9, \beta_2=0.999$ and $\epsilon=10^{-4}$. The loss function denoted as $\mathcal{L}$ below refers to a SmoothL1Loss function~\cite{girshick2015fast}. Batch sizes and the total number of epochs are adjusted for different benchmarks to account for their vastly different training set sizes, as detailed below. No additional regularization such as weight decay or early stopping is employed.

\subsubsection{QM9} For QM9 tasks we optimize the model using the loss $\mathcal{L}(y, y_{\theta})$ for each target $y$. We use a batch size of 64 when the training sample size > 1000, and a batch size of 16 for smaller training sizes. We employ a learning rate schedule of first performing linear warmup for 100 epochs to the maximal learning rate followed by a cosine learning rate annealing~\cite{loshchilov2016sgdr} for 200 epochs. Models are trained on a single Nvidia Tesla V100-SXM2-32GB GPU, taking around 36 hours for training runs with 110k training samples. 

\subsubsection{MD17} \label{si_training_md17}
For MD17, we optimize the model by simultaneously training on energies $E(\mathbf{R})$ and forces $\mathbf{F}(\mathbf{R})$, using the following loss function:
\begin{equation}
    \mathcal{L}_{\mathrm{E+F}}(E, \mathbf{F}; E_{\theta}, \mathbf{F}_{\theta}) \defeq c_{\mathrm{E}} \cdot \mathcal{L}(E(\mathbf{R}); E_{\theta}(\mathbf{R})) + c_{\mathrm{F}} \cdot \frac{1}{3\lvert A\lvert } \sum_{A}^{\lvert A\lvert } \sum_{m\in\{x,y,z\}}  \mathcal{L}(-\frac{\partial E_{\theta}(\mathbf{R}))}{\partial R_{A,m}} - F_{A,m}(\mathbf{R}))
\end{equation}
Following previous works~\cite{schutt2017schnet,DimeNet,nequip}, we set $c_{\mathrm{E}}=1$, $c_{\mathrm{F}}=1000 \AA$  for training on the rMD17 labels, and $c_{\mathrm{E}}=0$, $c_{\mathrm{F}}=1000 \AA$ for training on the original MD17 labels. For each molecular system, we use the 1000 geometries of the `train 01' subset given by~\cite{rmd17} for training and the 1000 geometries of the `test 01' subset for testing. We use a batch size of 8, and train the model on a single Nvidia Tesla V100-SXM2-32GB GPU for 1500 epochs using a step decay learning rate schedule, taking around 30 hours for each training run.

For proof-of-principle purposes, the gradients of tight-binding features with respect to atomic coordinates are obtained using finite difference with a 5-point stencil for each degree of freedom. The grid spacing between the stencil points are set to 0.01 Bohr. We note that in principle, this cost of evaluating and storing feature gradients can be avoided if the electronic structure method is implemented with back-propagation and the model can be trained end-to-end on both energy and force labels.

\subsubsection{Electron densities} \label{si_training_density}
For electron density, we train the models on the analytic $L^{2}$ density loss following~\cite{Grisafi2018}:
\begin{equation}
    \label{eq:dens_loss}
    \mathcal{L}_{\rho}(\rho, \hat{\rho}) \defeq \int \lVert \rho(\vec{r})-\hat{\rho}(\vec{r})\rVert ^{2} d{\vec{r}} = (\mathbf{d} - \hat{\mathbf{d}})^{T} \, \mathbf{S}^{\mathrm{\rho}} \, (\mathbf{d} - \hat{\mathbf{d}})
\end{equation}
where the density coefficients $\mathbf{d}\defeq \bigoplus_{A, n, l, m} d_{A}^{n l m}$ are defined in \eqref{eq:des_expand}, and $\mathbf{S}^{\mathrm{\rho}}$ is the overlap matrix of the density fitting basis $\{\chi\}$. A sparse COO format is used for $\mathbf{S}^{\mathrm{\rho}}$ to efficiently compute $\mathcal{L}_{\rho}$ during batched training.
We use a batch size of 64 and a cosine annealing learning schedule for training; the models are trained on a single Nvidia Tesla V100-SXM2-32GB GPU for 2000 epochs on the BFDb-SSI dataset taking 10 hours, and for 500 epochs on the QM9 dataset taking 120 hours. 

\subsubsection{The OrbNet-Equi/SDC21 model} 
The training dataset (see Methods~\ref{si_dataset}) contains different geometries $b_{\upeta}$ for each molecule $\eta$ in the dataset. We train on a loss function following~\cite{orbnet1}:
\begin{align}
    \mathcal{L}_{\mathrm{G}}(E(\eta, b_{\upeta}), E_{\theta}(\eta, b_{\upeta})) 
    &\defeq \mathcal{L}(E(\eta, b_{\upeta}), E_{\theta}(\eta, b_{\upeta}) \\
    &+ c_{\mathrm{G}} \cdot \mathcal{L}( E(\eta, b_{\upeta}) - E(\eta, \hat{b}_{\upeta}) , E_{\theta}(\eta, b_{\upeta}) - E_{\theta}(\eta, \hat{b}_{\upeta}) ) \nonumber
\end{align}
where $\hat{b}_{\upeta}$ is a geometry randomly sampled from all the geometries $\{ b_{\upeta} \}$ of each molecule $\eta$ within each mini-batch during training. We use $c_{\mathrm{G}}=10$ in this work. 
We train the model for 125 epochs on a Nvidia Tesla V100-SXM2-32GB GPU using a batch size of 64 and a cosine annealing learning rate schedule taking 64 hours.

\bibliographystyle{unsrtnat}
\bibliography{main}

\begin{thebibliography}{120}
\providecommand{\natexlab}[1]{#1}
\providecommand{\url}[1]{\texttt{#1}}
\expandafter\ifx\csname urlstyle\endcsname\relax
  \providecommand{\doi}[1]{doi: #1}\else
  \providecommand{\doi}{doi: \begingroup \urlstyle{rm}\Url}\fi

\bibitem[Ling(2022)]{ling2022review}
Chen Ling.
\newblock A review of the recent progress in battery informatics.
\newblock \emph{npj Computational Materials}, 8\penalty0 (1):\penalty0 1--22,
  2022.

\bibitem[Vamathevan et~al.(2019)Vamathevan, Clark, Czodrowski, Dunham, Ferran,
  Lee, Li, Madabhushi, Shah, Spitzer, et~al.]{vamathevan2019applications}
Jessica Vamathevan, Dominic Clark, Paul Czodrowski, Ian Dunham, Edgardo Ferran,
  George Lee, Bin Li, Anant Madabhushi, Parantu Shah, Michaela Spitzer, et~al.
\newblock Applications of machine learning in drug discovery and development.
\newblock \emph{Nature reviews Drug discovery}, 18\penalty0 (6):\penalty0
  463--477, 2019.

\bibitem[Baik and Friesner(2002)]{baik2002computing}
Mu-Hyun Baik and Richard~A Friesner.
\newblock Computing redox potentials in solution: Density functional theory as
  a tool for rational design of redox agents.
\newblock \emph{The Journal of Physical Chemistry A}, 106\penalty0
  (32):\penalty0 7407--7412, 2002.

\bibitem[Shoichet(2004)]{shoichet2004virtual}
Brian~K Shoichet.
\newblock Virtual screening of chemical libraries.
\newblock \emph{Nature}, 432\penalty0 (7019):\penalty0 862--865, 2004.

\bibitem[Cordova et~al.(2020)Cordova, Wodrich, Meyer, Sawatlon, and
  Corminboeuf]{cordova2020data}
Manuel Cordova, Matthew~D Wodrich, Benjamin Meyer, Boodsarin Sawatlon, and
  Clemence Corminboeuf.
\newblock Data-driven advancement of homogeneous nickel catalyst activity for
  aryl ether cleavage.
\newblock \emph{Acs Catalysis}, 10\penalty0 (13):\penalty0 7021--7031, 2020.

\bibitem[Szabo and Ostlund(1996)]{SzaboNesbet}
Attila Szabo and Neil~S. Ostlund.
\newblock \emph{{Modern Quantum Chemistry}}.
\newblock Dover, Mineola, 1996.
\newblock ISBN 0486691861.

\bibitem[Kohn and Sham(1965)]{kohn1965self}
Walter Kohn and Lu~Jeu Sham.
\newblock Self-consistent equations including exchange and correlation effects.
\newblock \emph{Physical review}, 140\penalty0 (4A):\penalty0 A1133, 1965.

\bibitem[MacKerell~Jr(2004)]{mackerell2004empirical}
Alexander~D MacKerell~Jr.
\newblock Empirical force fields for biological macromolecules: overview and
  issues.
\newblock \emph{Journal of computational chemistry}, 25\penalty0 (13):\penalty0
  1584--1604, 2004.

\bibitem[Behler and Parrinello(2007)]{Behler2007}
J{\"{o}}rg Behler and Michele Parrinello.
\newblock {Generalized neural-network representation of high-dimensional
  potential-energy surfaces}.
\newblock \emph{Phys. Rev. Lett.}, 98\penalty0 (14):\penalty0 146401, apr 2007.
\newblock ISSN 0031-9007.
\newblock \doi{10.1103/PhysRevLett.98.146401}.
\newblock URL \url{https://link.aps.org/doi/10.1103/PhysRevLett.98.146401}.

\bibitem[Behler(2016)]{Behler2016}
J{\"{o}}rg Behler.
\newblock {Perspective: Machine learning potentials for atomistic simulations}.
\newblock \emph{J. Chem. Phys.}, 145\penalty0 (17):\penalty0 170901, nov 2016.
\newblock ISSN 0021-9606.
\newblock \doi{10.1063/1.4966192}.
\newblock URL \url{http://aip.scitation.org/doi/10.1063/1.4966192}.

\bibitem[Christensen et~al.(2020)Christensen, Bratholm, Faber, and Anatole~von
  Lilienfeld]{christensen2020fchl}
Anders~S Christensen, Lars~A Bratholm, Felix~A Faber, and O~Anatole~von
  Lilienfeld.
\newblock {FCHL} revisited: Faster and more accurate quantum machine learning.
\newblock \emph{J. Chem. Phys.}, 152\penalty0 (4):\penalty0 044107, 2020.

\bibitem[Zhang et~al.(2018)Zhang, Han, Wang, Car, and Weinan]{zhang2018deep}
Linfeng Zhang, Jiequn Han, Han Wang, Roberto Car, and E~Weinan.
\newblock Deep potential molecular dynamics: a scalable model with the accuracy
  of quantum mechanics.
\newblock \emph{Physical review letters}, 120\penalty0 (14):\penalty0 143001,
  2018.

\bibitem[Sch{\"u}tt et~al.(2017)Sch{\"u}tt, Kindermans, Felix, Chmiela,
  Tkatchenko, and M{\"u}ller]{schutt2017schnet}
Kristof Sch{\"u}tt, Pieter-Jan Kindermans, Huziel Enoc~Sauceda Felix, Stefan
  Chmiela, Alexandre Tkatchenko, and Klaus-Robert M{\"u}ller.
\newblock Schnet: A continuous-filter convolutional neural network for modeling
  quantum interactions.
\newblock In \emph{Advances in neural information processing systems}, pages
  991--1001, 2017.

\bibitem[Unke and Meuwly(2019)]{unke2019physnet}
Oliver~T Unke and Markus Meuwly.
\newblock Physnet: A neural network for predicting energies, forces, dipole
  moments, and partial charges.
\newblock \emph{J. Chem. Theory Comput.}, 15\penalty0 (6):\penalty0 3678--3693,
  2019.

\bibitem[Liu et~al.(2020)Liu, Lin, Jia, Cheng, Jiang, Guo, and Ma]{DeepMoleNet}
Ziteng Liu, Liqiang Lin, Qingqing Jia, Zheng Cheng, Yanyan Jiang, Yanwen Guo,
  and Jing Ma.
\newblock Transferable multi-level attention neural network for accurate
  prediction of quantum chemistry properties via multi-task learning.
\newblock \emph{ChemRxiv}, 12588170:\penalty0 v1, 2020.

\bibitem[Klicpera et~al.(2020{\natexlab{a}})Klicpera, Gro{\ss}, and
  G{\"u}nnemann]{DimeNet}
Johannes Klicpera, Janek Gro{\ss}, and Stephan G{\"u}nnemann.
\newblock Directional message passing for molecular graphs.
\newblock In \emph{International Conference on Learning Representations
  (ICLR)}, 2020{\natexlab{a}}.

\bibitem[Liu et~al.(2021)Liu, Wang, Liu, Zhang, Oztekin, and Ji]{spherenet}
Yi~Liu, Limei Wang, Meng Liu, Xuan Zhang, Bora Oztekin, and Shuiwang Ji.
\newblock Spherical message passing for 3d graph networks.
\newblock \emph{arXiv preprint arXiv:2102.05013}, 2021.

\bibitem[Sch{\"u}tt et~al.(2021)Sch{\"u}tt, Unke, and
  Gastegger]{schutt2021equivariant}
Kristof~T Sch{\"u}tt, Oliver~T Unke, and Michael Gastegger.
\newblock Equivariant message passing for the prediction of tensorial
  properties and molecular spectra.
\newblock \emph{arXiv preprint arXiv:2102.03150}, 2021.

\bibitem[Townshend et~al.(2021)Townshend, Eismann, Watkins, Rangan, Karelina,
  Das, and Dror]{townshend2021geometric}
Raphael~JL Townshend, Stephan Eismann, Andrew~M Watkins, Ramya Rangan, Maria
  Karelina, Rhiju Das, and Ron~O Dror.
\newblock Geometric deep learning of rna structure.
\newblock \emph{Science}, 373\penalty0 (6558):\penalty0 1047--1051, 2021.

\bibitem[Cheng et~al.(2020)Cheng, Mazzola, Pickard, and
  Ceriotti]{cheng2020evidence}
Bingqing Cheng, Guglielmo Mazzola, Chris~J Pickard, and Michele Ceriotti.
\newblock Evidence for supercritical behaviour of high-pressure liquid
  hydrogen.
\newblock \emph{Nature}, 585\penalty0 (7824):\penalty0 217--220, 2020.

\bibitem[Folmsbee and Hutchison(2020)]{Hutch}
Dakota Folmsbee and Geoffrey Hutchison.
\newblock Assessing conformer energies using electronic structure and machine
  learning methods.
\newblock \emph{Int. J. Quantum Chem.}, page e26381, 2020.
\newblock \doi{10.1002/qua.26381}.

\bibitem[Rosenberger et~al.(2021)Rosenberger, Smith, and
  Garcia]{rosenberger2021modeling}
David Rosenberger, Justin~S Smith, and Angel~E Garcia.
\newblock Modeling of peptides with classical and novel machine learning force
  fields: A comparison.
\newblock \emph{The Journal of Physical Chemistry B}, 125\penalty0
  (14):\penalty0 3598--3612, 2021.

\bibitem[Cheng et~al.(2019)Cheng, Welborn, Christensen, and Miller~III]{mobml2}
Lixue Cheng, Matthew Welborn, Anders~S Christensen, and Thomas~F Miller~III.
\newblock A universal density matrix functional from molecular orbital-based
  machine learning: Transferability across organic molecules.
\newblock \emph{The Journal of chemical physics}, 150\penalty0 (13):\penalty0
  131103, 2019.

\bibitem[Chen et~al.(2020{\natexlab{a}})Chen, Zhang, Wang, and E]{deephf}
Yixiao Chen, Linfeng Zhang, Han Wang, and Weinan E.
\newblock Ground state energy functional with hartree--fock efficiency and
  chemical accuracy.
\newblock \emph{The Journal of Physical Chemistry A}, 124\penalty0
  (35):\penalty0 7155--7165, 2020{\natexlab{a}}.

\bibitem[Li et~al.(2018)Li, Collins, Tanha, Gordon, and Yaron]{Li2018}
Haichen Li, Christopher Collins, Matteus Tanha, Geoffrey~J. Gordon, and
  David~J. Yaron.
\newblock A density functional tight binding layer for deep learning of
  chemical hamiltonians.
\newblock \emph{J. Chem. Theory Comput.}, 14\penalty0 (11):\penalty0
  5764--5776, 2018.
\newblock \doi{10.1021/acs.jctc.8b00873}.

\bibitem[Qiao et~al.(2020{\natexlab{a}})Qiao, Welborn, Anandkumar, Manby, and
  Miller~III]{orbnet1}
Zhuoran Qiao, Matthew Welborn, Animashree Anandkumar, Frederick~R Manby, and
  Thomas~F Miller~III.
\newblock Orbnet: Deep learning for quantum chemistry using symmetry-adapted
  atomic-orbital features.
\newblock \emph{The Journal of Chemical Physics}, 153\penalty0 (12):\penalty0
  124111, 2020{\natexlab{a}}.

\bibitem[Li et~al.(2021)Li, Hoyer, Pederson, Sun, Cubuk, Riley, Burke,
  et~al.]{li2021kohn}
Li~Li, Stephan Hoyer, Ryan Pederson, Ruoxi Sun, Ekin~D Cubuk, Patrick Riley,
  Kieron Burke, et~al.
\newblock Kohn-sham equations as regularizer: Building prior knowledge into
  machine-learned physics.
\newblock \emph{Physical review letters}, 126\penalty0 (3):\penalty0 036401,
  2021.

\bibitem[Nagai et~al.(2020)Nagai, Akashi, and Sugino]{nagai2020completing}
Ryo Nagai, Ryosuke Akashi, and Osamu Sugino.
\newblock Completing density functional theory by machine learning hidden
  messages from molecules.
\newblock \emph{npj Computational Materials}, 6\penalty0 (1):\penalty0 1--8,
  2020.

\bibitem[Kirkpatrick et~al.(2021)Kirkpatrick, McMorrow, Turban, Gaunt, Spencer,
  Matthews, Obika, Thiry, Fortunato, Pfau, et~al.]{kirkpatrick2021pushing}
James Kirkpatrick, Brendan McMorrow, David~HP Turban, Alexander~L Gaunt,
  James~S Spencer, Alexander~GDG Matthews, Annette Obika, Louis Thiry, Meire
  Fortunato, David Pfau, et~al.
\newblock Pushing the frontiers of density functionals by solving the
  fractional electron problem.
\newblock \emph{Science}, 374\penalty0 (6573):\penalty0 1385--1389, 2021.

\bibitem[Welborn et~al.(2018)Welborn, Cheng, and Miller~III]{mobml1}
Matthew Welborn, Lixue Cheng, and Thomas~F Miller~III.
\newblock Transferability in machine learning for electronic structure via the
  molecular orbital basis.
\newblock \emph{J. Chem. Theory Comput.}, 14\penalty0 (9):\penalty0 4772--4779,
  2018.

\bibitem[Dick and Fernandez-Serra(2020)]{neuralxc}
Sebastian Dick and Marivi Fernandez-Serra.
\newblock Machine learning accurate exchange and correlation functionals of the
  electronic density.
\newblock \emph{Nat. Commun.}, 11\penalty0 (1):\penalty0 3509, Jul 2020.
\newblock ISSN 2041-1723.
\newblock \doi{10.1038/s41467-020-17265-7}.
\newblock URL \url{https://doi.org/10.1038/s41467-020-17265-7}.

\bibitem[Low et~al.(2022)Low, Coote, and Izgorodina]{low2022inclusion}
Kaycee Low, Michelle~L Coote, and Ekaterina~I Izgorodina.
\newblock Inclusion of more physics leads to less data: Learning the
  interaction energy as a function of electron deformation density with limited
  training data.
\newblock \emph{Journal of Chemical Theory and Computation}, 2022.

\bibitem[Karandashev and von Lilienfeld(2021)]{karandashev2021orbital}
Konstantin Karandashev and O~Anatole von Lilienfeld.
\newblock An orbital-based representation for accurate quantum machine
  learning.
\newblock \emph{arXiv preprint arXiv:2112.12877}, 2021.

\bibitem[Bronstein et~al.(2021)Bronstein, Bruna, Cohen, and
  Veli{\v{c}}kovi{\'c}]{bronstein2021geometric}
Michael~M Bronstein, Joan Bruna, Taco Cohen, and Petar Veli{\v{c}}kovi{\'c}.
\newblock Geometric deep learning: Grids, groups, graphs, geodesics, and
  gauges.
\newblock \emph{arXiv preprint arXiv:2104.13478}, 2021.

\bibitem[Cohen and Welling(2016)]{cohen2016group}
Taco Cohen and Max Welling.
\newblock Group equivariant convolutional networks.
\newblock In \emph{International conference on machine learning}, pages
  2990--2999. PMLR, 2016.

\bibitem[Weiler et~al.(2018)Weiler, Geiger, Welling, Boomsma, and
  Cohen]{weiler20183d}
Maurice Weiler, Mario Geiger, Max Welling, Wouter Boomsma, and Taco Cohen.
\newblock 3d steerable cnns: Learning rotationally equivariant features in
  volumetric data.
\newblock \emph{arXiv preprint arXiv:1807.02547}, 2018.

\bibitem[Kondor et~al.(2018)Kondor, Lin, and Trivedi]{kondor2018clebsch}
Risi Kondor, Zhen Lin, and Shubhendu Trivedi.
\newblock Clebsch--gordan nets: a fully fourier space spherical convolutional
  neural network.
\newblock \emph{Advances in Neural Information Processing Systems}, 31, 2018.

\bibitem[Anderson et~al.(2019)Anderson, Hy, and Kondor]{cormorant}
Brandon Anderson, Truong~Son Hy, and Risi Kondor.
\newblock Cormorant: Covariant molecular neural networks.
\newblock In H.~Wallach, H.~Larochelle, A.~Beygelzimer, F.~d\textquotesingle
  Alch\'{e}-Buc, E.~Fox, and R.~Garnett, editors, \emph{Advances in Neural
  Information Processing Systems 32}, pages 14537--14546. Curran Associates,
  Inc., 2019.
\newblock URL
  \url{http://papers.nips.cc/paper/9596-cormorant-covariant-molecular-neural-networks.pdf}.

\bibitem[Thomas et~al.(2018)Thomas, Smidt, Kearnes, Yang, Li, Kohlhoff, and
  Riley]{thomas2018tensor}
Nathaniel Thomas, Tess Smidt, Steven Kearnes, Lusann Yang, Li~Li, Kai Kohlhoff,
  and Patrick Riley.
\newblock Tensor field networks: Rotation-and translation-equivariant neural
  networks for 3d point clouds.
\newblock \emph{arXiv preprint arXiv:1802.08219}, 2018.

\bibitem[Fuchs et~al.(2020)Fuchs, Worrall, Fischer, and Welling]{fuchs2020se}
Fabian~B Fuchs, Daniel~E Worrall, Volker Fischer, and Max Welling.
\newblock Se (3)-transformers: 3d roto-translation equivariant attention
  networks.
\newblock \emph{arXiv preprint arXiv:2006.10503}, 2020.

\bibitem[Bannwarth et~al.(2021)Bannwarth, Caldeweyher, Ehlert, Hansen, Pracht,
  Seibert, Spicher, and Grimme]{bannwarth2021extended}
Christoph Bannwarth, Eike Caldeweyher, Sebastian Ehlert, Andreas Hansen,
  Philipp Pracht, Jakob Seibert, Sebastian Spicher, and Stefan Grimme.
\newblock Extended tight-binding quantum chemistry methods.
\newblock \emph{Wiley Interdisciplinary Reviews: Computational Molecular
  Science}, 11\penalty0 (2):\penalty0 e1493, 2021.

\bibitem[Chen et~al.(2020{\natexlab{b}})Chen, Lawniczak, Ding, Bygrave, Riahi,
  Manby, Mukhopadhyay, and Miller~III]{chen2020embedded}
Leanne~D Chen, James~J Lawniczak, Feizhi Ding, Peter~J Bygrave, Saleh Riahi,
  Frederick~R Manby, Sukrit Mukhopadhyay, and Thomas~F Miller~III.
\newblock Embedded mean-field theory for solution-phase transition-metal
  polyolefin catalysis.
\newblock \emph{Journal of Chemical Theory and Computation}, 16\penalty0
  (7):\penalty0 4226--4237, 2020{\natexlab{b}}.

\bibitem[Janet et~al.(2020)Janet, Ramesh, Duan, and Kulik]{janet2020accurate}
Jon~Paul Janet, Sahasrajit Ramesh, Chenru Duan, and Heather~J Kulik.
\newblock Accurate multiobjective design in a space of millions of transition
  metal complexes with neural-network-driven efficient global optimization.
\newblock \emph{ACS central science}, 6\penalty0 (4):\penalty0 513--524, 2020.

\bibitem[Dommer et~al.()Dommer, Casalino, Kearns, Rosenfeld, Wauer, Ahn, Russo,
  Oliveira, Morris, Bogetti, et~al.]{dommer2021covidisairborne}
Abigail Dommer, Lorenzo Casalino, Fiona Kearns, Mia Rosenfeld, Nicholas Wauer,
  Surl-Hee Ahn, John Russo, Sofia Oliveira, Clare Morris, Anthony Bogetti,
  et~al.
\newblock \# covidisairborne: Ai-enabled multiscale computational microscopy of
  delta sars-cov-2 in a respiratory aerosol.
\newblock \emph{bioRxiv}.

\bibitem[Grimme et~al.(2017)Grimme, Bannwarth, and Shushkov]{gfn1}
Stefan Grimme, Christoph Bannwarth, and Philip Shushkov.
\newblock A robust and accurate tight-binding quantum chemical method for
  structures, vibrational frequencies, and noncovalent interactions of large
  molecular systems parametrized for all spd-block elements ({Z}=1--86).
\newblock \emph{J. Chem. Theory Comput.}, 13\penalty0 (5):\penalty0 1989--2009,
  2017.

\bibitem[Ramakrishnan et~al.(2014)Ramakrishnan, Dral, Rupp, and
  Von~Lilienfeld]{qm9}
Raghunathan Ramakrishnan, Pavlo~O Dral, Matthias Rupp, and O~Anatole
  Von~Lilienfeld.
\newblock Quantum chemistry structures and properties of 134 kilo molecules.
\newblock \emph{Sci. Data}, 1\penalty0 (1):\penalty0 1--7, 2014.

\bibitem[Klicpera et~al.(2020{\natexlab{b}})Klicpera, Giri, Margraf, and
  G{\"u}nnemann]{klicpera2020fast}
Johannes Klicpera, Shankari Giri, Johannes~T Margraf, and Stephan
  G{\"u}nnemann.
\newblock Fast and uncertainty-aware directional message passing for
  non-equilibrium molecules.
\newblock \emph{arXiv preprint arXiv:2011.14115}, 2020{\natexlab{b}}.

\bibitem[Ramakrishnan et~al.(2015)Ramakrishnan, Dral, Rupp, and {von
  Lilienfeld}]{ramakrishnan2015big}
Raghunathan Ramakrishnan, Pavlo~O Dral, Matthias Rupp, and O~Anatole {von
  Lilienfeld}.
\newblock {Big data meets quantum chemistry approximations: the
  $\Delta$-machine learning approach}.
\newblock \emph{J. Chem. Theory Comput.}, 11\penalty0 (5):\penalty0 2087, 2015.

\bibitem[Huang and von Lilienfeld(2020)]{Huang2020}
Bing Huang and O.~Anatole von Lilienfeld.
\newblock Quantum machine learning using atom-in-molecule-based fragments
  selected on the fly.
\newblock \emph{Nature Chemistry}, 12\penalty0 (10):\penalty0 945--951,
  September 2020.
\newblock \doi{10.1038/s41557-020-0527-z}.
\newblock URL \url{https://doi.org/10.1038/s41557-020-0527-z}.

\bibitem[Faber et~al.(2018)Faber, Christensen, Huang, and von
  Lilienfeld]{Faber2018}
Felix~A Faber, Anders~S Christensen, Bing Huang, and O~Anatole von Lilienfeld.
\newblock Alchemical and structural distribution based representation for
  universal quantum machine learning.
\newblock \emph{J. Chem. Phys.}, 148\penalty0 (24):\penalty0 241717, 2018.
\newblock \doi{10.1063/1.5020710}.

\bibitem[Bart{\'o}k et~al.(2017)Bart{\'o}k, De, Poelking, Bernstein, Kermode,
  Cs{\'a}nyi, and Ceriotti]{bartok2017machine}
Albert~P Bart{\'o}k, Sandip De, Carl Poelking, Noam Bernstein, James~R Kermode,
  G{\'a}bor Cs{\'a}nyi, and Michele Ceriotti.
\newblock Machine learning unifies the modeling of materials and molecules.
\newblock \emph{Science advances}, 3\penalty0 (12):\penalty0 e1701816, 2017.

\bibitem[Christensen et~al.(2019{\natexlab{a}})Christensen, Faber, and von
  Lilienfeld]{christensen2019operators}
Anders~S Christensen, Felix~A Faber, and O~Anatole von Lilienfeld.
\newblock Operators in quantum machine learning: Response properties in
  chemical space.
\newblock \emph{The Journal of chemical physics}, 150\penalty0 (6):\penalty0
  064105, 2019{\natexlab{a}}.

\bibitem[Veit et~al.(2020)Veit, Wilkins, Yang, DiStasio~Jr, and
  Ceriotti]{veit2020predicting}
Max Veit, David~M Wilkins, Yang Yang, Robert~A DiStasio~Jr, and Michele
  Ceriotti.
\newblock Predicting molecular dipole moments by combining atomic partial
  charges and atomic dipoles.
\newblock \emph{The Journal of Chemical Physics}, 153\penalty0 (2):\penalty0
  024113, 2020.

\bibitem[Yang and Mortier(1986)]{fukui}
Weitao Yang and Wilfried~J Mortier.
\newblock The use of global and local molecular parameters for the analysis of
  the gas-phase basicity of amines.
\newblock \emph{Journal of the American Chemical Society}, 108\penalty0
  (19):\penalty0 5708--5711, 1986.

\bibitem[Silva et~al.(2007)Silva, Ediz, Yaron, and
  Armitage]{silva2007experimental}
Gloria~L Silva, Volkan Ediz, David Yaron, and Bruce~A Armitage.
\newblock Experimental and computational investigation of unsymmetrical cyanine
  dyes: understanding torsionally responsive fluorogenic dyes.
\newblock \emph{Journal of the American Chemical Society}, 129\penalty0
  (17):\penalty0 5710--5718, 2007.

\bibitem[Atz et~al.(2021)Atz, Isert, Böcker, Jiménez-Luna, and
  Schneider]{delfta}
Kenneth Atz, Clemens Isert, Markus N.~A. Böcker, José Jiménez-Luna, and
  Gisbert Schneider.
\newblock $\delta$-quantum machine learning for medicinal chemistry.
\newblock \emph{ChemRxiv}, 2021.
\newblock \doi{10.26434/chemrxiv-2021-fz6v7-v2}.

\bibitem[Christensen and von Lilienfeld(2020)]{rmd17}
Anders~S Christensen and O~Anatole von Lilienfeld.
\newblock On the role of gradients for machine learning of molecular energies
  and forces.
\newblock \emph{Machine Learning: Science and Technology}, 1\penalty0
  (4):\penalty0 045018, 2020.

\bibitem[Batzner et~al.(2021)Batzner, Smidt, Sun, Mailoa, Kornbluth, Molinari,
  and Kozinsky]{nequip}
Simon Batzner, Tess~E Smidt, Lixin Sun, Jonathan~P Mailoa, Mordechai Kornbluth,
  Nicola Molinari, and Boris Kozinsky.
\newblock Se (3)-equivariant graph neural networks for data-efficient and
  accurate interatomic potentials.
\newblock \emph{arXiv preprint arXiv:2101.03164}, 2021.

\bibitem[Chmiela et~al.(2017)Chmiela, Tkatchenko, Sauceda, Poltavsky,
  Sch{\"u}tt, and M{\"u}ller]{md17}
Stefan Chmiela, Alexandre Tkatchenko, Huziel~E Sauceda, Igor Poltavsky,
  Kristof~T Sch{\"u}tt, and Klaus-Robert M{\"u}ller.
\newblock Machine learning of accurate energy-conserving molecular force
  fields.
\newblock \emph{Science advances}, 3\penalty0 (5):\penalty0 e1603015, 2017.

\bibitem[Burns et~al.(2017)Burns, Faver, Zheng, Marshall, Smith,
  Vanommeslaeghe, MacKerell, Merz, and Sherrill]{BFGDBBurns2017}
Lori~A. Burns, John~C. Faver, Zheng Zheng, Michael~S. Marshall, Daniel G.~A.
  Smith, Kenno Vanommeslaeghe, Alexander~D. MacKerell, Kenneth~M. Merz, and
  C.~David Sherrill.
\newblock The {BioFragment} database ({BFDb}): An open-data platform for
  computational chemistry analysis of noncovalent interactions.
\newblock \emph{The Journal of Chemical Physics}, 147\penalty0 (16):\penalty0
  161727, October 2017.
\newblock \doi{10.1063/1.5001028}.
\newblock URL \url{https://doi.org/10.1063/1.5001028}.

\bibitem[Fabrizio et~al.(2019)Fabrizio, Grisafi, Meyer, Ceriotti, and
  Corminboeuf]{fabrizio2019electron}
Alberto Fabrizio, Andrea Grisafi, Benjamin Meyer, Michele Ceriotti, and
  Clemence Corminboeuf.
\newblock Electron density learning of non-covalent systems.
\newblock \emph{Chemical science}, 10\penalty0 (41):\penalty0 9424--9432, 2019.

\bibitem[J{\o}rgensen and Bhowmik(2020)]{jorgensen2020deepdft}
Peter~Bj{\o}rn J{\o}rgensen and Arghya Bhowmik.
\newblock Deepdft: Neural message passing network for accurate charge density
  prediction.
\newblock \emph{arXiv preprint arXiv:2011.03346}, 2020.

\bibitem[Marqusee and Baldwin(1987)]{marqusee1987helix}
Susan Marqusee and Robert~L Baldwin.
\newblock Helix stabilization by glu-... lys+ salt bridges in short peptides of
  de novo design.
\newblock \emph{Proceedings of the National Academy of Sciences}, 84\penalty0
  (24):\penalty0 8898--8902, 1987.

\bibitem[Smith et~al.(2019)Smith, Nebgen, Zubatyuk, Lubbers, Devereux, Barros,
  Tretiak, Isayev, and Roitberg]{smith2019approaching}
Justin~S Smith, Benjamin~T Nebgen, Roman Zubatyuk, Nicholas Lubbers, Christian
  Devereux, Kipton Barros, Sergei Tretiak, Olexandr Isayev, and Adrian~E
  Roitberg.
\newblock Approaching coupled cluster accuracy with a general-purpose neural
  network potential through transfer learning.
\newblock \emph{Nature communications}, 10\penalty0 (1):\penalty0 1--8, 2019.

\bibitem[Zheng et~al.(2021)Zheng, Zubatyuk, Wu, Isayev, and
  Dral]{zheng2021artificial}
Peikun Zheng, Roman Zubatyuk, Wei Wu, Olexandr Isayev, and Pavlo~O Dral.
\newblock Artificial intelligence-enhanced quantum chemical method with broad
  applicability.
\newblock \emph{Nature communications}, 12\penalty0 (1):\penalty0 1--13, 2021.

\bibitem[Brandenburg et~al.(2018)Brandenburg, Bannwarth, Hansen, and
  Grimme]{brandenburg2018b97}
Jan~Gerit Brandenburg, Christoph Bannwarth, Andreas Hansen, and Stefan Grimme.
\newblock B97-3c: A revised low-cost variant of the b97-d density functional
  method.
\newblock \emph{The Journal of chemical physics}, 148\penalty0 (6):\penalty0
  064104, 2018.

\bibitem[Grimme et~al.(2021)Grimme, Hansen, Ehlert, and
  Mewes]{grimme2021r2scan}
Stefan Grimme, Andreas Hansen, Sebastian Ehlert, and Jan-Michael Mewes.
\newblock r2scan-3c: A “swiss army knife” composite electronic-structure
  method.
\newblock \emph{The Journal of Chemical Physics}, 154\penalty0 (6):\penalty0
  064103, 2021.

\bibitem[Rai et~al.(2022)Rai, Sresht, Yang, Unwalla, Tu, Mathiowetz, and
  Bakken]{rai2020torsionnet}
Brajesh~K. Rai, Vishnu Sresht, Qingyi Yang, Ray Unwalla, Meihua Tu, Alan~M.
  Mathiowetz, and Gregory~A. Bakken.
\newblock Torsionnet: A deep neural network to rapidly predict small-molecule
  torsional energy profiles with the accuracy of quantum mechanics.
\newblock \emph{Journal of Chemical Information and Modeling}, 62\penalty0
  (4):\penalty0 785--800, 2022.
\newblock \doi{10.1021/acs.jcim.1c01346}.
\newblock URL \url{https://doi.org/10.1021/acs.jcim.1c01346}.
\newblock PMID: 35119861.

\bibitem[Bannwarth et~al.(2019)Bannwarth, Ehlert, and Grimme]{gfn2}
Christoph Bannwarth, Sebastian Ehlert, and Stefan Grimme.
\newblock {GFN2-xTB} — an accurate and broadly parametrized self-consistent
  tight-binding quantum chemical method with multipole electrostatics and
  density-dependent dispersion contributions.
\newblock \emph{J. Chem. Theory Comput.}, 15\penalty0 (3):\penalty0 1652--1671,
  2019.

\bibitem[Smith et~al.(2016)Smith, Burns, Patkowski, and
  Sherrill]{smith2016revised}
Daniel~GA Smith, Lori~A Burns, Konrad Patkowski, and C~David Sherrill.
\newblock Revised damping parameters for the d3 dispersion correction to
  density functional theory.
\newblock \emph{The journal of physical chemistry letters}, 7\penalty0
  (12):\penalty0 2197--2203, 2016.

\bibitem[Risthaus et~al.(2014)Risthaus, Steinmetz, and Grimme]{rot34}
Tobias Risthaus, Marc Steinmetz, and Stefan Grimme.
\newblock Implementation of nuclear gradients of range-separated hybrid density
  functionals and benchmarking on rotational constants for organic molecules.
\newblock \emph{Journal of Computational Chemistry}, 35\penalty0 (20):\penalty0
  1509--1516, 2014.

\bibitem[Fogueri et~al.(2013)Fogueri, Kozuch, Karton, and Martin]{mconf}
Uma~R Fogueri, Sebastian Kozuch, Amir Karton, and Jan~ML Martin.
\newblock The melatonin conformer space: Benchmark and assessment of wave
  function and dft methods for a paradigmatic biological and pharmacological
  molecule.
\newblock \emph{The Journal of Physical Chemistry A}, 117\penalty0
  (10):\penalty0 2269--2277, 2013.

\bibitem[Curtiss et~al.(1991)Curtiss, Raghavachari, Trucks, and
  Pople]{curtiss1991gaussian}
Larry~A Curtiss, Krishnan Raghavachari, Gary~W Trucks, and John~A Pople.
\newblock Gaussian-2 theory for molecular energies of first-and second-row
  compounds.
\newblock \emph{The Journal of chemical physics}, 94\penalty0 (11):\penalty0
  7221--7230, 1991.

\bibitem[Studer and Curran(2014)]{studer2014electron}
Armido Studer and Dennis~P Curran.
\newblock The electron is a catalyst.
\newblock \emph{Nature Chemistry}, 6\penalty0 (9):\penalty0 765--773, 2014.

\bibitem[Goerigk et~al.(2017)Goerigk, Hansen, Bauer, Ehrlich, Najibi, and
  Grimme]{gmtkn55}
Lars Goerigk, Andreas Hansen, Christoph Bauer, Stephan Ehrlich, Asim Najibi,
  and Stefan Grimme.
\newblock A look at the density functional theory zoo with the advanced gmtkn55
  database for general main group thermochemistry, kinetics and noncovalent
  interactions.
\newblock \emph{Physical Chemistry Chemical Physics}, 19\penalty0
  (48):\penalty0 32184--32215, 2017.

\bibitem[Devereux et~al.(2020)Devereux, Smith, Davis, Barros, Zubatyuk, Isayev,
  and Roitberg]{ani2x}
Christian Devereux, Justin~S Smith, Kate~K Davis, Kipton Barros, Roman
  Zubatyuk, Olexandr Isayev, and Adrian~E Roitberg.
\newblock Extending the applicability of the ani deep learning molecular
  potential to sulfur and halogens.
\newblock \emph{Journal of Chemical Theory and Computation}, 16\penalty0
  (7):\penalty0 4192--4202, 2020.

\bibitem[Lao et~al.(2015)Lao, Sch\"affer, Jansen, and Herbert]{lao2015accurate}
Ka~Un Lao, Rainer Sch\"affer, Georg Jansen, and John~M Herbert.
\newblock Accurate description of intermolecular interactions involving ions
  using symmetry-adapted perturbation theory.
\newblock \emph{Journal of Chemical Theory and Computation}, 11\penalty0
  (6):\penalty0 2473--2486, 2015.

\bibitem[Karton et~al.(2011)Karton, Daon, and Martin]{karton2011w4}
Amir Karton, Shauli Daon, and Jan~ML Martin.
\newblock W4-11: A high-confidence benchmark dataset for computational
  thermochemistry derived from first-principles w4 data.
\newblock \emph{Chemical Physics Letters}, 510\penalty0 (4-6):\penalty0
  165--178, 2011.

\bibitem[Li et~al.()Li, Xu, Dang, and Houk]{li2022dispersion}
Bo~Li, Hui Xu, Yanfeng Dang, and KN~Houk.
\newblock Dispersion and steric effects on enantio-/diastereoselectivities in
  synergistic dual transition-metal catalysis.
\newblock \emph{Journal of the American Chemical Society}.

\bibitem[Lampret et~al.(2020)Lampret, Duan, Hofmann, Winkler, Armstrong, and
  Happe]{lampret2020roles}
Oliver Lampret, Jifu Duan, Eckhard Hofmann, Martin Winkler, Fraser~A Armstrong,
  and Thomas Happe.
\newblock The roles of long-range proton-coupled electron transfer in the
  directionality and efficiency of [fefe]-hydrogenases.
\newblock \emph{Proceedings of the National Academy of Sciences}, 117\penalty0
  (34):\penalty0 20520--20529, 2020.

\bibitem[Chen et~al.(2020{\natexlab{c}})Chen, Yu, Wang, and
  Anandkumar]{chen2020automated}
Wuyang Chen, Zhiding Yu, Zhangyang Wang, and Animashree Anandkumar.
\newblock Automated synthetic-to-real generalization.
\newblock In \emph{International Conference on Machine Learning}, pages
  1746--1756. PMLR, 2020{\natexlab{c}}.

\bibitem[Sakurai and Commins(1995)]{sakurai1995modern}
Jun~John Sakurai and Eugene~D Commins.
\newblock Modern quantum mechanics, revised edition, 1995.

\bibitem[Gilmer et~al.(2017)Gilmer, Schoenholz, Riley, Vinyals, and
  Dahl]{gilmer2017neural}
Justin Gilmer, Samuel~S Schoenholz, Patrick~F Riley, Oriol Vinyals, and
  George~E Dahl.
\newblock Neural message passing for quantum chemistry.
\newblock In \emph{Proceedings of the 34th International Conference on Machine
  Learning-Volume 70}, pages 1263--1272, 2017.

\bibitem[Dunning and Hay(1977)]{dunning1977gaussian}
Thom~H Dunning and P~Jeffrey Hay.
\newblock Gaussian basis sets for molecular calculations.
\newblock In \emph{Methods of electronic structure theory}, pages 1--27.
  Springer, 1977.

\bibitem[Manby et~al.(2019)Manby, Miller, Bygrave, Ding, Dresselhaus,
  Batista-Romero, Buccheri, Bungey, Lee, Meli, Miyamoto, Steinmann, Tsuchiya,
  Welborn, Wiles, and Williams]{entos}
Frederick Manby, Thomas Miller, Peter Bygrave, Feizhi Ding, Thomas Dresselhaus,
  Fidel Batista-Romero, Alexander Buccheri, Callum Bungey, Sebastian Lee, Rocco
  Meli, Kaito Miyamoto, Casper Steinmann, Takashi Tsuchiya, Matthew Welborn,
  Timothy Wiles, and Zack Williams.
\newblock {entos: A Quantum Molecular Simulation Package}.
\newblock \emph{ChemRxiv preprint 10.26434/chemrxiv.7762646.v2}, 2019.
\newblock URL
  \url{https://chemrxiv.org/articles/entos_A_Quantum_Molecular_Simulation_Package/7762646}.

\bibitem[Ioffe and Szegedy(2015)]{ioffe2015batch}
Sergey Ioffe and Christian Szegedy.
\newblock Batch normalization: Accelerating deep network training by reducing
  internal covariate shift.
\newblock In \emph{International Conference on Machine Learning}, pages
  448--456, 2015.

\bibitem[Ba et~al.(2016)Ba, Kiros, and Hinton]{ba2016layer}
Jimmy~Lei Ba, Jamie~Ryan Kiros, and Geoffrey~E Hinton.
\newblock Layer normalization.
\newblock \emph{arXiv preprint arXiv:1607.06450}, 2016.

\bibitem[Qiao et~al.(2020{\natexlab{b}})Qiao, Ding, Welborn, Bygrave, Smith,
  Anandkumar, Manby, and Miller~III]{qiao2020multi}
Zhuoran Qiao, Feizhi Ding, Matthew Welborn, Peter~J Bygrave, Daniel~GA Smith,
  Animashree Anandkumar, Frederick~R Manby, and Thomas~F Miller~III.
\newblock Multi-task learning for electronic structure to predict and explore
  molecular potential energy surfaces.
\newblock \emph{arXiv preprint arXiv:2011.02680}, 2020{\natexlab{b}}.

\bibitem[che(2020)]{chembl27}
{CHEMBL} database release 27, May 2020.
\newblock URL \url{https://doi.org/10.6019/chembl.database.27}.

\bibitem[Smith et~al.(2017)Smith, Isayev, and Roitberg]{SmithANI2017}
Justin~S. Smith, Olexandr Isayev, and Adrian~E. Roitberg.
\newblock {ANI}-1, a data set of 20 million calculated off-equilibrium
  conformations for organic molecules.
\newblock \emph{Scientific Data}, 4\penalty0 (1), December 2017.
\newblock \doi{10.1038/sdata.2017.193}.
\newblock URL \url{https://doi.org/10.1038/sdata.2017.193}.

\bibitem[Ropp et~al.(2019)Ropp, Kaminsky, Yablonski, and Durrant]{DimorphiteDL}
Patrick~J. Ropp, Jesse~C. Kaminsky, Sara Yablonski, and Jacob~D. Durrant.
\newblock Dimorphite-{DL}: an open-source program for enumerating the
  ionization states of drug-like small molecules.
\newblock \emph{Journal of Cheminformatics}, 11\penalty0 (1), February 2019.
\newblock \doi{10.1186/s13321-019-0336-9}.
\newblock URL \url{https://doi.org/10.1186/s13321-019-0336-9}.

\bibitem[Bento et~al.(2020)Bento, Hersey, F{\'{e}}lix, Landrum, Gaulton,
  Atkinson, Bellis, Veij, and Leach]{ChemblPipelineBento2020}
A.~Patr{\'{\i}}cia Bento, Anne Hersey, Eloy F{\'{e}}lix, Greg Landrum, Anna
  Gaulton, Francis Atkinson, Louisa~J. Bellis, Marleen~De Veij, and Andrew~R.
  Leach.
\newblock An open source chemical structure curation pipeline using {RDKit}.
\newblock \emph{Journal of Cheminformatics}, 12\penalty0 (1), September 2020.
\newblock \doi{10.1186/s13321-020-00456-1}.
\newblock URL \url{https://doi.org/10.1186/s13321-020-00456-1}.

\bibitem[Jure{\v{c}}ka et~al.(2006)Jure{\v{c}}ka, {\v{S}}poner,
  {\v{C}}ern{\'{y}}, and Hobza]{JSCH}
Petr Jure{\v{c}}ka, Ji{\v{r}}{\'{\i}} {\v{S}}poner, Ji{\v{r}}{\'{\i}}
  {\v{C}}ern{\'{y}}, and Pavel Hobza.
\newblock Benchmark database of accurate ({MP}2 and {CCSD}(t) complete basis
  set limit) interaction energies of small model complexes, {DNA} base pairs,
  and amino acid pairs.
\newblock \emph{Phys. Chem. Chem. Phys.}, 8\penalty0 (17):\penalty0 1985--1993,
  2006.
\newblock \doi{10.1039/b600027d}.
\newblock URL \url{https://doi.org/10.1039/b600027d}.

\bibitem[Lin et~al.(2013)Lin, Li, Mao, and Chai]{wb97xd}
You-Sheng Lin, Guan-De Li, Shan-Ping Mao, and Jeng-Da Chai.
\newblock Long-range corrected hybrid density functionals with improved
  dispersion corrections.
\newblock \emph{J. Chem. Theory Comput.}, 9\penalty0 (1):\penalty0 263--272,
  2013.

\bibitem[Weigend and Ahlrichs(2005{\natexlab{a}})]{Weigend2005}
Florian Weigend and Reinhart Ahlrichs.
\newblock Balanced basis sets of split valence, triple zeta valence and
  quadruple zeta valence quality for h to rn: Design and assessment of
  accuracy.
\newblock \emph{Physical Chemistry Chemical Physics}, 7\penalty0 (18):\penalty0
  3297, 2005{\natexlab{a}}.
\newblock \doi{10.1039/b508541a}.
\newblock URL \url{https://doi.org/10.1039/b508541a}.

\bibitem[Lee et~al.(1988)Lee, Yang, and Parr]{Lee1988}
Chengteh Lee, Weitao Yang, and Robert~G Parr.
\newblock {Development of the Colle-Salvetti correlation-energy formula into a
  functional of the electron density}.
\newblock \emph{Phys. Rev. B}, 37\penalty0 (2):\penalty0 785, 1988.
\newblock \doi{10.1103/PhysRevB.37.785}.

\bibitem[Weigend and Ahlrichs(2005{\natexlab{b}})]{def2tzvp}
Florian Weigend and Reinhart Ahlrichs.
\newblock Balanced basis sets of split valence, triple zeta valence and
  quadruple zeta valence quality for {H} to {Rn}: Design and assessment of
  accuracy.
\newblock \emph{Phys. Chem. Chem. Phys.}, 7, 2005{\natexlab{b}}.

\bibitem[Weigend(2008)]{weigend2008hartree}
Florian Weigend.
\newblock Hartree--fock exchange fitting basis sets for h to rn.
\newblock \emph{Journal of computational chemistry}, 29\penalty0 (2):\penalty0
  167--175, 2008.

\bibitem[Christensen et~al.(2019{\natexlab{b}})Christensen, Faber, and von
  Lilienfeld]{Christensen2019}
Anders~S. Christensen, Felix~A. Faber, and O.~Anatole von Lilienfeld.
\newblock Operators in quantum machine learning: Response properties in
  chemical space.
\newblock \emph{J. Chem. Phys.}, 150\penalty0 (6):\penalty0 064105,
  2019{\natexlab{b}}.
\newblock \doi{10.1063/1.5053562}.

\bibitem[Huang and von Lilienfeld(2017)]{huang2017efficient}
Bing Huang and O~Anatole von Lilienfeld.
\newblock Efficient accurate scalable and transferable quantum machine learning
  with am-ons.
\newblock \emph{arXiv preprint arXiv:1707.04146}, 2017.

\bibitem[Helgaker et~al.(2014)Helgaker, Jorgensen, and
  Olsen]{helgaker2014molecular}
Trygve Helgaker, Poul Jorgensen, and Jeppe Olsen.
\newblock \emph{Molecular electronic-structure theory}.
\newblock John Wiley \& Sons, 2014.

\bibitem[Blanco et~al.(1997)Blanco, Fl{\'o}rez, and
  Bermejo]{blanco1997evaluation}
Miguel~A Blanco, Manuel Fl{\'o}rez, and Margarita Bermejo.
\newblock Evaluation of the rotation matrices in the basis of real spherical
  harmonics.
\newblock \emph{Journal of Molecular Structure: THEOCHEM}, 419\penalty0
  (1-3):\penalty0 19--27, 1997.

\bibitem[Molev(2002)]{molev2002gelfand}
Alexander~I Molev.
\newblock Gelfand-tsetlin bases for classical lie algebras.
\newblock \emph{arXiv preprint math/0211289}, 2002.

\bibitem[Hall(2013)]{hall2013quantum}
Brian~C Hall.
\newblock \emph{Quantum theory for mathematicians}, volume 267.
\newblock Springer, 2013.

\bibitem[Klink and Ton-That(1996)]{klink1996multiplicity}
WH~Klink and T~Ton-That.
\newblock Multiplicity, invariants, and tensor product decompositions of
  compact groups.
\newblock \emph{Journal of Mathematical Physics}, 37\penalty0 (12):\penalty0
  6468--6485, 1996.

\bibitem[Fulton and Harris(2013)]{fulton2013representation}
William Fulton and Joe Harris.
\newblock \emph{Representation theory: a first course}, volume 129.
\newblock Springer Science \& Business Media, 2013.

\bibitem[Alex et~al.(2011)Alex, Kalus, Huckleberry, and von
  Delft]{alex2011numerical}
Arne Alex, Matthias Kalus, Alan Huckleberry, and Jan von Delft.
\newblock A numerical algorithm for the explicit calculation of su (n) and sl
  (n, c) clebsch--gordan coefficients.
\newblock \emph{Journal of Mathematical Physics}, 52\penalty0 (2):\penalty0
  023507, 2011.

\bibitem[Gliske et~al.(2007)Gliske, Klink, and Ton-That]{gliske2007algorithms}
S~Gliske, W~Klink, and T~Ton-That.
\newblock Algorithms for computing u (n) clebsch gordan coefficients.
\newblock \emph{Acta Applicandae Mathematicae}, 95\penalty0 (1):\penalty0
  51--72, 2007.

\bibitem[Lang and Weiler(2020)]{lang2020wigner}
Leon Lang and Maurice Weiler.
\newblock A wigner-eckart theorem for group equivariant convolution kernels.
\newblock \emph{arXiv preprint arXiv:2010.10952}, 2020.

\bibitem[Izmailov et~al.(2018)Izmailov, Podoprikhin, Garipov, Vetrov, and
  Wilson]{izmailov2018averaging}
Pavel Izmailov, Dmitrii Podoprikhin, Timur Garipov, Dmitry Vetrov, and
  Andrew~Gordon Wilson.
\newblock Averaging weights leads to wider optima and better generalization.
\newblock \emph{arXiv preprint arXiv:1803.05407}, 2018.

\bibitem[Chmiela et~al.(2018)Chmiela, Sauceda, M{\"u}ller, and
  Tkatchenko]{sGDML}
Stefan Chmiela, Huziel~E Sauceda, Klaus-Robert M{\"u}ller, and Alexandre
  Tkatchenko.
\newblock Towards exact molecular dynamics simulations with machine-learned
  force fields.
\newblock \emph{Nature communications}, 9\penalty0 (1):\penalty0 1--10, 2018.

\bibitem[Allen and Rizzo(2014)]{allen2014implementation}
William~J Allen and Robert~C Rizzo.
\newblock Implementation of the hungarian algorithm to account for ligand
  symmetry and similarity in structure-based design.
\newblock \emph{Journal of chemical information and modeling}, 54\penalty0
  (2):\penalty0 518--529, 2014.

\bibitem[xtb(2020, accessed July 14, 2020)]{xtbcode}
\emph{Semiempirical Extended Tight-Binding Program Package}, 2020, accessed
  July 14, 2020.
\newblock \url{https://github.com/grimme-lab/xtb}.

\bibitem[Gao et~al.(2020)Gao, Ramezanghorbani, Isayev, Smith, and
  Roitberg]{gao2020torchani}
Xiang Gao, Farhad Ramezanghorbani, Olexandr Isayev, Justin~S Smith, and
  Adrian~E Roitberg.
\newblock Torchani: A free and open source pytorch-based deep learning
  implementation of the ani neural network potentials.
\newblock \emph{Journal of chemical information and modeling}, 60\penalty0
  (7):\penalty0 3408--3415, 2020.

\bibitem[Wang and Song(2016)]{tric}
Lee-Ping Wang and Chenchen Song.
\newblock Geometry optimization made simple with translation and rotation
  coordinates.
\newblock \emph{The Journal of chemical physics}, 144\penalty0 (21):\penalty0
  214108, 2016.

\bibitem[Ramachandran et~al.(2017)Ramachandran, Zoph, and
  Le]{ramachandran2017searching}
Prajit Ramachandran, Barret Zoph, and Quoc~V Le.
\newblock Searching for activation functions.
\newblock \emph{arXiv preprint arXiv:1710.05941}, 2017.

\bibitem[Kingma and Ba(2014)]{kingma2014adam}
Diederik~P Kingma and Jimmy Ba.
\newblock Adam: A method for stochastic optimization.
\newblock \emph{arXiv preprint arXiv:1412.6980}, 2014.

\bibitem[Girshick(2015)]{girshick2015fast}
Ross Girshick.
\newblock Fast r-cnn.
\newblock In \emph{Proceedings of the IEEE international conference on computer
  vision}, pages 1440--1448, 2015.

\bibitem[Loshchilov and Hutter(2016)]{loshchilov2016sgdr}
Ilya Loshchilov and Frank Hutter.
\newblock Sgdr: Stochastic gradient descent with warm restarts.
\newblock \emph{arXiv preprint arXiv:1608.03983}, 2016.

\bibitem[Grisafi et~al.(2019)Grisafi, Fabrizio, Meyer, Wilkins, Corminboeuf,
  and Ceriotti]{Grisafi2018}
Andrea Grisafi, Alberto Fabrizio, Benjamin Meyer, David~M. Wilkins, Clemence
  Corminboeuf, and Michele Ceriotti.
\newblock Transferable machine-learning model of the electron density.
\newblock \emph{ACS Cent. Sci.}, 5\penalty0 (1):\penalty0 57--64, 2019.

\end{thebibliography}

\end{document}